\documentclass[11pt]{article}

\usepackage[T1]{fontenc}
\usepackage[utf8]{inputenc}
\usepackage{amsmath}
\usepackage[charter,cal=cmcal]{mathdesign}
\usepackage{iftex}
\ifPDFTeX
  \usepackage[activate={true,nocompatibility},final]{microtype}
\else
  \usepackage[protrusion=true,final]{microtype}
\fi

\usepackage[margin=1in]{geometry}

\usepackage{authblk}

\usepackage{mathtools}
\usepackage{bm}
\usepackage{nicefrac}
\allowdisplaybreaks

\usepackage{amsthm}
\usepackage{graphicx}
\graphicspath{{figures/}{./}}
\usepackage{booktabs}
\usepackage{multirow,makecell,array}
\usepackage[font=small,labelfont=bf]{caption}
\usepackage{subcaption}
\usepackage{algorithm}
\usepackage{algpseudocode}

\usepackage{enumitem}
\setlist[itemize]{leftmargin=2.2em,itemsep=2pt,topsep=2pt}
\setlist[enumerate]{leftmargin=2.2em,itemsep=2pt,topsep=2pt}

\usepackage{xcolor}
\definecolor{LinkColor}{rgb}{0.10,0.40,0.75}
\definecolor{CiteColor}{rgb}{0.70,0.25,0.20}
\definecolor{UrlColor} {rgb}{0.20,0.50,0.50}
\definecolor{TodoColor}{rgb}{0.80,0.30,0.10}

\usepackage{tikz}
\usetikzlibrary{positioning,calc,arrows.meta}

\usepackage[round,sort&compress]{natbib}

\usepackage{url}
\usepackage{hyperref}
\hypersetup{
  colorlinks=true,
  linkcolor=LinkColor,
  citecolor=CiteColor,
  urlcolor=UrlColor,
  breaklinks=true,
  bookmarksnumbered=true,
}
\usepackage{bookmark}
\usepackage[capitalise,nameinlink,noabbrev,sort&compress]{cleveref}

\numberwithin{equation}{section}

\newcommand{\R}{\mathbb{R}}

\newcommand{\E}{\mathbb{E}}
 
\renewcommand{\P}{\mathbb{P}}

\newcommand{\eps}{\varepsilon}

\newcommand{\defeq}{\coloneqq}

\DeclareMathOperator{\sign}{sign}

\DeclareMathOperator{\supp}{supp}

\DeclareMathOperator{\poly}{poly}

\DeclarePairedDelimiter{\abs}{\lvert}{\rvert}
\DeclarePairedDelimiter{\norm}{\lVert}{\rVert}

\DeclarePairedDelimiter{\ceil}{\lceil}{\rceil}
\DeclarePairedDelimiter{\floor}{\lfloor}{\rfloor}
\DeclarePairedDelimiterX{\inner}[2]{\langle}{\rangle}{#1,#2}

\newcommand{\Dx}{\mathcal{D}_X}
\newcommand{\Dcal}{\mathcal{D}}

\newcommand{\Sclean}{S_C}
\newcommand{\Sdirty}{S_D}
\newcommand{\linsumnorm}{\mathrm{LSN}}

\newcommand{\pancake}{\mathcal{P}}
\newcommand{\err}{\mathrm{err}}
\newcommand{\TV}{\mathrm{TV}}
\providecommand{\eps}{\varepsilon}
\providecommand{\sgn}{\operatorname{sgn}}

\providecommand{\poly}{\operatorname{poly}}
\newcommand{\Esudo}{\tilde{\mathbb{E}}}
\DeclareMathOperator{\hinge}{hinge}
\providecommand{\D}{\mathcal{D}}

\theoremstyle{plain}
\newtheorem{theorem}{Theorem}[section]
\newtheorem{proposition}[theorem]{Proposition}
\newtheorem{lemma}[theorem]{Lemma}
\newtheorem{corollary}[theorem]{Corollary}

\newtheorem{fact}[theorem]{Fact}

\theoremstyle{definition}
\newtheorem{definition}[theorem]{Definition}

\newtheorem{example}[theorem]{Example}

\theoremstyle{remark}
\newtheorem{remark}[theorem]{Remark}

\crefname{theorem}{Theorem}{Theorems}          \Crefname{theorem}{Theorem}{Theorems}
\crefname{proposition}{Proposition}{Propositions}
\Crefname{proposition}{Proposition}{Propositions}
\crefname{lemma}{Lemma}{Lemmas}                \Crefname{lemma}{Lemma}{Lemmas}
\crefname{corollary}{Corollary}{Corollaries}   \Crefname{corollary}{Corollary}{Corollaries}
\crefname{conjecture}{Conjecture}{Conjectures} \Crefname{conjecture}{Conjecture}{Conjectures}
\crefname{fact}{Fact}{Facts}                   \Crefname{fact}{Fact}{Facts}
\crefname{definition}{Definition}{Definitions} \Crefname{definition}{Definition}{Definitions}
\crefname{assumption}{Assumption}{Assumptions} \Crefname{assumption}{Assumption}{Assumptions}
\crefname{example}{Example}{Examples}          \Crefname{example}{Example}{Examples}
\crefname{problem}{Problem}{Problems}          \Crefname{problem}{Problem}{Problems}
\crefname{remark}{Remark}{Remarks}             \Crefname{remark}{Remark}{Remarks}
\crefname{claim}{Claim}{Claims}                \Crefname{claim}{Claim}{Claims}
\crefname{algorithm}{Algorithm}{Algorithms}    \Crefname{algorithm}{Algorithm}{Algorithms}

\usepackage{thm-restate}

\title{\vspace{-3em}Sum-of-Squares Degree Barriers for the\\Reweighted-Hinge Method in Robust Halfspace Learning:\\[0.2em]A Christoffel-Function Characterization}

\author{Xiaoyu Li}
\affil{University of New South Wales}
\affil{\texttt{xiaoyu.li2@unsw.edu.au}}
\date{\vspace{-1.3em}\today}

\begin{document}
\maketitle
\thispagestyle{empty}
\vspace{-2em}
\enlargethispage{2.4cm}

\begin{abstract}
A certificate that removes outliers sees the data only through its low-degree moments, and an adversary exploits exactly this, hiding corruption where the clean data already looks typical, in the blind spot no bounded-degree test resolves. That blind spot turns out to have an exact size: the Christoffel function of the clean marginal, the very quantity modern data analysis thresholds to \emph{detect} outliers, here read from the adversary's side as the corruption a bounded-degree certificate cannot \emph{remove}. We turn this inversion into the organizing principle of the reweighted-hinge approach to robustly learning $\gamma$-margin halfspaces under malicious noise \citep{Shen25,ZengShen25}: the governing resource is the Sum-of-Squares \emph{degree} of the outlier-removal certificate, and the \emph{resolution principle} states that the maximal corruption mass which can hide at a center $c$ from a degree-$2t$ certificate is exactly the Christoffel function $\lambda_{t+1}(c)$ of the clean marginal. Three consequences follow, all against the certificate method (not information-theoretic). A \emph{margin--degree tradeoff}: certifying the dense pancake to error $\eps$ costs SoS degree $\Omega(\log(1/\eps))$ or margin $\Omega(\sqrt{\log(1/\eps)}/\sqrt d)$, explaining why the $\log(1/\eps)$ margin \citet{Shen25} records is \emph{forced}, with a weighted-Chebyshev reduction making the threshold $2t=\Theta((\abs{c}/s)^2)$ tight modulo one classical weighted-extremal estimate. A \emph{degree-$2$ outlier barrier}: the resolution principle realized as an explicit instance on which degree $2$ is stuck at $\eta^{1/2}$ while degree $4$ escapes, locating the method's small breakdown rate in the degree, not the analysis. And a \emph{degree-$2t$ algorithm} tracing the frontier $\eta^{1-1/2t}$ (recovering \citet{Shen25} at $t=1$), whose gain is an explicit constant, capped by the pancake density and shown unimprovable by the degree-$2$ barrier. An information-theoretic floor of $\eta/(2(1-\eta))$ completes the picture: it is matched exactly from above, and under a hard margin its two-point realizations provably require $\Theta(1/\eta)$ mixture components.

\end{abstract}

\begin{center}
\resizebox{0.86\textwidth}{!}{%
\begin{tikzpicture}[>=Stealth,line cap=round]
  \def\pw{6.0}
  \def\ph{3.6}
  \def\gap{1.7}

  \begin{scope}[shift={(0,0)}]
    \draw[->,black] (0,0) -- (\pw+0.15,0) node[right,font=\scriptsize] {$c$};
    \draw[->,black] (0,0) -- (0,\ph+0.25) node[left,font=\scriptsize] {$\lambda_{t+1}(c)$};
    \draw[black,thick]
       (0.15,3.25)
       .. controls (1.1,3.15) and (1.8,2.5) ..
       (2.6,1.65)
       .. controls (3.5,0.92) and (4.4,0.6) ..
       (5.7,0.45);
    \draw[black!55,densely dotted] (1.0,0) -- (1.0,2.98);
    \fill[black] (1.0,2.98) circle (1.6pt);
    \node[font=\scriptsize,anchor=west,fill=white,inner sep=1pt] at (2.05,2.98) {large $\lambda$ here};
    \draw[->,black] (2.0,2.98) -- (1.12,2.98);
    \node[font=\scriptsize,align=center,anchor=center] at (3.7,2.05)
         {\textbf{keep the point}\\(it is typical)};
    \fill[black] (5.0,0.555) circle (1.6pt);
    \node[font=\scriptsize,align=center,anchor=south] at (5.2,0.7) {small $\lambda$: outlier};
    \node[font=\scriptsize,align=center,anchor=north] at (\pw/2,-0.55)
         {\textbf{data analysis}\\large $\lambda$ $=$ typical (keep the point)};
  \end{scope}

  \draw[black!35] (\pw+\gap/2,-1.15) -- (\pw+\gap/2,\ph+0.05);
  \node[black!60,font=\scriptsize,align=center,anchor=south] at (\pw+\gap/2,\ph+0.45)
        {\emph{same $\lambda_{t+1}$,}\\\emph{opposite reading}};

  \begin{scope}[shift={(\pw+\gap,0)}]
    \draw[->,black] (0,0) -- (\pw+0.15,0) node[right,font=\scriptsize] {$c$};
    \draw[->,black] (0,0) -- (0,\ph+0.25) node[right,font=\scriptsize] {$\lambda_{t+1}(c)$};
    \fill[black!16] (0.15,0) -- (0.15,3.25)
       .. controls (1.1,3.15) and (1.8,2.5) .. (2.0,2.18) -- (2.0,0) -- cycle;
    \draw[black,thick]
       (0.15,3.25)
       .. controls (1.1,3.15) and (1.8,2.5) ..
       (2.6,1.65)
       .. controls (3.5,0.92) and (4.4,0.6) ..
       (5.7,0.45);
    \draw[black!55,densely dotted] (1.0,0) -- (1.0,2.98);
    \fill[black] (1.0,2.98) circle (1.6pt);
    \node[font=\scriptsize,anchor=west,fill=white,inner sep=1pt] at (2.05,2.98) {large $\lambda$ here};
    \draw[->,black] (2.0,2.98) -- (1.12,2.98);
    \node[font=\scriptsize,align=center,anchor=center] at (3.8,2.05)
         {\textbf{adversary hides}\\(impossible to clear)};
    \fill[black] (5.0,0.555) circle (1.6pt);
    \node[font=\scriptsize,align=center,anchor=south] at (5.35,0.7) {small $\lambda$: certifiable};
    \node[font=\scriptsize,align=center,anchor=north] at (\pw/2,-0.55)
         {\textbf{this paper}\\large $\lambda$ $=$ the adversary can hide here};
  \end{scope}

\end{tikzpicture}%
}
\captionsetup{font=footnotesize}%
\captionof{figure}{\textbf{The inversion.}
The same Christoffel curve $\lambda_{t+1}(c)$ carries two opposite meanings.
In moment-based data analysis a large value marks a typical point to keep; here a large
value is exactly the corruption mass an adversary can hide at $c$ that no degree-$2t$
certificate can clear. One object, read as a detector on the left and as a barrier on the right.}
\label{fig:inversion}
\end{center}

\newpage
{\renewcommand{\addvspace}[1]{\vspace{0.4em}}\linespread{1.0}\selectfont
\tableofcontents}
\newpage

\section{Introduction}
\label{sec:intro}

The problem is to learn a halfspace efficiently when a constant fraction of the training data has been corrupted by an adversary. In the malicious-noise model of \citet{KearnsLi93}, an adversary observes the clean sample, then replaces an $\eta$-fraction of the (instance, label) pairs by arbitrary points of its choosing, with full knowledge of the learner and the clean distribution; the goal is to output $\hat w$ whose error on the \emph{clean} inliers is at most $\eps$. This is the strongest of the standard corruption models, and the central question is how large a noise rate $\eta$ a polynomial-time learner can tolerate.

One influential line of work answers this through a single algorithmic device: hinge-loss minimization, later reweighted to suppress outliers. \citet{Talwar20} observed that plain hinge-loss minimization already resists noise when the clean marginal places dense mass in a band (a ``pancake'') around every point's projection onto the separating direction. \citet{Shen25} turned this into a constant-noise-rate guarantee for log-concave-mixture marginals by \emph{reweighting} the hinge: the weights come from a degree-$2$ outlier-removal program (a variance constraint), and the analysis controls the dirty contribution to the hinge gradient through the quantity $\linsumnorm(q \circ \Sdirty) = \sup_{\norm{w}\le 1} \sum_{i \in \Sdirty} q_i \abs{w \cdot x_i}$. The resulting algorithm tolerates a constant noise rate, but the constant comes with two limitations that the authors state plainly. The tolerable rate is tiny ($\eta_0 = 2^{-32}$ in the published version), and \citet{Shen25} notes it ``cannot be made close to the optimal breakdown point of $\tfrac12$ due to inherent limitations of our approach''; and the required margin carries an extra logarithmic factor, $\gamma = \Omega(\log(1/\eps)/\sqrt d)$, a ``logarithmic dependence on $1/\eps$ that appears less natural'' than the $\Omega(1/\sqrt d)$ margin one would hope for. The same two features recur in the sparse follow-up of \citet{ZengShen25}.

The margin's dependence on $\eps$ is a chicken-and-egg problem. A margin is a fixed property of the data distribution, whereas the accuracy $\eps$ is the learner's target, set at will once the problem is in hand; requiring $\gamma = \Omega(\log(1/\eps)/\sqrt d)$ couples the two backwards, demanding \emph{better-separated data} exactly when the learner asks for smaller error, a property the data cannot supply on request. One would rather fix the geometry once and buy accuracy with computation, not with margin.

This paper explains both limitations. We argue that they are not artifacts of a particular constant or a loose inequality, but consequences of a single resource that governs the whole approach: the polynomial, equivalently Sum-of-Squares (SoS), \emph{degree} of the outlier-removal certificate. The reweighted-hinge method needs two distributional facts about the clean marginal, and both are supplied by certificates that read off only the low-order moments: a \emph{lower} bound on the mass of the dense pancake (so the hinge has signal), and an \emph{upper} bound on a reweighted moment of the dirty points (so the outlier program can shrink $\linsumnorm$). Each is a degree-$2t$ SoS proof from the first $2t$ moments. We show that the degree these proofs require is exactly what controls the margin factor and the noise rate, and that \citeauthor{Shen25}'s degree-$2$ program sits at a corner of a tight frontier.

We frame the contribution, throughout, in terms of degree: degree is to this method what the approximate degree of a predicate is to the polynomial method in algorithm design -- the parameter that simultaneously buys the guarantee and bounds it. Our two lower bounds are degree barriers; our algorithm is a degree-$2t$ program that traces the frontier the barriers cut out. As with any barrier of this kind, the scope is essential and we state it up front: the lower bounds are against the \emph{moment/SoS-certificate method} -- any procedure whose only distributional input is the degree-$\le 2t$ certified moments of the clean marginal. They are not information-theoretic, and they are not unconditional Statistical-Query hardness. The margin--degree barrier strengthens, per direction, to a genuine degree-$2t$ Sum-of-Squares lower bound (\cref{rem:soslb}); lifting it to the full $d$-dimensional reweighting program jointly is open for $t\ge2$ (\cref{sec:open}). This is precisely the class of methods to which \citeauthor{Shen25}'s ``our approach'' belongs, which is why the barriers explain that line's admitted limits rather than contradicting them.

\paragraph{The margin--degree tradeoff.}
Our first result (\cref{thm:tradeoff}) is a barrier on certifying band mass. Fix a direction $w$ and let $U = w \cdot x$ be the (centered, log-concave) projection with standard deviation $s$. Consider a band of half-width $\tau = \Theta(s)$ centered at $c$, a distance $\abs{c}$ off the origin. Any degree-$2t$ SoS certificate that proves $\Pr[\,\abs{U - c} \le \tau\,] \ge \rho = \Omega(1)$ from the first $2t$ moments of $U$ requires
\[
  2t = \Omega\big((\abs{c}/s)^2\big) \quad\text{(sub-Gaussian)}, \qquad
  2t = \Omega\big(\abs{c}/s\big) \quad\text{(sub-exponential)}.
\]
A certificate learner that reaches error $\eps$ must certify the pancake around all but an $\eps$-fraction of the sample projections, whose worst case is an $\eps$-quantile of the marginal, an off-center band at $\abs{c}/s = \Theta(\log(1/\eps))$ for the sub-exponential tails of a general log-concave law. Substituting gives the dichotomy: such a learner must pay \emph{either} SoS degree $t = \Omega(\log(1/\eps))$, hence running time $n^{\Omega(\log(1/\eps))}$, \emph{or} margin $\gamma = \Omega(\sqrt{\log(1/\eps)}/\sqrt d)$. The $\log(1/\eps)$ in \citeauthor{Shen25}'s margin is therefore \emph{conserved}: it can be traded against SoS degree, but inside the one-shot certificate model it cannot be removed. This is the structural reason the margin is ``less natural.''

The same calculation explains a finer point: why \citeauthor{Shen25} pays $\log(1/\eps)$ rather than $\sqrt{\log(1/\eps)}$. The square root appears only for genuinely sub-Gaussian marginals; for the sub-exponential tails ($\alpha = 1$) of a general log-concave mixture, the worst band sits at $\abs{c}/s = \Theta(\log(1/\eps))$ and the degree bound is linear in $\abs{c}/s$, so the margin pays the full logarithm.

\paragraph{The degree-$2$ outlier barrier.}
Our second result (\cref{thm:degbarrier}) is a barrier on the outlier-removal program itself, and it pins \citeauthor{Shen25}'s degree-$2$ choice as the bottleneck. There is a clean log-concave mixture ($\Sigma \preceq \sigma^2 I$) and a malicious set of rate $\eta \ge \Omega(1/d)$ on which \emph{every label-oblivious degree-$2$ (variance) reweighting} (any feasible $q$ produced without ground-truth labels) leaves, in expectation over the adversary's choice of dirty points,
\[
  \E\big[\linsumnorm(q \circ \Sdirty)\big] = \Omega\big(\eta^{1/2}\, n\, \bar\sigma\big).
\]
So a degree-$2$ outlier program is stuck at the $\eta^{1/2}$ rate, exactly the rate of \citet[Lemma~6]{Shen25}. The barrier is label-oblivious in expectation by necessity: the literal ``for all feasible $q$'' statement is false once the removal budget $\xi \ge \eta$, since the feasible polytope then contains a $q$ that simply zeroes out the spike; what no label-oblivious program can do is \emph{find} that $q$, because it cannot tell the dirty points from genuine clean points at the same locations. The construction needs $\eta \ge \Omega(1/d)$ so that two genuine log-concave clusters fit inside the covariance budget; we state this regime in the theorem.

\paragraph{The degree-$2t$ algorithm.}
Our third result (\cref{thm:algorithm}) shows the frontier the two barriers cut out is achievable, by replacing \citeauthor{Shen25}'s variance program with a degree-$2t$ SoS program. The degree-$2t$ reweighting certifies, for every feasible $q$,
\[
  \linsumnorm(q \circ \Sdirty) \;\le\; n\,(Ct)\,\bar\sigma\, \eta^{1 - \frac{1}{2t}},
\]
hence tolerates
\[
  \eta \;\le\; \eta_0(t) \;=\; \Omega\left(\min\left\{\frac{\rho}{4},\;
  \left(\frac{\gamma\rho}{(Ct)\bar\sigma}\right)^{\!\frac{2t}{2t-1}}\right\}\right)
\]
with clean inlier error $\le \eps$, in time $n^{O(t)}$. At $t = 1$ this is exactly \citeauthor{Shen25}'s $\eta^{1/2}$ rate; as $t$ grows, the exponent $1 - 1/2t \to 1$ removes the degree-$2$ \emph{exponent} loss. The gain is that removal of an exponent ($\eta_0 \sim c^2 \to c$ in the outlier bottleneck), bought with the sub-exponential certifiable-moment constant $(Ct)^{2t}$, not $(Ct)^t$, which is false for general log-concave marginals (for the Laplace law $m_{2t}/m_2^{t} = (\Theta(t))^{2t}$). The base is $(Ct)$, not its square root; the rate is capped by the pancake density $\rho/4$ and does \emph{not} approach $1/2$. So \citeauthor{Shen25}'s degree-$2$ program is the degree-$2$ corner of a tight frontier: \cref{thm:degbarrier} shows degree $2$ cannot beat $\eta^{1/2}$, and \cref{thm:algorithm} shows degree $2t$ reaches $\eta^{1-1/2t}$ and no degree does better against the matching barrier.

\paragraph{The breakdown floor.}
Finally (\cref{prop:floor}), a two-point argument shows the noise tolerance has a hard ceiling that no algorithm of any kind can cross: in the per-example malicious model, on a $\gamma$-margin log-concave mixture, no learner attains clean inlier error below the exact constant $\eta/(2(1-\eta)) \ge \eta/2$. The instance is two clusters, an anchor and a wedge, that share one marginal and differ only in the labels the wedge carries, so the disagreement mass is supplied by the mixture \emph{weights} rather than by geometry. It uses $K = \Theta(1/\eta)$ components, and under a hard margin that blow-up is forced rather than convenient: quantization makes every $\eta$-indistinguishable pair obey $K \ge (1-\eta)/\eta = \Omega(1/\eta)$. The constant is matched from above (\citet{Blanc26} attains $\tfrac12\cdot\eta/(1-\eta)$ distribution-free with randomized hypotheses), so what the floor contributes is that the constant \emph{survives} the restriction to $\gamma$-margin log-concave mixtures; its value depends on no margin, and it matches the $\Theta(\eta)$ dependence of \cref{thm:algorithm} in rate (the floor instances necessarily violate the dense-pancake hypothesis, so the two results bracket the achievable region from disjoint sides; see \cref{sec:related}). It is a \emph{rate} floor, not a threshold floor: the Kearns--Li envelope $\TV \le \eta/(1-\eta)$ caps every two-point pair, so this route cannot reach the tight, margin-dependent breakdown rate $\eta^\star = \Theta(\gamma/\sigma)$, which remains open.

\paragraph{Contributions.}
\begin{itemize}
\item \textbf{A Christoffel-function framework} (\cref{sec:perspective}), which we put first because it organizes everything else. The maximal corruption mass that can hide at a center $c$ from a degree-$2t$ moment certificate is exactly the Christoffel function $\lambda_{t+1}(c)$ of the clean marginal (\emph{resolution principle}, \cref{prop:resolution}). This ties the SoS \emph{degree} to the order at which the Christoffel function resolves a corruption. The same function modern data analysis uses to \emph{keep} a typical point (a large Christoffel value reads as ``in the bulk'') is, on the population, exactly the mass an adversary can \emph{hide}: one object, opposite sign, detection turning into impossibility (\cref{sec:perspective-bridge}). The barriers and the algorithm below are three readings of it.
\item \textbf{A margin--degree tradeoff} (\cref{thm:tradeoff}): reading the resolution principle off an off-center band, certifying a dense band at $\abs{c}/s$ standard deviations off center costs SoS degree $2t = \Omega((\abs{c}/s)^2)$ (sub-Gaussian; $\Omega(\abs{c}/s)$ sub-exponential), so a certificate learner of error $\eps$ pays degree $\Omega(\log(1/\eps))$ or margin $\Omega(\sqrt{\log(1/\eps)}/\sqrt d)$. The $\log(1/\eps)$ in \citeauthor{Shen25}'s margin is conserved within the certificate model, and the weighted-Chebyshev reduction (\cref{prop:reduction}) closes the bound to an exact law $2t=\Theta((\abs{c}/s)^2)$ (\cref{thm:tight}) modulo one classical weighted-extremal estimate.
\item \textbf{A degree-$2$ outlier barrier} (\cref{thm:degbarrier}): the resolution principle at $c=\sigma/\sqrt\eta$ predicts a degree-$2$/degree-$4$ gap, which we realize as a clean log-concave mixture on which every label-oblivious degree-$2$ reweighting is stuck at $\eta^{1/2}$, matching the algorithm at $t=1$.
\item \textbf{A degree-$2t$ reweighted-hinge algorithm} (\cref{thm:algorithm}): the order-$(t{+}1)$ Christoffel detector run as a certificate, giving $\linsumnorm(q \circ \Sdirty) \le n(Ct)\bar\sigma\,\eta^{1-1/2t}$ in time $n^{O(t)}$, removing the degree-$2$ exponent loss and tracing the frontier the barriers cut out. The gain is an explicit constant capped by $\rho/4$, not a march to $1/2$.
\item \textbf{A breakdown floor} (\cref{prop:floor}): an exact $\eta/(2(1-\eta)) \ge \eta/2$ ceiling on the achievable inlier error, tight against \citet{Blanc26} and matching the algorithm's $\Theta(\eta)$ dependence in rate.
\end{itemize}
Together these say that for the reweighted-hinge approach, \emph{degree is the resource} and the Christoffel function is the object that sets it: \cref{prop:resolution} names it, \cref{thm:degbarrier} and \cref{thm:algorithm} bracket the noise rate as a tight function of degree, \cref{thm:tradeoff} bounds the margin as a function of degree, and the two admitted limitations of \citet{Shen25} are the $t=1$ shadow of one Christoffel-degree frontier.

\begin{table}[t]
  \centering
  \caption{Where the reweighted-hinge line stands, and what this paper adds. ``Cert.\
degree'' is the SoS degree of the outlier-removal certificate; ``Tolerated rate'' is the malicious noise rate the analysis tolerates. For \textbf{This paper}, the tolerated rate is the explicit constant $\eta_0(t)$ of \cref{thm:algorithm}: the degree-$2t$ algorithm succeeds for $\eta\le\eta_0(t)$, which \emph{rises with the degree} from \citeauthor{Shen25}'s tiny $\sim(\gamma\rho/\bar\sigma)^2$ (the degree-$2$ squaring loss) toward the pancake budget $\rho/4$, never approaching $1/2$; the underlying dirty-gradient bound decays as $\eta^{1-1/2t}$ (\citeauthor{Shen25}'s $\eta^{1/2}$ at $t=1$, tending to $\eta$ as $t$ grows). The \citet{Talwar20} margin is listed as $\Omega(\log(1/\eps)/\sqrt d)$ for uniformity with the column; that paper writes it $\Omega(\log d/\sqrt d)$, which is the same bound at inverse-polynomial accuracy $\eps=1/\poly(d)$, where $\log(1/\eps)=\Theta(\log d)$. Our guarantees hold for both malicious and \emph{nasty} noise, with the same frontier and only a constant-factor smaller tolerable rate under nasty noise (\cref{app:nasty}).}
  \label{tab:frontier}
\footnotesize
  \setlength{\tabcolsep}{4pt}
  \renewcommand{\arraystretch}{1.5}
  \resizebox{\textwidth}{!}{%
  \begin{tabular}{l l l c l c l}
    \toprule
    Work & Noise model & Marginal & Margin $\gamma$ & Tolerated rate & Cert.\ degree & Lower bounds \\
    \midrule
    \citet{Talwar20}
      & \makecell[l]{adversarial label,\\malicious} & log-concave mix.\ & $\Omega\left(\dfrac{\log(1/\eps)}{\sqrt d}\right)$
      & label $\Omega(1)$, mal.\ $\Omega(\gamma)$ & $2$ & none \\
    \citet{Shen25}
      & malicious & log-concave mix.\ & $\Omega\left(\dfrac{\log(1/\eps)}{\sqrt d}\right)$
      & $\eta\le \dfrac{1}{2^{32}}$ & $2$ & none \\
    \citet{ZengShen25}
      & \makecell[l]{malicious,\\sparse} & log-concave mix.\ & $\Omega\left(\dfrac{\log(1/\eps)}{\sqrt d}\right)$
      & $\eta\le \dfrac{1}{2^{32}}$ & $2$ & none \\
    \midrule
    \multirow{2}{*}{\textbf{This paper}}
      & \multirow{2}{*}{\makecell[l]{malicious,\\nasty}} & \multirow{2}{*}{log-concave mix.\ }
      & $\Theta\left(\dfrac{1}{\sqrt d}\right)$
      & $\eta\le\eta_0(t)$
      & $2t$
      & \textbf{2 barriers}~(Thm~\ref{thm:tradeoff},\,\ref{thm:degbarrier}) \\
      & & & \emph{(tradeoff)} & \emph{(at most $\rho/4$ as $t$ grows)} & \emph{(tunable)}
      & tight~(Thm~\ref{thm:tight}); floor~(Prop~\ref{prop:floor}) \\
    \bottomrule
  \end{tabular}%
}
\end{table}

\Cref{tab:frontier} locates the contribution. We are the first in this line to treat the certificate \emph{degree} as the resource, and the first to prove \emph{lower bounds} (barriers) for it; the upper-bound noise gain is deliberately modest and is matched by our own degree-$2$ barrier.

\subsection{Technical overview}
\label{sec:overview}
Three ideas drive the results; we sketch each before the formal proofs in \cref{sec:perspective,sec:results} and the appendix. All three are facets of the same Christoffel-function picture (\cref{sec:perspective}): Gauss quadrature locates where the Christoffel function resolves a band, the moment cloak realizes the smallest unresolved spike, and the degree-$2t$ H\"older bound is that detector run as a certificate.

\paragraph{(a) Gauss quadrature beats the moment problem.}
A degree-$2t$ certificate of band mass is, after a minorant reduction, a polynomial $p$ of degree $\le 2t$ with $p \le \mathbf 1_B$ together with an SoS proof, from the moments alone, that $\int p \ge \rho$. Because the proof uses only $m_0, \dots, m_{2t}$, its conclusion must hold for \emph{every} nonnegative measure matching those moments. Gauss quadrature produces one such measure explicitly: the $(t+1)$-point measure on the roots of the degree-$(t+1)$ orthogonal polynomial of $\mu$ reproduces all moments up to order $2t+1$, and it is supported on just $t+1$ atoms. If the band $B$ contains \emph{no} quadrature node, this witness assigns it mass zero, so the best certifiable value is zero and no degree-$2t$ certificate can exist. A band stays node-free as long as it lies beyond the largest root, which sits at $\Theta(s\sqrt t)$ for sub-Gaussian weights and $\Theta(s\, t)$ for sub-exponential ones. An off-center band at distance $\abs{c}$ is thus invisible below degree $(\abs{c}/s)^2$, the source of the tradeoff. The largest-root bound is elementary in the two cases we name (Jacobi-matrix eigenvalues with Gershgorin for the sub-Gaussian $2s\sqrt{t+1}$ and sub-exponential $O(st)$ endpoints); the fully general log-concave exponent is the one place we invoke a black box, the exponential-weight asymptotics of \citet{LevinLubinsky01}, and we flag it.

\paragraph{(b) The moment-cloaked spike.}
The degree-$2$ barrier needs an instance that a variance program cannot clean but a higher-degree program can. The construction is a \emph{moment-cloaked spike}: two genuine log-concave clusters at $\pm(\sigma/\sqrt\eta)\,e_1$, to which the adversary adds $\eta n$ dirty points drawn from the same clusters. The dirty points are engineered to be invisible to the second moment (they spend exactly the variance budget, $\frac1n\sum_{\Sdirty}(e_1 \cdot x_i)^2 = \eta \cdot (\sigma/\sqrt\eta)^2 = \sigma^2$) while being first-moment-aligned, so they contribute $\sum_{\Sdirty}\abs{e_1 \cdot x_i} = \sqrt\eta\, n\sigma$ to $\linsumnorm$. Because the variance constraints see only the multiset of points, a label-oblivious program cannot distinguish the $\eta n$ dirty points from the $2\eta n$ genuine clean points at the same locations; removing its budget of $\le \xi n$ leaves at least a third of the spike weight, hence $\E[\linsumnorm] \ge \frac13 \sqrt\eta\, n\sigma$. A degree-$4$ program escapes, because the spike inflates the fourth moment to $\bar\sigma^4/\eta \gg \bar\sigma^4$ and is no longer cloaked, which is exactly why the barrier is specific to degree $2$.

\paragraph{(c) Degree-$2t$ H\"older on $\linsumnorm$.}
The algorithm is the dual move. Given a degree-$2t$ SoS certificate that the reweighted $2t$-th moment is bounded, $\frac1n\sum_i q_i (w \cdot x_i)^{2t} \le (Ct)^{2t} \bar\sigma^{2t}$ for all $w$, H\"older's inequality with exponent $2t$ splits $\linsumnorm$ as
\[
  \sum_{\Sdirty} q_i \abs{w \cdot x_i}
  \le \Big(\sum_{\Sdirty} q_i\Big)^{1-\frac{1}{2t}}
       \Big(\sum_{\Sdirty} q_i (w \cdot x_i)^{2t}\Big)^{\frac{1}{2t}}
  \le (\eta n)^{1-\frac1{2t}} \big(n (Ct)^{2t}\bar\sigma^{2t}\big)^{\frac1{2t}}
  = \eta^{1-\frac1{2t}} n (Ct) \bar\sigma .
\]
The even power $(w\cdot x)^{2t}$ is a polynomial, so the moment bound is SoS-certifiable; feasibility of the clean-indicator weights is the certifiable-hypercontractivity bound on the empirical clean measure. The reason the exponent stops at $1$ and never reaches the constant rate is the certifiable-moment constant: for general (sub-exponential) log-concave marginals it is $(Ct)^{2t}$, contributing the base $(Ct)$ to the final bound, so larger $t$ improves the $\eta$-exponent at a multiplicative cost in the constant, a genuine tradeoff, not a free lunch. This is the sense in which degree buys noise tolerance, and the reason the purchase is bounded.

\paragraph{Scope and open questions.}
Two cautions are built into the results above and we restate them together. The barriers are \emph{method-specific}: they rule out the degree-$2t$ moment/SoS-certificate family, not all efficient algorithms, and not learners outside the one-shot certificate model (an iterative, localization-style filter sidesteps one-shot certification and is not covered). And the algorithm's gain is \emph{modest}: it removes the degree-$2$ exponent loss but stays capped by $\rho/4$ and does not march toward $1/2$. Several questions are left open, in increasing order of reach. The breakdown floor is rate-tight but not threshold-tight; the conjectured margin-dependent rate $\eta^\star = \Theta(\gamma/\sigma)$ needs a multi-point or band-mass argument beyond two points. The degree barrier is a lower bound; a matching \emph{upper} certificate of degree $O((\abs{c}/s)^2)$ would make the tradeoff exactly $2t = \Theta((\abs{c}/s)^2)$. The general-log-concave largest-root bound still rests on \citet{LevinLubinsky01}; removing that black box would make the barrier self-contained. And whether an iterative localization scheme provably escapes the margin--degree barrier (by leaving the one-shot certificate model the barrier scopes) is the question whose answer would change the picture rather than refine it.

\subsection{Related work}
The hinge approach we analyze originates with \citet{Talwar20} and is sharpened by reweighting in \citet{Shen25}, then \citet{ZengShen25}. \citet{KlivansSTV25} reach error $2\eta + \eps$ for halfspaces under contamination, but for Gaussian marginals, without a margin, and by iterative filtering rather than a moment certificate, so they are a motivation for the certificate model and not subsumed by it. The SoS machinery we import is from robust statistics \citep{HopkinsLi18, KothariSteinhardtSteurer18, KlivansKothariMeka18, DiakonikolasHopkinsPensiaTiegel24}; we use it to certify a moment \emph{upper} bound and, for the barrier, to scope the certificate model precisely. The dual direction (certifiable \emph{anti}-concentration, an SoS lower bound on slab mass \citep{BakshiKothariRTV24}) is exactly what a degree-$2t$ band-mass certificate would need and exactly what our \cref{thm:tradeoff} shows the low-order moments cannot supply. Lower bounds for nasty and agnostic halfspaces \citep{DiakonikolasKaneStewart18, DiakonikolasKaneZarifis20} address different noise models and do not give a degree barrier for $\linsumnorm$. The orthogonal-polynomial facts we use are classical \citep{Szego39, Gautschi04, LevinLubinsky01, KriecherbauerMcLaughlin99}, as are the log-concave \citep{LovaszVempala07} and malicious-noise \citep{KearnsLi93} ingredients.

\section{Preliminaries}
\label{sec:prelim}

This section fixes the noise model, the distributional assumptions, the reweighted-hinge program whose dirty gradient we control, and the two algebraic objects that the rest of the paper turns on: the dense pancake and the degree-$2t$ moment certificate of its band mass. The reader should leave with one picture in mind. Outlier removal in this line of work is a \emph{polynomial} certificate of a band-mass lower bound, its \emph{degree} is the resource, and the distributional input that powers it is a degree-$2t$ certifiable moment bound (\cref{lem:hyper}); every barrier we prove is a lower bound on that degree.

\paragraph{Notation.}
Vectors live in $\R^d$; $\norm{\cdot}$ is the Euclidean norm and $\inner{u}{v}=u\cdot v$ the inner product. For a halfspace direction $w$ we write $U=w\cdot x$ for the one-dimensional projection. A sample is $S=\{(x_i,y_i)\}_{i=1}^n$ with $y_i\in\{\pm1\}$; we split it into a clean part $\Sclean$ and a dirty part $\Sdirty$ (defined below). Reweighting weights are $q=(q_1,\dots,q_n)\in[0,1]^n$, and $q\circ S$ denotes the sample carrying those weights. We use $\bar\sigma^2(w)\defeq\E_{\Dx}\big[(w\cdot(x-\bar\mu))^2\big]$ for the projected variance of the clean marginal, with $\bar\mu$ its mean. Throughout, $t\ge1$ is the degree parameter: the certificate is a degree-$2t$ object and the SoS program runs in time $n^{O(t)}$. We write $\mathbf 1_B$ for the indicator of a set $B\subseteq\R$, and $\hinge(w;x,y)\defeq \max\{0,\,1-y\,(w\cdot x)\}$ for the hinge loss. Constants denoted $C$ are absolute and may change line to line; $\eps$ is the target error.

\paragraph{Malicious noise.}
We work in the malicious-noise model of \citet{KearnsLi93}, the strongest sample-corruption model for which positive results of this kind are stated. An adversary with full knowledge of the learner, the clean distribution, and the realized clean sample selects an $\eta$-fraction of the $n$ examples and replaces those $(\text{instance},\text{label})$ pairs by \emph{arbitrary} pairs of its choosing; the remaining $(1-\eta)n$ examples, the \emph{inliers}, are drawn i.i.d.\ from the clean distribution. This is the fixed-fraction variant, which \citet{KearnsLi93} note is stronger than their per-example oracle; the floor of \cref{prop:floor} is stated in the per-example model, with a fixed-fraction corollary. We write $\Sclean$ for the inliers and $\Sdirty$ for the at most $\eta n$ corrupted examples. The objective is stated against the clean distribution only: the learner outputs $\hat w$ and is charged its error on the inlier law,
\[
  \err_{\Dx}(\hat w)\ \defeq\ \P_{(x,y)\sim\Dcal}\big[\sign(\hat w\cdot x)\neq y\big]\ \le\ \eps ,
\]
where $\Dcal$ is the clean joint distribution with marginal $\Dx$ on instances. This is the goal of robustly learning the inlier halfspace, not of fitting the corrupted points. We state everything for malicious noise; the results extend, with a constant-factor smaller tolerable rate, to the stronger \emph{nasty-noise} model \citep{BshoutyEironKushilevitz02} in which the adversary may also delete clean inliers (\cref{app:nasty}).

\paragraph{Distributional assumptions.}
The clean marginal is a balanced mixture
\[
  \Dx\ =\ \frac1K\sum_{j=1}^K\D_j ,
\]
where each component $\D_j$ is \emph{log-concave} on $\R^d$ with mean $\mu_j$ satisfying $\norm{\mu_j}\le r$ and covariance $\Sigma_j\preceq\sigma^2 I$, in the regime $r=O(\sigma\sqrt d)$. The clean labels are realized by a $\gamma$-margin halfspace: there is a unit $w^\star$ with $y\,(w^\star\cdot x)\ge\gamma$ for every clean $(x,y)$. This is exactly the setting of \citet{Shen25} and, for the single-component case, of \citet{Talwar20}; our barriers are about the certificates one can build over precisely this class.

Two features of the assumption are load-bearing, and we flag them now because the constants later depend on them. First, the mixture $\Dx$ is in general \emph{not} log-concave, even though each $\D_j$ is; the certificates must survive the mixing. Second, a log-concave law is only \emph{sub-exponential}, not sub-Gaussian: a centered one-dimensional log-concave $Z$ obeys the linear moment growth $\norm{Z}_{L^{2t}}\le C t\,\norm{Z}_{L^2}$, so $\E[Z^{2t}]\le(Ct)^{2t}\,(\E[Z^2])^{t}$, and this order in $t$ cannot be improved (the Laplace law has $\E[Z^{2t}]/\E[Z^2]^{t}=(2t)!/2^{t}=(\Theta(t))^{2t}$)~\citep{LovaszVempala07}. The exponent $2t$ in $(Ct)^{2t}$, rather than the sub-Gaussian $(Ct)^t$, is the reason \citeauthor{Shen25}'s margin carries a factor $\log(1/\eps)$ and not $\sqrt{\log(1/\eps)}$, and we track it faithfully through \cref{lem:hyper}.

\paragraph{Reweighted hinge and the dirty-gradient quantity.}
The algorithmic template of \citet{Talwar20} and \citet{Shen25}, which our degree-$2t$ algorithm refines, minimizes a \emph{reweighted} hinge loss over a ball of radius $1/\gamma$:
\[
  \min_{\norm{w}\le 1/\gamma}\ \sum_{i=1}^n q_i\,\hinge(w;x_i,y_i),
\]
where the weights $q\in[0,1]^n$ are produced by an outlier-removal program that downweights the corrupted examples. The only way the dirty examples can corrupt the minimizer is through the gradient they contribute, and a subgradient of $q_i\,\hinge(w;x_i,y_i)$ has norm at most $q_i\,\norm{x_i}$ with the sign of $-y_i$. The quantity that bounds the dirty contribution is therefore the weighted one-dimensional spread of the corrupted points: for a set $S'$ of examples,
\begin{equation}
  \linsumnorm(q\circ S')\ \defeq\ \sup_{\norm{w}\le 1}\ \sum_{i\in S'} q_i\,\abs{w\cdot x_i}.
  \label{eq:lsn}
\end{equation}
By the subgradient bound above and the triangle inequality, $\linsumnorm(q\circ\Sdirty)$ is an upper bound on the norm of the dirty subgradient $\norm{\sum_{i\in\Sdirty} q_i\,\partial_w \hinge(w;x_i,y_i)}$, uniformly over $\norm{w}\le1$. Controlling $\linsumnorm(q\circ\Sdirty)$ is thus the entire game on the dirty side; the weights $q$ are good exactly when this quantity is small. Talwar's and \citeauthor{Shen25}'s degree-$2$ (variance) program drives it to $\Theta(n\bar\sigma\,\eta^{1/2})$, and \cref{sec:results} both raises the degree to beat the exponent and proves a degree-$2$ barrier matching it.

\paragraph{Dense pancakes.}
The clean (inlier) side is controlled through the dense-pancake structure of \citet{Talwar20}. For a direction $w$, a center point's projection, and a half-width $\tau$, the \emph{pancake} (band)
\[
  \pancake^\tau_w(x)\ =\ \big\{\,x'\in\R^d:\ \abs{w\cdot x'-w\cdot x}\le\tau\,\big\}
\]
is the slab of width $2\tau$ orthogonal to $w$ around the center's projection. We call it \emph{$\rho$-dense} if it carries at least a $\rho$-fraction of the clean mass, $\P_{\Dx}[\pancake^\tau_w]\ge\rho$. The role of density is that a band carrying enough clean mass cannot be emptied by the adversary's $\eta n$ points, which is what lets the reweighted hinge recover the inlier direction; the achievable noise rate is capped by the pancake density at $\rho/4$. The technical content is to \emph{certify} a band-mass lower bound from moments alone, which is the next object.

\paragraph{Degree-$2t$ moment certificates.}
This is the central object of the paper. An outlier-removal program of the SoS family does not see the distribution; its only distributional input is a bounded list of certified moments of the clean marginal, and whatever band-mass lower bound it can deduce from them by a Positivstellensatz proof. We make this precise via the standard polynomial \emph{minorant} reduction.

\begin{definition}[Degree-$2t$ moment certificate of band mass]
\label{def:cert}
Let $\mu$ be the law of the projection $U=w\cdot x$ under the clean marginal, with moments $m_k:=\E_\mu[U^k]$ for $0\le k\le 2t$; let $B\subseteq\R$ be a band; and let $\R[u]_{\le 2t}$ denote the real univariate polynomials in $u$ of degree at most $2t$. A \emph{degree-$2t$ pseudo-expectation consistent with $\mu$} is a linear functional $\Esudo\colon\R[u]_{\le 2t}\to\R$ satisfying
\[
  \Esudo[1]=1,\qquad \Esudo[q^2]\ge0\ \ \text{for all } q\in\R[u]_{\le t},
  \qquad \Esudo[u^k]=m_k\ \ \text{for } 0\le k\le 2t .
\]
A \emph{degree-$2t$ moment certificate} of the bound $\P_\mu[U\in B]\ge\rho$ is a polynomial $p\in\R[u]_{\le 2t}$, with coefficient vector $(p_0,\dots,p_{2t})\in\R^{2t+1}$ so that $p(u)=\sum_{k=0}^{2t} p_k\,u^k$, that is a \emph{minorant} of the band indicator,
\[
  p(u)\ \le\ \mathbf 1_B(u)\qquad\text{for all }u\in\R,
\]
together with a degree-$2t$ sum-of-squares (Positivstellensatz) derivation, from the three axioms above alone, of
\[
  \Esudo[p]\ =\ \sum_{k=0}^{2t} p_k\,m_k\ \ge\ \rho
  \qquad\text{for every consistent } \Esudo,
\]
the equality being linearity of $\Esudo$ together with $\Esudo[u^k]=m_k$ (so the value $\Esudo[p]=\sum_{k} p_k m_k$ is common to all consistent $\Esudo$). Since $p\le\mathbf 1_B$ pointwise gives $\sum_{k=0}^{2t} p_k\,m_k=\E_\mu[p]\le\P_\mu[U\in B]$, any such certificate transfers its bound $\sum_{k=0}^{2t} p_k\,m_k\ge\rho$ to a genuine lower bound $\P_\mu[U\in B]\ge\rho$ on the band mass.
\end{definition}

The minorant reduction is what makes \cref{def:cert} the right notion of ``what a degree-$2t$ program can prove'': a degree-$2t$ moment-only certificate of $\P_\mu[U\in B]\ge\rho$ \emph{exists} if and only if such a minorant $p$ exists. We record this equivalence, together with the Gauss-quadrature dual that drives the barrier of \cref{sec:results}, as \cref{lem:gauss}; both are textbook moment theory~\citep{Szego39,Gautschi04} and the proof is in the appendix. The upshot, used repeatedly, is that the strongest band-mass lower bound a degree-$2t$ program can certify is governed by how the first $2t$ moments constrain a measure, and a $(t{+}1)$-node quadrature rule matching those moments can push all mass outside a far band, capping the certifiable density at zero. That is the engine of the margin--degree tradeoff.

\paragraph{Certifiable hypercontractivity.}
The clean side needs the reverse: a degree-$2t$ \emph{upper} bound on projected moments, proved by SoS in $w$, so that the program can act uniformly over directions. This is certifiable hypercontractivity, and for our log-concave mixture it holds with the sub-exponential constant.

Formally (stated and proved with the main results as \cref{lem:hyper}), for every integer $t\ge1$ there is a degree-$2t$ SoS proof, in the variable $w$, of
\[
  \E_{x\sim\Dx}\big[(w\cdot(x-\bar\mu))^{2t}\big]\ \le\ (Ct)^{2t}\,\big(\bar\sigma^2(w)\big)^{t},
\]
with $C$ an absolute constant; that is, $\Dx$ is $2t$-certifiably hypercontractive with constant $(Ct)^{2t}$.

The constant here is $(Ct)^{2t}$, the sub-exponential (Lov\'asz--Vempala~\citeyear{LovaszVempala07}) constant, not the sub-Gaussian $(Ct)^t$. The latter is genuinely false for general log-concave laws: a one-line Laplace check gives
\[
  \frac{\E[Z^{2t}]}{\E[Z^2]^t}=\frac{(2t)!}{2^t}=(\Theta(t))^{2t},
\]
which exceeds any $(Ct)^t$ for $t\ge C$, so no sub-Gaussian certificate exists in this generality. For sub-Gaussian components (Gaussian, strongly log-concave, or uniform-on-the-cube) the sharper $(Ct)^t$ does hold and is SoS-certifiable~\citep{KlivansKothariMeka18,DiakonikolasHopkinsPensiaTiegel24}; we do not assume that, and we commit to the honest $(Ct)^{2t}$ throughout. The single-component certificate for the general log-concave case is obtained by a scalar-to-SoS lift; its self-contained derivation, and the mixing and averaging steps that carry it to \cref{eq:hyper}, are deferred to \cref{app:proofs}.

\subsection{Auxiliary results}
\label{sec:external}

Our proofs draw on a handful of external results: the log-concave and malicious-noise ingredients named above, the orthogonal-polynomial facts behind the Gauss-quadrature barrier, and the moment-theory facts the Christoffel perspective of \cref{sec:perspective} invokes. To keep them in one place rather than scattered across the proofs, every external result we use is collected and restated, with hypotheses and a citation, in \cref{app:external}; we cross-reference each at its point of use by its label there.

\section{The Christoffel function and the certificate ceiling}
\label{sec:perspective}

Start from the adversary's vantage point. A degree-$2t$ outlier-removal certificate reads the clean marginal only through its first $2t$ moments, so a corruption is invisible to it exactly when \emph{some} nonnegative measure that still carries the corruption matches those moments. One quantity then decides everything: among all measures sharing the clean marginal's first $2t$ moments, how much mass can be piled at a single center $c$? A rate-$\eta$ corruption planted at $c$ stays beyond the certificate's reach for as long as that maximum exceeds $\eta$.

We did not have to invent this maximum. The most mass a moment-matching measure can place at $c$ is exactly the \emph{Christoffel function} $\lambda_{t+1}(c)$ of the marginal (\cref{fact:christoffel-mass}), the reciprocal of the Christoffel--Darboux kernel on its diagonal. The certificate model thus hands us the Christoffel function unbidden, as the \emph{resolution threshold} of its own moment cone; the \emph{resolution principle} (\cref{prop:resolution}) makes the equivalence exact. Everything below is a reading of where $\lambda_{t+1}(c)$ falls relative to the corruption it must resolve: it is large at a tail center for small degree (a corruption hides), it collapses past the node edge (a band is forced empty), and it crosses $\eta$ at a computable order (the corruption is exposed). The learning consequences of these three readings are drawn in \cref{sec:results}.

Two literatures meet here. On one side, the certificate ceiling is an instance of the \emph{truncated moment problem} \citep{CurtoFialkow91,DetteStudden97} and of the moment--SOS hierarchy \citep{LasserrePauwelsCDK}. On the other, the same Christoffel function is, in modern data analysis, exactly the moment-based \emph{outlier detector} \citep{LasserrePauwels19,PauwelsLasserreCDOutliers}, run (as we will see) with the opposite sign. We quantify the learning-theoretic limits of that detector. The three figures of this section are one Christoffel curve seen from three distances: a far band swept empty past the node edge (\cref{fig:mechanism}), the curve crossing the corruption rate $\eta$ at a computable center (\cref{fig:threshold}), and the same curve carrying the detector's meaning in reverse (\cref{fig:inversion}).

\subsection{The certificate ceiling is a truncated moment problem}
\label{sec:perspective-moment}

Make the opening's question precise, in the dual form the certificate actually faces. To defend a band $B$ the certificate must \emph{lower}-bound the clean mass inside it, and that bound is only as strong as the worst moment-matching measure allows. Fix the projected clean marginal $\mu$ on $\R$ with moments $m_k=\E_\mu[U^k]$. By \cref{def:cert}, the strongest band-mass lower bound a degree-$2t$ certificate can establish is the moment-LP value
\begin{align}\label{eq:perspective-md}
  \underline m(2t)\;=\;\min\Big\{\nu(B):\ \nu\ge0,\ \textstyle\int u^k\,d\nu=m_k\ \ (0\le k\le 2t)\Big\},
\end{align}
the minimum band mass over the \emph{moment variety} $\mathcal M_{2t}(\mu)\defeq\{\nu\ge0: \int u^k d\nu=m_k,\ k\le 2t\}$ of nonnegative measures sharing $\mu$'s first $2t$ moments. This is the truncated Stieltjes moment problem with a linear objective; its extreme points are the finitely-supported \emph{principal representations}, of which Gauss quadrature is one \citep{DetteStudden97}. The governing framework fact is already visible here: for an off-center band the variety $\mathcal M_{2t}(\mu)$ contains a representation supported away from $B$, so $\underline m(2t)=0$ and no degree-$2t$ certificate can force mass into $B$ (\cref{fig:mechanism}). Symmetrically, certifiable \emph{anti}-concentration \citep{BakshiKothariRTV24} is the \emph{upper} envelope $\max\{\nu(B):\nu\in\mathcal M_{2t}(\mu)\}$; the two are the two faces of the same moment cone, which is why one is certifiable at low degree and the other (the pro-concentration we need) is not.

\begin{remark}[The barrier is a degree-$2t$ Sum-of-Squares lower bound, not just a method limitation]
\label{rem:soslb}
In one dimension, nonnegativity is sum-of-squares: a polynomial nonnegative on the two-ray complement $\{g\ge0\}$ of a band, $g(u)=(u-(c-\tau))(u-(c+\tau))$, has a \emph{degree-respecting} representation $\sigma_0+\sigma_1 g$ with $\sigma_0,\sigma_1$ sums of squares (Markov--Luk\'acs), so a degree-$2t$ minorant of $\mathbf 1_B$ is exactly a degree-$2t$ SoS certificate, and the moment LP \eqref{eq:perspective-md} is the degree-$t$ Lasserre relaxation of the band mass, with no duality gap whenever $\mu$ has infinite support (strictly positive-definite Hankel matrices) \citep{Lasserre01,DetteStudden97}. The margin--degree barrier is therefore not merely a limitation of one proof strategy but a degree-$2t$ \emph{Sum-of-Squares lower bound} for the one-dimensional band-mass certification: the $(t{+}1)$-node Gauss measure, which matches $\mu$'s first $2t$ moments and is a genuine probability distribution, assigns zero mass to any band beyond the node range, so every degree-$2t$ SoS certificate of positive band mass there fails while the true mass is positive. Two boundaries are worth stating plainly. First, this is the \emph{per-direction, scalar} certification the reweighted-hinge method runs; whether it lifts to a degree-$2t$ SoS lower bound for the $d$-dimensional reweighting program over all directions \emph{jointly} is open for $t\ge2$, where the per-direction reduction is lossy and the multivariate hierarchy is no longer exact (\cref{sec:open}). Second, it is the unconditional \emph{lower} side (certifiable mass $0$ beyond the edge); the matching positive value inside the node range is the conditional law of \cref{thm:tight}.
\end{remark}

\begin{figure}[t]
\centering
\begin{tikzpicture}[>=Stealth,line cap=round]
  \def\axL{-5.6}
  \def\axR{7.2}
  \def\base{0}
  \def\Rt{3.2}
  \def\bc{5.0}
  \def\bw{0.55}

  \begin{scope}
    \clip (\axL,\base) rectangle (\axR,\base+2.6);
    \fill[black!7]
      (-4.6,\base) --
      plot[smooth,domain=-4.6:4.6,samples=80] (\x, {\base + 2.1*exp(-(\x*\x)/4.0)}) --
      (4.6,\base) -- cycle;
  \end{scope}
  \draw[black!55,thick]
      plot[smooth,domain=-4.6:4.6,samples=80] (\x, {\base + 2.1*exp(-(\x*\x)/4.0)});
  \node[black!55,font=\scriptsize,anchor=south,fill=white,inner sep=1pt] at (-2.9,1.85) {clean marginal $\mu$};

  \fill[black!22] (\bc-\bw,\base-0.18) rectangle (\bc+\bw,\base+1.55);
  \draw[black!55] (\bc-\bw,\base-0.18) rectangle (\bc+\bw,\base+1.55);
  \node[font=\scriptsize,anchor=south west] at (\bc+\bw+0.05,\base+1.55) {band $B=[c-\tau,\,c+\tau]$};
  \draw[black,very thick] (\bc,\base) -- (\bc,\base+1.30);
  \fill[black] (\bc,\base+1.30) circle (1.6pt);
  \node[font=\scriptsize,align=center,anchor=north] at (\bc,\base-0.30) {planted mass $\eta$ at $c$};

  \draw[->,black] (\axL,\base) -- (\axR,\base);

  \draw[black] (-\Rt,\base-0.10) -- (-\Rt,\base+0.10);
  \draw[black] ( \Rt,\base-0.10) -- ( \Rt,\base+0.10);
  \draw[<->,black!70] (-\Rt,\base-0.70) -- (\Rt,\base-0.70);
  \node[black!70,font=\scriptsize,fill=white,inner sep=1pt,anchor=center] at (0,\base-0.70)
        {node range $[-R_{t+1},\,R_{t+1}]$,\ \ $R_{t+1}\asymp s\sqrt{t}$};

  \foreach \u in {-3.05,-2.45,-1.55,-0.55,0.55,1.55,2.45,3.05}{
     \fill[black] (\u,\base) circle (2.0pt);
  }
  \node[font=\scriptsize,anchor=south] at (0.0,\base+0.30) {Gauss nodes $u_1,\dots,u_{t+1}$};

  \node[draw=black,fill=white,rounded corners=2pt,font=\scriptsize,align=center]
        (msg) at (2.55,2.30)
        {\textbf{no node in $B$}\\$\Rightarrow$ certifiable mass $=0$};
  \draw[->,black] (msg.east) .. controls (4.0,2.15) and (\bc-1.0,1.55) .. (\bc-\bw-0.03,1.05);

\end{tikzpicture}
\caption{\textbf{The impossibility mechanism.}
A degree-$2t$ certificate sees the clean marginal $\mu$ only through its first $2t$ moments, which are reproduced exactly by the $(t{+}1)$-node Gauss measure supported on the nodes $u_1,\dots,u_{t+1}\in[-R_{t+1},R_{t+1}]$ with $R_{t+1}\asymp s\sqrt{t}$. A band $B$ pushed past the node range contains no node, so every measure matching the moments places zero mass there, and a corruption of any rate $\eta$ planted at $c$ stays invisible to the certificate.}
\label{fig:mechanism}
\end{figure}

\subsection{The Christoffel function governs the resolution}
\label{sec:perspective-christoffel}

Let $\{\pi_k\}_{k\ge0}$ be the orthonormal polynomials of $\mu$ and
\begin{align}\label{eq:christoffel}
  K_{n}(x,y)=\sum_{k=0}^{n-1}\pi_k(x)\pi_k(y),
  \qquad
  \lambda_n(x)\defeq K_n(x,x)^{-1}=\Big(\sum_{k=0}^{n-1}\pi_k(x)^2\Big)^{-1}
\end{align}
the Christoffel--Darboux kernel and the Christoffel function of order $n$. The single fact we need is classical and gives $\lambda_n$ its meaning as a \emph{maximal mass}.

\begin{fact}[Christoffel function as maximal mass; \citealp{Nevai86,DetteStudden97}]
\label{fact:christoffel-mass}
For every $x_0\in\R$, \( \lambda_{t+1}(x_0)=\min\{\int p^2\,d\mu:\ \deg p\le t,\ p(x_0)=1\} =\max\{\nu(\{x_0\}):\ \nu\in\mathcal M_{2t}(\mu)\}. \) The two equalities are dual: the left is the extremal problem defining $\lambda_{t+1}$, and the right is the largest atom a measure with $\mu$'s first $2t$ moments may carry at $x_0$. In particular the Gauss weight at a node $u_i$ of $\pi_{t+1}$ equals $\lambda_{t+1}(u_i)$. Equivalently, in the finite-dimensional reproducing-kernel space $(\R[u]_{\le t},\langle\cdot,\cdot\rangle_\mu)$ with reproducing kernel $K_t$, $\lambda_{t+1}(x_0)=\|\delta_{x_0}\|^{-2}$ is the reciprocal squared norm of point-evaluation $p\mapsto p(x_0)$: the resolution threshold is the point-evaluation energy at $x_0$.
\end{fact}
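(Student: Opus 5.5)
The plan is to establish the three characterizations of $\lambda_{t+1}(x_0)$ in sequence: the extremal problem, the reproducing-kernel identity, and the maximal-atom identity. Throughout we assume $\mu$ has infinite support, so the Hankel matrix of $m_0,\dots,m_{2t}$ is positive definite and $\langle\cdot,\cdot\rangle_\mu$ is an inner product on $\R[u]_{\le t}$.

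\textbf{Extremal problem and kernel identity.} Expand $p=\sum_{k=0}^{t}a_k\pi_k$. Then $\int p^2\,d\mu=\sum a_k^2$ and $p(x_0)=\sum a_k\pi_k(x_0)$. By Cauchy--Schwarz, $1=p(x_0)^2\le \big(\sum a_k^2\big)K_{t+1}(x_0,x_0)$, with equality exactly when $a_k\propto\pi_k(x_0)$. The minimizer is therefore $p^\star(u)=K_{t+1}(x_0,u)/K_{t+1}(x_0,x_0)$, and the minimum equals $K_{t+1}(x_0,x_0)^{-1}=\lambda_{t+1}(x_0)$. The same computation shows that point evaluation is represented by $K_{t+1}(x_0,\cdot)$, so $\|\delta_{x_0}\|^2=K_{t+1}(x_0,x_0)$, which gives the RKHS reformulation. (The statement's ``$K_t$'' should be read as the kernel of $\R[u]_{\le t}$, namely $K_{t+1}$ in the indexing of \eqref{eq:christoffel}.)

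\textbf{Maximal atom, weak duality.} Let $\nu\in\mathcal M_{2t}(\mu)$ and let $p$ satisfy $\deg p\le t$ and $p(x_0)=1$. Since $p^2$ has degree at most $2t$, moment matching gives
\[
\int p^2\,d\mu=\int p^2\,d\nu\ge p(x_0)^2\,\nu(\{x_0\})=\nu(\{x_0\}).
\]
Taking the infimum over $p$ shows $\max\nu(\{x_0\})\le\lambda_{t+1}(x_0)$.

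\textbf{Maximal atom, attainment.} This is the main obstacle: we must exhibit $\nu\in\mathcal M_{2t}(\mu)$ with $\nu(\{x_0\})=\lambda_{t+1}(x_0)$. The natural candidate is $\nu=\lambda_{t+1}(x_0)\delta_{x_0}+\nu'$, so it suffices to show that the residual functional
\[
L(f)=\int f\,d\mu-\lambda_{t+1}(x_0)f(x_0)\quad\text{on }\R[u]_{\le 2t}
\]
is represented by a nonnegative measure. First, $L$ is positive semidefinite on squares: for $q\in\R[u]_{\le t}$, Cauchy--Schwarz in the RKHS gives $q(x_0)^2\le K_{t+1}(x_0,x_0)\int q^2\,d\mu$, hence $L(q^2)\ge0$. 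Second, $L$ is singular, since $L\big((p^\star)^2\big)=0$, so its Hankel matrix is PSD with a nontrivial kernel.

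For a singular PSD truncated Hamburger moment sequence, the flat-extension results of Curto--Fialkow (or the principal-representation theory of Dette--Studden) give a representing measure, in fact finitely atomic and supported on roots of the kernel polynomial. The one case to check is the degenerate rank situation in which the top moment $L(u^{2t})$ is not determined by lower rank. To handle it cleanly, I would use the quasi-orthogonal construction directly. Choose $\alpha$ so that $Q(u)=\pi_{t+1}(u)-\alpha\pi_t(u)$ vanishes at $x_0$. Its $t+1$ zeros are real and simple, and one of them is $x_0$. The interpolatory quadrature on these zeros is exact up to degree $2t$, with positive weights, and the weight at $x_0$ equals $\lambda_{t+1}(x_0)$ by the Christoffel-number formula. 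If $x_0$ is a zero of $\pi_t$, the formula degenerates, and I would take a limit in $x_0$ using weak compactness of measures with bounded moments.

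Finally, when $x_0$ is a zero of $\pi_{t+1}$ we have $\alpha=0$, and the construction reduces to the Gauss rule. This yields the stated consequence that the Gauss weight at a node $u_i$ of $\pi_{t+1}$ equals $\lambda_{t+1}(u_i)$.
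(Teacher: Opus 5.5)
Your non-degenerate argument is correct, and it is the paper's route made explicit. Weak duality comes from $\int p^2\,d\mu=\int p^2\,d\nu\ge\nu(\{x_0\})$. Attainment comes from the interpolatory rule on the zeros of $\pi_{t+1}-\alpha\pi_t$, which is exactly the paper's ``Gauss measure with $x_0$ adjoined as a node.'' Your Cauchy--Schwarz derivation of the extremal and kernel identities is also right, as is your correction of $K_t$ to $K_{t+1}$.

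The step that fails is the degenerate case $\pi_t(x_0)=0$. A weak limit of measures in $\mathcal M_{2t}(\mu)$ keeps the moments of order $\le 2t-1$: the bounded $2t$-th moment gives uniform integrability below order $2t$, but not at it. The $2t$-th moment can therefore drop in the limit, and along your family it does:
\begin{itemize}
\item As $x_n\to x_0$, you have $\alpha_n=\pi_{t+1}(x_n)/\pi_t(x_n)\to\pm\infty$.
\item One node $v_n$ runs off to infinity carrying weight $\lambda_{t+1}(v_n)\sim\kappa_t^{-2}v_n^{-2t}$, where $\kappa_t$ is the leading coefficient of $\pi_t$.
\item The limit is the $t$-node Gauss rule of $\pi_t$, whose $2t$-th moment is $m_{2t}-\kappa_t^{-2}$, so it is not in $\mathcal M_{2t}(\mu)$.
\end{itemize}

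Nothing can rescue that step, because the maximum is genuinely not attained there. This is precisely the rank situation you flagged in the flat-extension paragraph.
\begin{itemize}
\item Suppose some $\nu\in\mathcal M_{2t}(\mu)$ had $\nu(\{x_0\})=\lambda_{t+1}(x_0)$. Your identity $L((p^\star)^2)=0$ forces $\nu-\lambda_{t+1}(x_0)\delta_{x_0}\ge0$ onto the zeros of $p^\star\propto K_{t+1}(x_0,\cdot)$.
\item When $\pi_t(x_0)=0$, $p^\star$ is a nonzero polynomial of degree at most $t-1$. So $\nu$ lives on at most $t$ points.
\item A nonzero $q$ of degree $\le t$ vanishing on those points gives $0=\int q^2\,d\nu=\int q^2\,d\mu>0$, a contradiction.
\end{itemize}
For a concrete case, take $t=1$, $\mu$ centered with variance $\sigma^2$, and $x_0=0$. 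Then $\lambda_2(0)=1$, yet an atom of mass $1$ at $0$ forces $\nu=\delta_0$, whose second moment is $0\neq\sigma^2$.

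What is true is
\[
\lambda_{t+1}(x_0)=\sup\{\nu(\{x_0\}):\nu\in\mathcal M_{2t}(\mu)\},
\]
with the supremum attained if and only if $\pi_t(x_0)\neq0$. In the degenerate case you can obtain the supremum without any limit. For $w<\lambda_{t+1}(x_0)$, the functional $L_w(f)=\int f\,d\mu-w\,f(x_0)$ satisfies $L_w(q^2)\ge(1-w/\lambda_{t+1}(x_0))\int q^2\,d\mu$. Its Hankel matrix is therefore positive definite, so it has a representing measure $\nu'$, and $w\delta_{x_0}+\nu'\in\mathcal M_{2t}(\mu)$.

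The paper's one-line attainment claim has the same blind spot. The Gauss-weight corollary is unaffected, since zeros of $\pi_{t+1}$ are never zeros of $\pi_t$. The ``if and only if'' of the resolution principle needs adjusting only at the boundary $\eta=\lambda_{t+1}(c)$ with $\pi_t(c)=0$.
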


\begin{proof}
For the right equality, let $\nu\in\mathcal M_{2t}(\mu)$ carry mass $w$ at $x_0$, and let $p$ have $\deg p\le t$, $p(x_0)=1$. Since $\deg p^2\le 2t$, $\int p^2\,d\nu=\int p^2\,d\mu$; dropping all of $\nu$ but the atom at $x_0$ gives $w=w\,p(x_0)^2\le\int p^2\,d\nu=\int p^2\,d\mu$. Minimizing the right side over admissible $p$ yields $w\le\lambda_{t+1}(x_0)$, and the Gauss measure with $x_0$ adjoined as a node attains it.
\end{proof}

The maximal-mass reading turns both barriers into one inequality. A corruption is hidden from a degree-$2t$ moment program exactly when the corrupted sample still lies in $\mathcal M_{2t}(\mu)$, i.e.\ when its moments are consistent with the clean ones. Placing a planted mass at a tail center $c$ is possible within $\mathcal M_{2t}(\mu)$ up to $\lambda_{t+1}(c)$, and no further.

\begin{proposition}[Christoffel resolution principle]
\label{prop:resolution}
Let a corruption of mass $\eta$ be planted at center $c$ (an atom, or a width-$O(s)$ cluster, at $c$). It is consistent with the clean marginal's first $2t$ moments (hence invisible to every degree-$2t$ moment certificate) if and only if
\begin{align}\label{eq:resolution}
  \eta\ \le\ \lambda_{t+1}(c).
\end{align}
Equivalently, the least certificate degree that \emph{resolves} a mass-$\eta$ corruption at center $c$ is $2t^\star(c,\eta)$ with $t^\star=\min\{t:\lambda_{t+1}(c)<\eta\}$.
\end{proposition}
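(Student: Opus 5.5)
The plan is to reduce \cref{prop:resolution} directly to \cref{fact:christoffel-mass}. First, fix the meaning of "consistent": a planted mass $\eta$ at $c$ is invisible to degree-$2t$ certificates exactly when some $\nu\in\mathcal M_{2t}(\mu)$ has $\nu(\{c\})\ge\eta$. Equivalently, the clean moments $m_0,\dots,m_{2t}$ decompose as $\eta\,(c^k)_k$ plus the moment vector of a nonnegative measure. In that case every consistent pseudo-expectation, and hence every minorant certificate of \cref{def:cert}, is also satisfied by a measure carrying the spike. I will state this reduction first, via the minorant/Gauss equivalence recorded as \cref{lem:gauss}. Once the reduction is in place, the atom case of \eqref{eq:resolution} is a restatement of the fact.

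For the \emph{if} direction, suppose $\eta\le\lambda_{t+1}(c)$. Take the extremal measure from \cref{fact:christoffel-mass}: the Gauss-type (principal) representation with $c$ adjoined as a node, carrying mass exactly $\lambda_{t+1}(c)$ at $c$. To get mass exactly $\eta$, I will use convexity of $\mathcal M_{2t}(\mu)$. Mix this extremal measure with $\mu$ itself, or with any representation that has mass $0$ at $c$, using weight $\theta=\eta/\lambda_{t+1}(c)\in[0,1]$. Moments are preserved under the mixture, and the atom at $c$ becomes $\theta\lambda_{t+1}(c)=\eta$ when $\mu$ has no atom at $c$. The result is a measure in the moment variety that carries the corruption, so no certificate can exclude it. For the \emph{only if} direction, suppose $\nu\in\mathcal M_{2t}(\mu)$ carries mass $w\ge\eta$ at $c$. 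The one-line argument in the proof of \cref{fact:christoffel-mass} gives $w\,p(c)^2\le\int p^2d\mu$ for every $\deg p\le t$ with $p(c)=1$. Taking the minimizer gives $\eta\le w\le\lambda_{t+1}(c)$. The formula for $t^\star$ then follows by taking the contrapositive and minimizing over $t$. I will also note that $\lambda_{t+1}(c)$ is nonincreasing in $t$, since the minimization runs over a growing polynomial space, so the set $\{t:\lambda_{t+1}(c)<\eta\}$ is an up-set and its minimum is the threshold.

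The main obstacle is the width-$O(s)$ cluster version, since the fact is stated only for atoms. For a cluster $\kappa$ of mass $\eta$ supported in $[c-Cs,c+Cs]$, the necessity argument becomes $\int p^2 d\kappa\le\int p^2d\mu$. I plan to lower-bound $\int p^2d\kappa$ by $\eta\min_{|u-c|\le Cs}p(u)^2$. To control this minimum for the extremal polynomial $p=K_{t+1}(\cdot,c)/K_{t+1}(c,c)$, I will use a Markov--Bernstein-type estimate: on a window of width $O(s)$ it keeps $p^2\ge 1/2$, at least up to the degree range relevant to the barriers. This gives the equivalence up to an absolute constant factor, $\eta\lesssim\lambda_{t+1}(c)$. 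For sufficiency, I will spread the adjoined node's mass over the cluster, which perturbs the moments. I plan to absorb the perturbation in the Hankel matrix, which is positive definite because $\mu$ has infinite support, at the cost of a constant. I therefore expect the cluster case to hold only with constants, read as $\eta\le\lambda_{t+1}(c)$ up to constant factors. I will state it that way, and keep the exact "if and only if" for the atom.
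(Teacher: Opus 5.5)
Your proposal is correct and follows the paper's argument: the paper's proof also reads the atom case directly off \cref{fact:christoffel-mass}, with necessity from $w\,p(c)^2\le\int p^2\,d\mu$ and sufficiency from the extremal representation, and your constant-factor treatment of the width-$O(s)$ cluster (via $\int p^2\,d\kappa\le\int p^2\,d\mu$ on one side and a positive-definite residual Hankel matrix on the other) is a more explicit version of the paper's parenthetical that the cluster case holds up to the factor by which $\lambda_{t+1}$ varies over the cluster. One caveat you share with the paper: when $\pi_t(c)=0$ the maximum in \cref{fact:christoffel-mass} is only a supremum and is not attained; for example, with $\mu=\mathcal N(0,1)$, $t=2$, $c=1$ one has $\lambda_3(1)=\tfrac12$, but putting mass $\tfrac12$ at $1$ forces $\nu=\tfrac12\delta_1+\tfrac12\delta_{-1}$, whose fourth moment is $1\neq3$, so the equality case $\eta=\lambda_{t+1}(c)$ of your ``if'' direction needs $\pi_t(c)\neq0$.
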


\begin{proof}
Immediate from \cref{fact:christoffel-mass}: a representation in $\mathcal M_{2t}(\mu)$ carrying the planted mass $\eta$ at $c$ exists iff $\eta\le\lambda_{t+1}(c)$, and such a representation is precisely a clean-moment-consistent explanation of the corrupted sample, against which no degree-$2t$ certificate can act. (For a width-$O(s)$ cluster the same holds up to the $\Theta(1)$ factor by which $\lambda_{t+1}$ varies over the cluster.)
\end{proof}

Both barriers are already visible in \cref{prop:resolution}: they are the two regimes of where $\lambda_{t+1}(c)$ sits relative to the planted mass. The first is sharp enough to record on its own.

\begin{fact}[The degree-$2$/degree-$4$ gap at the $\eta$-tail]
\label{fact:gap}
At the threshold center $c=\sigma/\sqrt\eta$ of a variance-$\sigma^2$ marginal,
\begin{align}\label{eq:resolution-spike}
  \lambda_2(c)=\frac{1}{1+c^2/\sigma^2}=\frac{\eta}{1+\eta}=\Theta(\eta),
  \qquad
  \lambda_3(c)\le\frac{1}{\pi_2(c)^2}=\Theta(\eta^2).
\end{align}
Hence a mass-$\eta$ corruption at $c$ is invisible to degree $2$ ($\eta\le\lambda_2(c)$) but resolved by degree $4$ ($\eta>\lambda_3(c)$): a gap of one moment order, with a sharp constant, in $\mu$ and the certificate order alone.
\end{fact}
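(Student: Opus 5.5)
The plan is to compute everything from the first few orthonormal polynomials of $\mu$, which the definition \eqref{eq:christoffel} of $\lambda_n$ makes explicit at low order. Throughout I take $\mu$ centered ($m_1=0$) with $m_2=\sigma^2$, as in the projected-marginal setting of \cref{sec:perspective-moment}. I also assume $\mu$ has infinite support, which holds for any log-concave projection, so that the Hankel matrices are positive definite and $\pi_0,\pi_1,\pi_2$ exist.

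\emph{Step 1: the degree-$2$ value.} Gram--Schmidt on $1,u$ gives $\pi_0=1$ and $\pi_1(u)=u/\sigma$, because $m_1=0$. Hence $K_2(c,c)=1+c^2/\sigma^2$. At $c=\sigma/\sqrt\eta$ this equals $1+1/\eta$, so $\lambda_2(c)=\eta/(1+\eta)$ exactly. This is the first identity in \eqref{eq:resolution-spike}, and it gives $\lambda_2(c)=\Theta(\eta)$ for $\eta\le1$.

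\emph{Step 2: the degree-$4$ bound.} Since $K_3(c,c)=K_2(c,c)+\pi_2(c)^2\ge\pi_2(c)^2$, we get $\lambda_3(c)\le1/\pi_2(c)^2$ for free. It remains to lower-bound $\pi_2(c)^2$. Orthogonalizing $u^2$ against $1$ and $u$ gives
\[
\pi_2(u)=\frac{u^2-(m_3/\sigma^2)\,u-\sigma^2}{N},\qquad N^2=m_4-\sigma^4-m_3^2/\sigma^2\le m_4 .
\]
For the log-concave class, \cref{lem:hyper} at $t=2$ (or directly the Lov\'asz--Vempala moment growth) gives $m_4\le C\sigma^4$ and $|m_3|\le C\sigma^3$. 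Plugging in $c=\sigma/\sqrt\eta$, the numerator is
\[
\frac{\sigma^2}{\eta}-O\!\left(\frac{\sigma^2}{\sqrt\eta}\right)-\sigma^2\ \ge\ \frac{\sigma^2}{2\eta}
\]
once $\eta$ is below an absolute constant. Hence $\pi_2(c)^2\ge \sigma^4/(4\eta^2\,C\sigma^4)=\Omega(\eta^{-2})$, and so $\lambda_3(c)=O(\eta^2)$. The statement's ``$\Theta(\eta^2)$'' is to be read as this upper bound, which is all the conclusion uses.

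\emph{Step 3: conclusion.} By \cref{prop:resolution}, the mass-$\eta$ atom at $c$ is invisible at degree $2$ because $\eta\ge\eta/(1+\eta)=\lambda_2(c)$ up to the factor $1+\eta$. Strictly, \eqref{eq:resolution} needs $\eta\le\lambda_2(c)$, so I would either plant mass $\eta/(1+\eta)$ or move the center to $c=\sigma\sqrt{(1-\eta)/\eta}$, where $\lambda_2(c)=\eta$ exactly; that is the ``sharp constant''. The atom is resolved at degree $4$ because $\lambda_3(c)\le C'\eta^2<\eta$ for $\eta<1/C'$.

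The main obstacle is Step 2: controlling the skewness term $(m_3/\sigma^2)c$ and the normalizer $N$ uniformly over the class. This is why a smallness condition on $\eta$ and the fourth-moment (hypercontractive) bound are needed. For a general variance-$\sigma^2$ marginal with heavy fourth moment the $\Theta(\eta^2)$ claim can fail, so I would state explicitly that the bound uses $m_4=O(\sigma^4)$.
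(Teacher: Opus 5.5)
Your proposal follows the paper's route: the paper likewise writes down $\pi_0,\pi_1,\pi_2$ and evaluates the Christoffel sums at $c=\sigma/\sqrt\eta$, but it uses the Gaussian-normalized $\pi_2(u)=(u^2-\sigma^2)/(\sigma^2\sqrt2)$ ``up to the fourth-moment constant'', whereas you keep $m_3$ and $N^2=m_4-\sigma^4-m_3^2/\sigma^2$ and state explicitly that $m_4=O(\sigma^4)$ is the hypothesis used (the lower half of the stated $\Theta(\eta^2)$ would also need $N^2=\Omega(\sigma^4)$, which the paper's normalization assumes and your conclusion does not use). Your observation that $\lambda_2(c)=\eta/(1+\eta)<\eta$ is correct and is missed by the paper's one-line proof: the clause ``$\eta\le\lambda_2(c)$'' fails by the factor $1+\eta$ at that exact center, and your repairs (move to $c=\sigma\sqrt{(1-\eta)/\eta}$ or plant mass $\eta/(1+\eta)$) fix it.
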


\begin{proof}
Apply \cref{prop:resolution} with the first orthonormal polynomials of a variance-$\sigma^2$ law, $\pi_0\equiv1$, $\pi_1(u)=u/\sigma$, $\pi_2(u)=(u^2-\sigma^2)/(\sigma^2\sqrt2)$ (up to the $\Theta(1)$ fourth-moment constant), and evaluate $\lambda_{k+1}(c)=\big(\sum_{j\le k}\pi_j(c)^2\big)^{-1}$ at $c=\sigma/\sqrt\eta$.
\end{proof}

% claim: R02a
\begin{remark}[The whole ladder, not just one rung]\label{rem:hiding-ladder}
\Cref{fact:gap} is one rung of a ladder the resolution principle produces for free. Fix a marginal $\mu$ with infinite support and finite moments through order $2t$, and write $\kappa_t>0$ for the leading coefficient of its degree-$t$ orthonormal polynomial $\pi_t$ (not $\gamma_t$: in this paper $\gamma$ is the margin). In \eqref{eq:christoffel} the degree-$t$ term dominates as $\abs c\to\infty$, so $\lambda_{t+1}(c)\,c^{2t}\to\kappa_t^{-2}$, and inverting the hiding condition $\eta\le\lambda_{t+1}(c)$ of \cref{prop:resolution} gives the \emph{hiding radius}
\[
  R_t(\eta)\;\defeq\;\sup\{\abs c:\ \lambda_{t+1}(c)\ge\eta\}
  \;=\;(1+o(1))\,\big(\kappa_t^{2}\,\eta\big)^{-1/2t}
  \qquad(\eta\to0),
\]
finite for every $\eta\in(0,\lambda_{t+1}(0)]$ and valid at every degree $t\ge1$ on the same $\mu$, with the $o(1)$ and $\kappa_t$ depending on $(\mu,t)$; no uniformity in $t$ is claimed. One law, every order: the reach grows like $\eta^{-1/2t}$, so each extra degree flattens the exponent from $1/2t$ to $1/(2t+2)$ (the prefactor $\kappa_t^{-1/t}$ is degree-dependent, which is why we read the exponent and not the constant), and \cref{fact:gap} is the $t=1$ versus $t=2$ instance of it at $c=\sigma/\sqrt\eta$. The ladder of \emph{hideable} corruption is therefore automatic. What \cref{thm:degbarrier} adds at $t=1$ is the harder step, from ``consistent with the moment class'' to ``defeats the realized reweighting program''; its general-$t$ analogue is open (\cref{sec:open}).
\end{remark}

The same principle fixes the ceiling for an off-center \emph{band}: it carries forced certificate mass exactly when $\lambda_{t+1}(c)$ has dropped to the target $\rho$, i.e.\ exactly when $c$ enters the node range $|c|\lesssim R_{t+1}$, since the smallest Christoffel value at the edge is $\Theta(\rho)$ exactly at the largest node. Below the node range the ceiling is $0$ unconditionally; inside it, $\Theta(\mu(B))$ modulo the weighted-extremal estimate isolated in \cref{sec:perspective-tight}. Both regimes are forced on a learner in \cref{sec:results}.

\begin{example}[The hiding game]\label{ex:hiding}
Make the numbers concrete. Let $\mu$ be a clean, centered, unit-variance log-concave law ($s=1$), and let the adversary corrupt a rate $\eta=10^{-4}$ of the data. The resolution principle says the adversary's best move is to plant that mass as far out as it can still hide: the threshold center is $c=\sigma/\sqrt\eta=100$ standard deviations from the origin, a spike of weight $\eta=10^{-4}$ at $u=100$. Audit it the way a degree-$2$ certificate does, through the mean and the variance. The spike shifts the mean by $\eta c=10^{-2}$ (absorbed into recentering) and adds $\eta c^2=1$ to the variance: it spends \emph{exactly one unit of variance budget}, the whole budget the clean law already owns. Through the first two moments the corrupted sample is indistinguishable from a clean unit-variance law, so $\lambda_2(c)=\Theta(\eta)$ and no degree-$2$ outlier program can so much as name the spike. Degree $4$ changes the ledger. The spike contributes $\eta c^4=10^{4}$ to the fourth moment, while a clean unit-variance log-concave law carries $\E[U^4]=\Theta(1)$: the fourth moment is inflated by a factor $\sim 1/\eta$, a discrepancy the first two moments hid completely. This is exactly the $\lambda_3(c)=\Theta(\eta^2)\ll\eta$ of \cref{fact:gap}: the atom that saturates the degree-$2$ ceiling sits four orders \emph{above} the degree-$4$ ceiling, and a degree-$4$ certificate exposes it. One spike, two verdicts; all that changed is the order of the moments one is allowed to look through.
\end{example}

\begin{figure}[t]
\centering
\begin{tikzpicture}[>=Stealth,line cap=round]
  \def\W{8.6}
  \def\H{4.4}
  \draw[->,black] (0,0) -- (\W+0.2,0);
  \node[font=\scriptsize,anchor=north east] at (\W+0.2,-0.06) {center $c$};
  \draw[->,black] (0,0) -- (0,\H+0.3) node[above,font=\scriptsize] {$\lambda_{t+1}(c)$};
  \node[below left,font=\scriptsize] at (0,0) {$0$};

  \def\etaY{1.35}
  \def\cstar{4.7}

  \draw[black,thick]
    (0.15,4.05)
    .. controls (1.4,3.95) and (2.2,3.2) ..
    (3.0,2.35)
    .. controls (3.9,1.75) and (4.4,1.62) ..
    (\cstar,1.55)
    .. controls (5.6,1.45) and (6.6,1.30) ..
    (8.1,1.18);
  \node[font=\scriptsize,black,anchor=west] at (8.1,1.18) {$\lambda_{2}$};

  \draw[black,thick,densely dashed]
    (0.15,3.6)
    .. controls (1.2,3.4) and (1.9,2.4) ..
    (2.6,1.5)
    .. controls (3.3,0.92) and (3.9,0.7) ..
    (\cstar,0.6)
    .. controls (5.5,0.5) and (6.6,0.44) ..
    (8.1,0.40);
  \node[font=\scriptsize,black,anchor=west] at (8.1,0.40) {$\lambda_{3}$};

  \draw[black!75,densely dotted] (0,\etaY) -- (\W,\etaY);
  \node[left,font=\scriptsize,black!75] at (0,\etaY) {$\eta$};

  \def\crA{2.78}
  \fill[black] (\crA,\etaY) circle (1.7pt);
  \draw[black!55,densely dotted] (\crA,0) -- (\crA,\etaY);
  \node[below,font=\scriptsize] at (\crA,-0.02) {$c^\star_{\deg 4}$};

  \draw[black!60,densely dashed] (\cstar,0) -- (\cstar,4.05);
  \node[below,font=\scriptsize] at (\cstar,-0.02) {$c=\sigma/\sqrt{\eta}$};
  \fill[black] (\cstar,1.55) circle (1.7pt);
  \fill[black] (\cstar,0.6)  circle (1.7pt);
  \node[anchor=west,font=\scriptsize,align=left,fill=white,inner sep=1.5pt] (rdtwo) at (5.45,3.72)
        {$\lambda_2(c)=\Theta(\eta)$, above $\eta$:\\ \emph{hidden from degree} $2$};
  \node[anchor=west,font=\scriptsize,align=left,fill=white,inner sep=1.5pt] (rdthr) at (5.45,2.28)
        {$\lambda_3(c)=\Theta(\eta^2)$, below $\eta$:\\ \emph{exposed by degree} $4$};
  \draw[black!50,->,shorten >=1.5pt] (rdtwo.west) to[out=186,in=40]  (\cstar+0.14,1.62);
  \draw[black!50,->,shorten >=1.5pt] (rdthr.west) to[out=208,in=40]  (\cstar+0.14,0.66);

\end{tikzpicture}
\caption{\textbf{The resolution threshold.}
The certificate ceiling $\lambda_{t+1}(c)$ decays from the bulk, and a corruption of rate $\eta$ at center $c$ is exposed exactly where the curve drops below the dotted line $\eta$. Raising the degree from $2$ to $4$ pulls the curve down so that at the tail center $c=\sigma/\sqrt{\eta}$ the value falls from $\Theta(\eta)$ (still hidden) to $\Theta(\eta^2)$ (now resolved): the degree-$2$/degree-$4$ gap of \cref{ex:hiding}.}
\label{fig:threshold}
\end{figure}

\subsection{The degree of robustness is the Christoffel degree}
\label{sec:perspective-bridge}

\Cref{prop:resolution} is the lower-bound counterpart of a tool already in use. \Citet{LasserrePauwels19} and \citet{PauwelsLasserreCDOutliers} detect outliers and recover supports by thresholding the \emph{empirical} Christoffel function: a point $x$ is flagged when $\lambda_n(x)$ (built from the empirical moments) is small, i.e.\ when degree-$n$ moments place $x$ outside the bulk. \Cref{prop:resolution} says that this is exactly the mechanism, and the only mechanism, available to a degree-$2t$ outlier-removal program: it can down-weight a planted corruption at $c$ if and only if the order-$(t{+}1)$ Christoffel function already resolves it, $\lambda_{t+1}(c)<\eta$. The reweighted-hinge programs of \citet{Talwar20,Shen25} run this detector at degree $2$; raising the hinge program to degree $2t$ is raising the Christoffel detector to order $t{+}1$. We can now state the law the framework yields, the \emph{degree of robustness}: the SoS degree at which a certificate first tolerates corruption rate $\eta$ at margin scale equals the Christoffel order that resolves a mass-$\eta$ cluster at the $\eta$-tail of the marginal,
\[
  t^\star(\eta)\ =\ \min\bigl\{\,t:\ \lambda_{t+1}(\sigma/\sqrt\eta)\,<\,\eta\,\bigr\}.
\]
It is a property of $\mu$ and the order, which is why the resource is degree and not sample size: more samples sharpen the \emph{empirical} Christoffel function toward its population limit, but the population value at the planted center is fixed by the marginal and the order, so an infinite sample at fixed degree buys nothing against a hidden spike. The only knob that lowers $\lambda_{t+1}(c)$ past $\eta$ is the order itself. The degree-$2t$ algorithm of \cref{sec:results} is this law made operational.

\paragraph{Relationship to the Christoffel-function program of Lasserre, Pauwels, and Putinar.}
The Christoffel function is by now a central object in moment-based data analysis \citep{PauwelsLasserre16,LasserrePauwels19,PauwelsPutinarLasserre21,LasserrePauwelsPutinar22}, where the \emph{empirical} Christoffel function serves as a \emph{detector}: a large value $\lambda_n(x)$ marks $x$ as typical, a small value flags it as an outlier, and sublevel sets recover the shape of a point cloud \citep{PauwelsLasserre16,PauwelsLasserreCDOutliers}. The algebraic engine, the identification of the moment--SoS hierarchy with the Christoffel--Darboux kernel \citep{LasserrePauwelsCDK,Lasserre23PosCert}, is the same one we use for the SoS dual of the moment LP. Our reading, however, runs the sign the other way: a \emph{large} $\lambda_{t+1}(c)$ is bad, not good. By \cref{prop:resolution} it is exactly the corruption mass an adversary can hide at $c$ from a degree-$2t$ certificate, so the centers a detector finds hardest to clear are precisely the centers a robust learner cannot defend. In one line, the \emph{detector's blind spot} is the \emph{adversary's hiding place}:
\[
  H_\eta\ \defeq\ \{\,c:\ \lambda_{t+1}(c)\ge\eta\,\},
\]
the set of centers at which a rate-$\eta$ corruption survives every degree-$2t$ certificate; raising the degree is the only way to shrink it. Where that program asks ``is $x$ typical, and at what order does the empirical detector see it?'', we ask ``how much malicious mass can survive at $c$, and at what order is it forced out?'', and we turn the answer into an \emph{impossibility} statement for the certificate method, rather than a detection rule. To our knowledge the Christoffel function has served this line as a constructive tool, for detection, support and density estimation, and typicality, and has not previously been used as a barrier from which to derive learning-theoretic lower bounds; that inversion (\cref{fig:inversion}), and the margin--degree law it yields, is what is new here.

\subsection{From barrier to law: a weighted-Chebyshev reduction}
\label{sec:perspective-tight}

The framework's lower edge leaves one quantity open: a \emph{matching} upper certificate, a degree-$2t$ minorant whose band mass meets the moment-LP ceiling. We can say precisely what closing it requires, and reduce it to one classical question about orthogonal polynomials. The key is to parametrize the minorant by a square. For $q\in\R[u]_{\le t}$ with $|q|\ge1$ on $\R\setminus B$, the degree-$2t$ polynomial $p=1-q^2$ is automatically a minorant of $\mathbf 1_B$ (on $B$, $p\le1$; off $B$, $q^2\ge1$ forces $p\le0$), and $\E_\mu[p]=1-\int q^2\,d\mu$. Optimizing over $q$ therefore yields a certifiable band mass governed by the \emph{weighted-Chebyshev extremal value}
\begin{align}\label{eq:weighted-cheb}
  \mathsf E_t(B)\ \defeq\ \min\Big\{\textstyle\int q^2\,d\mu:\ q\in\R[u]_{\le t},\ |q|\ge1\text{ on }\R\setminus B\Big\}.
\end{align}

\begin{proposition}[Reduction to a weighted extremal problem]
\label{prop:reduction}
The degree-$2t$ certificate ceiling for the band $B$ is at least $1-\mathsf E_t(B)$. Moreover the Gauss-node interpolant $q_0$ of degree $t$, defined by $q_0(u_j)=\mathbf 1[u_j\notin B]$ on the nodes $u_1,\dots,u_{t+1}$ of $\pi_{t+1}$, has $\int q_0^2\,d\mu=1-W_B$, where $W_B\defeq\sum_{u_j\in B}\lambda_{t+1}(u_j)$ is the Gauss-quadrature (Christoffel) mass of the nodes inside $B$. Thus the certifiable band mass is controlled by $W_B$, the discrete Christoffel content of the band, up to the admissibility of $q_0$.
\end{proposition}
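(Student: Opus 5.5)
The plan splits \cref{prop:reduction} into its two assertions and handles each by a direct computation: the first by the square-parametrized minorant of \cref{sec:perspective-tight}, the second by Gauss quadrature.

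\emph{Step 1: the ceiling is at least $1-\mathsf E_t(B)$.} Fix any admissible $q\in\R[u]_{\le t}$, meaning $|q|\ge1$ on $\R\setminus B$, and set $p=1-q^2$. Then $\deg p\le 2t$. On $B$ we have $p\le 1=\mathbf 1_B$, and off $B$ we have $p\le 0=\mathbf 1_B$, so $p$ is a minorant of $\mathbf 1_B$. Its value $\sum_k p_k m_k=\E_\mu[p]=1-\int q^2\,d\mu$ is common to every consistent $\Esudo$, because these agree with $\mu$ on $\R[u]_{\le 2t}$. Hence $p$ is a degree-$2t$ certificate in the sense of \cref{def:cert} of band mass $1-\int q^2\,d\mu$.

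It remains to check that the inequality $\Esudo[p]\ge 1-\int q^2 d\mu$ has the SoS form \cref{def:cert} asks for. It is an identity obtained from the moment axioms by linearity, so no SoS multiplier beyond linearity is needed. If one also wants the minorant property itself certified, \cref{rem:soslb} (Markov--Luk\'acs) gives it in one variable. Taking the infimum over admissible $q$ gives the ceiling $\ge 1-\mathsf E_t(B)$. If the infimum is not attained the bound holds as a supremum, but the admissible set is closed and the objective is coercive (the Hankel matrix is positive definite when $\mu$ has infinite support), so it is attained.

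\emph{Step 2: the Gauss-node interpolant.} Let $u_1,\dots,u_{t+1}$ be the zeros of $\pi_{t+1}$. Since there are $t+1$ distinct nodes, Lagrange interpolation yields a unique $q_0\in\R[u]_{\le t}$ with $q_0(u_j)=\mathbf 1[u_j\notin B]$. Then $q_0^2$ has degree $\le 2t$, which is within the exactness range $2t+1$ of the $(t+1)$-point Gauss rule. The Gauss weights are $\lambda_{t+1}(u_j)$ by \cref{fact:christoffel-mass}, so
\[
\int q_0^2\,d\mu=\sum_{j=1}^{t+1}\lambda_{t+1}(u_j)\,q_0(u_j)^2=\sum_{u_j\notin B}\lambda_{t+1}(u_j)=1-W_B .
\]
The last equality uses $\sum_j\lambda_{t+1}(u_j)=\int 1\,d\mu=1$, which is exactness on constants.

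\emph{Step 3: conclusion and the main obstacle.} Whenever $q_0$ is admissible, that is $|q_0|\ge1$ on all of $\R\setminus B$ and not just at the nodes, Step 1 gives certifiable mass $\ge W_B$, and $\mathsf E_t(B)\le 1-W_B$.

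The hard part, and the reason for the clause ``up to the admissibility of $q_0$'', is that node values alone do not control $|q_0|$ between nodes off $B$. A Lagrange interpolant can dip below $1$ there, or even cross zero between consecutive off-band nodes. I would therefore not try to prove admissibility in general; I would present the identity $\int q_0^2d\mu=1-W_B$ as exact. For the qualitative ``controlled by $W_B$'' statement I would add the converse direction: any admissible $q$ satisfies $\int q^2 d\mu=\sum_j\lambda_{t+1}(u_j)q(u_j)^2\ge\sum_{u_j\notin B}\lambda_{t+1}(u_j)=1-W_B$, again by Gauss exactness together with $q(u_j)^2\ge1$ at off-band nodes. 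So $\mathsf E_t(B)\ge 1-W_B$, and the square-parametrized ceiling is exactly $1-\mathsf E_t(B)\le W_B$, with equality precisely when $q_0$ is admissible. This two-sided sandwich, with admissibility of $q_0$ as the only gap, is what I would record as the proof.
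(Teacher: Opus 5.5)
Your Steps~1 and~2 are exactly the paper's proof. The square-parametrized minorant $p=1-q^2$ gives the ceiling bound $1-\mathsf E_t(B)$, and Gauss exactness on $q_0^2$ (degree $\le 2t$), together with $\sum_j\lambda_{t+1}(u_j)=1$, gives $\int q_0^2\,d\mu=1-W_B$.

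Your converse $\mathsf E_t(B)\ge 1-W_B$ is a correct addition that the paper does not state. In fact its equality case is degenerate. Suppose some node lies in $B$ and an admissible $q$ attains $1-W_B$. Then $q$ vanishes at the in-band nodes, and $\abs{q}$ has a local minimum $1$ at each off-band node, so $q'(u_j)=0$ there; adding the Rolle zeros between the in-band zeros gives $q'$ at least $t$ zeros. That forces $q$ constant, hence $q\equiv0$, which is not admissible. So $q_0$ is admissible only when $W_B=0$, and the admissibility gap you isolate is always present, with strict inequality $1-\mathsf E_t(B)<W_B$, whenever $W_B>0$.
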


\begin{proof}
The first claim is the construction $p=1-q^2$ above: each admissible $q$ gives a degree-$2t$ minorant of value $1-\int q^2\,d\mu$, and the supremum of these values is $1-\mathsf E_t(B)$, a lower bound on the ceiling. For the identity, $\deg q_0^2=2t\le 2t+1$, so Gauss quadrature is exact:
\[
  \int q_0^2\,d\mu=\sum_j\lambda_{t+1}(u_j)\,q_0(u_j)^2=\sum_{u_j\notin B}\lambda_{t+1}(u_j)=1-W_B,
\]
using $\sum_j\lambda_{t+1}(u_j)=\int 1\,d\mu=1$ (\cref{fact:christoffel-mass} for the weights).
\end{proof}

The reduction isolates the one missing estimate: an \emph{admissible} polynomial achieving the node-interpolant value, up to the spectral edge.

\begin{lemma}[Weighted-extremal value at the edge; the remaining estimate]
\label{lem:wext}
Let $\mu$ be centered log-concave of standard deviation $s$ and $B=[c-\tau,c+\tau]$, $\tau=\theta s$, $\theta=\Theta(1)$. For every $\delta\in(0,1)$ there is $c_\delta>0$ such that
\[
  |c|\le(1-\delta)R_{t+1}\ \Longrightarrow\ \mathsf E_t(B)\le 1-c_\delta\,\mu(B).
\]
\end{lemma}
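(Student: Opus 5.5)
The plan is to build an admissible $q$ explicitly as a small perturbation of the constant $1$, rather than to work with the Gauss interpolant $q_0$ of \cref{prop:reduction}, whose admissibility off $B$ is exactly what is in doubt. Any admissible $q$ must satisfy $|q|\ge1$ on all of $\R\setminus B$, including the bulk of $\mu$. Since we also want $\int q^2\,d\mu<1$, $q$ has to be essentially $\equiv1$ off $B$ and dip only on $B$. So I will look for
\[
  q=1-\beta h,\qquad h(u)=\bigl(\tau^2-(u-c)^2\bigr)\,g(u)^2,\qquad \deg g\le \tfrac{t-2}{2},\quad \beta>0\text{ small}.
\]
Off $B$ we have $h\le0$, so $q\ge1$ there and $q$ is admissible for every $\beta>0$; inside $B$ the function $q$ dips below $1$. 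Expanding gives
\[
  \int q^2\,d\mu=1-2\beta\int h\,d\mu+\beta^2\int h^2\,d\mu .
\]
Taking $\beta=\int h\,d\mu/\int h^2\,d\mu$, this equals $1-(\int h\,d\mu)^2/\int h^2\,d\mu$, provided $\int h\,d\mu>0$. The lemma therefore reduces to finding a $g$ with
\[
  \frac{(\int h\,d\mu)^2}{\int h^2\,d\mu}\ \ge\ c_\delta\,\mu(B).
\]

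For $g$ I take the Christoffel--Darboux needle $g(u)=K_{t'}(u,c)$ with $t'\asymp t/2$. The point is that this kernel localizes at $c$. In the bulk $|c|\le(1-\delta)R_{t+1}$ its mass concentrates in a window of width comparable to the local node spacing around $c$. For the sub-Gaussian model that spacing is $\asymp s/\sqrt t$, and for the Laplace/Freud-type weights it is $\asymp s\sqrt{t}\cdot(R/t)$ in the appropriate Mhaskar--Rakhmanov--Saff scaling; in both cases it is $\le\tau=\theta s$ once $t$ exceeds a constant depending on $\theta,\delta$. Small $t$ would be handled separately by a fixed-degree bump.

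Given this localization, the steps run in order as follows.
\begin{enumerate}
\item Show that the factor $\tau^2-(u-c)^2$ is $\gtrsim\tau^2$ on the effective support of $g^2\,d\mu$. This gives $\int h\,d\mu\gtrsim\tau^2\int g^2\,d\mu\cdot(1-o(1))$, where the negative tail contribution from outside $B$ is controlled by the off-diagonal decay of $K_{t'}(\cdot,c)$.
\item Bound $\int h^2\,d\mu\lesssim\tau^4\int g^4\,d\mu$. Using $g^2\le K_{t'}(u,u)K_{t'}(c,c)$ (Cauchy--Schwarz for the reproducing kernel) and the reproducing identity $\int g^2\,d\mu=K_{t'}(c,c)$, this reduces to $\tau^4K_{t'}(c,c)\sup_{|u-c|\lesssim\tau}K_{t'}(u,u)$ plus tails.
\item Combine the two bounds. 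The ratio becomes $\gtrsim K_{t'}(c,c)/\sup_{\text{near }c}K_{t'}(u,u)$, up to a factor involving $\lambda_{t'+1}$. I then convert it to $\mu(B)$ using the Markov--Stieltjes separation: the Christoffel mass of the nodes in $B$ satisfies $W_B\asymp\mu(B)$ in the bulk, and $\lambda_{t'+1}(u)$ is comparable to the node weight near $u$. This step is where the quantity $W_B$ from \cref{prop:reduction} enters, now matched by an admissible polynomial.
\end{enumerate}

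The main obstacle is the uniform two-sided control of the Christoffel function in the bulk, $\lambda_{t'+1}(u)\asymp\mu$-density$(u)\times$(local node spacing), with constants uniform over $|u|\le(1-\delta)R_{t+1}$, together with off-diagonal decay of $K_{t'}(\cdot,c)$ strong enough to kill the negative tail of $h$. For Gaussian and Laplace weights I would first try to prove this directly, from Plancherel--Rotach/Mhaskar-type inequalities and the Jacobi-matrix representation already used for the largest-root bounds. For a general centered log-concave $\mu$ I expect to have to invoke the Levin--Lubinsky exponential-weight theory (\citealp{LevinLubinsky01}). Its hypotheses are not automatic for arbitrary log-concave densities, and that is precisely why the paper isolates this lemma as the one remaining classical weighted-extremal estimate. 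The dependence of $c_\delta$ on $\delta$ enters only through these bulk constants, which degenerate at the soft edge, consistent with the restriction $|c|\le(1-\delta)R_{t+1}$.
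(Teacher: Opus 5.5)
Your reduction is sound: for $h\le0$ off $B$, the polynomial $q=1-\beta h$ is admissible, and the optimal $\beta$ gives $\mathsf E_t(B)\le1-(\int h\,d\mu)^2/\int h^2\,d\mu$. The gap is the choice of $g$ as a single Christoffel--Darboux needle.

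Since $\int_{B^c}h\,d\mu\le0$, Cauchy--Schwarz gives $(\int h\,d\mu)^2\le(\int_B h\,d\mu)^2\le\mu(B)\int h^2\,d\mu$. Near-equality holds only when $h$ is roughly flat across $B$, so a needle recovers only the mass of its own window.

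Your Step~2 also loses a factor. The correct bound is $\int g^4\,d\mu\le\sup_u g(u)^2\int g^2\,d\mu\le K_{t'}(c,c)^2\sup_u K_{t'}(u,u)$. So the ratio you actually obtain is $\gtrsim1/\sup_{u\approx c}K_{t'}(u,u)\asymp\lambda_{t'}(c)$, not $K_{t'}(c,c)/\sup K_{t'}(u,u)=\Theta(1)$; the latter is impossible for small $\mu(B)$ by the cap above.

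Under your own localization hypothesis this is also the true order. Cauchy--Schwarz on a window of one node spacing $\ell$ gives $\int g^4\,d\mu\gtrsim K_{t'}(c,c)^2/\mu(\text{window})\asymp K_{t'}(c,c)^3$, so the ratio is $\Theta(\lambda_{t'}(c))$.

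Step~3 then equates one node weight with $W_B=\sum_{u_j\in B}\lambda_{t+1}(u_j)$, which is a sum of $\asymp\tau/\ell$ such weights. For the Gaussian at $c=0$, $\lambda_{t'}(0)=\Theta_\theta(\mu(B)/\sqrt t)$. Your $c_\delta$ therefore decays like $t^{-1/2}$, whereas the lemma, and its use in \cref{thm:tight}, needs $c_\delta$ uniform in $t$.

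To capture the whole band, replace $g^2$ by a nonnegative polynomial roughly equal to $(\tau^2-(u-c)^2)^{-1}$ across most of $B$, e.g.\ a weighted sum of squared kernels at all in-band nodes. Controlling the resulting negative off-band tail is then the same leakage problem the paper isolates in \cref{rem:gaussian-wext}, where the lemma is only sketched, not proved. The two routes are dual: the node interpolant $q_0$ gets the value $1-W_B$ for free and must fight for admissibility, while your ansatz gets admissibility for free and must fight for the value.

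Second, your ansatz cannot cover the whole range $|c|\le(1-\delta)R_{t+1}$. Any $q$ satisfies $\int q^2\,d\mu\ge1-2\int(1-q)\,d\mu$. If $q\ge1$ on both rays, then $1-q$ has degree $\le t$ and is $\le0$ off $B$. The Gauss rule with $n=\lceil(t+1)/2\rceil$ nodes is exact in degree $\le t$. Hence $\int(1-q)\,d\mu=\sum_j\lambda_n(u^{(n)}_j)\,(1-q)(u^{(n)}_j)\le0$ whenever no zero of $\pi_n$ lies in $B$, and then $\int q^2\,d\mu\ge1$.

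So your Step~1 fails, for every choice of $g$, once $|c|-\tau>R_n$; for the Gaussian this threshold is about $R_{t+1}/\sqrt2$. For $\delta<1-1/\sqrt2$, any $q$ that works there must be $\ge1$ on one ray and $\le-1$ on the other. The same obstruction applies to the all-$+1$ node interpolant $q_0$ of \cref{prop:reduction}.

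Finally, small $t$ cannot be handled by a fixed-degree bump. Take $t=1$, $c=0$, $\theta\le1$. An admissible $q$ of degree $\le1$ is either a constant of modulus $\ge1$, or $\alpha(u-r_0)$ with $r_0\in B$ and $|\alpha|\ge1/\tau$. Either way $\int q^2\,d\mu\ge\min\{1,s^2/\tau^2\}=1$, so the estimate can only hold for $t\ge t_0(\theta,\delta)$.
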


\Cref{lem:wext} is exactly the weighted-extremal estimate that classical potential theory for exponential weights supplies: in the bulk, Christoffel asymptotics \citep{MateNevaiTotik91,Totik00} give $W_B=\mu(B)(1+o(1))$, while the infinite--finite range inequality \citep{LevinLubinsky01,KriecherbauerMcLaughlin99} controls the polynomial mass beyond the Mhaskar--Rakhmanov--Saff number $a_{t+1}\asymp R_{t+1}$, so the cost stays below $1$ up to the edge. We have not derived the uniform non-asymptotic constant $c_\delta$ from scratch, so we state \cref{lem:wext} as the residual input; granting it, the barrier becomes an exact law.

\begin{remark}[The Gaussian case, made explicit]\label{rem:gaussian-wext}
For $\mu=\mathcal N(0,1)$ the construction is explicit and localizes \cref{lem:wext} to one Hermite estimate. Take the degree-$t$ node interpolant $q_0$ with $q_0(u_j)=\mathbf 1[u_j\notin B]$ on the Gauss--Hermite nodes; by \cref{prop:reduction}, $\int q_0^2\,d\gamma=1-W_B$ exactly, and using the identity $\ell_j=\lambda_{t+1}(u_j)K_t(\cdot,u_j)$ for the fundamental Lagrange polynomials, the \emph{entire} admissibility defect collapses to the Christoffel--Darboux leakage
\[
  \mathrm{LEAK}\ \defeq\ \int_{B^c}\Big|\textstyle\sum_{u_j\in B}\lambda_{t+1}(u_j)\,K_t(x,u_j)\Big|\,d\gamma(x).
\]
A height-$\beta$ lift by the normalized kernel $L_{y_\ast}=K_t(\cdot,y_\ast)/K_t(y_\ast,y_\ast)$ at the inner-edge node $y_\ast$ makes $q=(q_0+\beta L_{y_\ast})/(1+\beta')$ admissible at $L^2(\gamma)$ cost $\lambda_{t+1}(y_\ast)$, giving \cref{lem:wext} once $\mathrm{LEAK}=o(W_B)$. Bounding $\mathrm{LEAK}$ is the crux, and it is subtler than it looks. The naive per-node bound $\mathrm{LEAK}\le\sum_{u_j\in B} \lambda_{t+1}(u_j)\int_{B^c}|K_t(\cdot,u_j)|\,d\gamma$ loses a factor $\Theta(\log t)$, the Hermite Lebesgue constant. One summation by parts over the contiguous in-band block (whose node phases are quasi-arithmetic by Plancherel--Rotach \citep{Szego39,DeiftKrMcLVZ99}, so the $\ell_j$ alternate) replaces the block by its two band-edge Lagrange functions $\tfrac12\ell_M+\tfrac12\ell_m$, each a single node, and yields the explicit \emph{single-power} bound
\[
  |e_0(x)|\ \le\ \frac{2\,\sqrt{K_t(x,x)\,\bar\lambda}}{1+t\,\rho_{\mathrm{eq}}(x)\,\mathrm{dist}(x,B)},
  \qquad \bar\lambda=\max_{u_j\in B}\lambda_{t+1}(u_j).
\]
(A \emph{squared}-power pointwise bound is false: the bulk scaling limit of $e_0$ is the partial Dirichlet kernel $\sum_j\operatorname{sinc}(s-j)$, which decays only like $1/\mathrm{dist}$, with a counterexample at each half-integer.) The single-power bound already suffices, because $\sqrt t$ beats $\log t$: it integrates to $\mathrm{LEAK}\asymp\bar\lambda\log t$ while $W_B\asymp\sqrt t\,\bar\lambda$, so $\mathrm{LEAK}=o(W_B)$ and $c_\delta\to\tfrac12$: the matching upper certificate, hence the exact law for the Gaussian, follows. The one ingredient we do not certify from scratch is the uniform non-asymptotic constant in this single-node leakage sum (the $O(1/t)$ Plancherel--Rotach error summed across the band); pinning it is the last mile to an unconditional law.
\end{remark}

\begin{remark}[Why the edge is intrinsic]
\label{rem:nogo}
An order-$t$ certificate is a lens, and like any lens it has a focal reach. Gauss quadrature replaces $\mu$ by a $(t{+}1)$-node measure matching its first $2t$ moments, and that measure places its outermost node at $R_{t+1}\asymp s\sqrt t$; beyond it every band is swept empty. So an order-$t$ certificate focuses only about $\sqrt t$ standard deviations into the tail, and a corruption parked past that edge is invisible not for want of samples but because the lens does not reach. The edge is intrinsic, not an artifact of the proof: \emph{no} elementary admissible $q$ reaches beyond $|c|=O(s)$. The linear $q=(u-c)/\tau$ has $\int q^2\,d\mu=(s^2+c^2)/\tau^2<1$ only for $|c|=O(s)$, and a hyperbolic $q\propto\cosh(a(u-c))$ is blocked by the Gaussian penalty $e^{2a^2s^2}$ at the same scale. Any admissible $q$ must be $\ge1$ on the entire tail while staying $L^2(\mu)$-small, which forces the polynomial growth $\sim u^t$ tuned to the weight, the weighted-equilibrium polynomial. The reach of degree $t$ is therefore exactly the Mhaskar--Rakhmanov--Saff number, which is why \cref{lem:wext} needs weighted potential theory and not a closed form. This both explains the barrier and pins the only missing ingredient.
\end{remark}

The reduction (\cref{prop:reduction}) together with this residual estimate (\cref{lem:wext}) supplies the matching upper certificate $2t=\Theta((|c|/s)^2)$ to the framework's lower edge, modulo a single weighted-extremal statement that is purely about orthogonal polynomials, decoupled from the learning model, and supported by the cited edge asymptotics. \Cref{sec:results} draws the learning consequence as an exact law. We regard deriving its uniform constant as the natural next target.

\section{Consequences: barriers and a matching algorithm}
\label{sec:results}

The framework of \cref{sec:perspective} has already done the work; the robust-learning statements below are its readings at specific centers. The certificate ceiling is $\Theta(\mu(B))$ inside the node range and $0$ beyond it (\cref{prop:resolution}), and each result is one corner of that dichotomy. The single resource is the \emph{degree} of the Sum-of-Squares (SoS) outlier-removal certificate, the order at which the Christoffel function resolves a corruption. The reweighted-hinge approach of \citet{Talwar20,Shen25} runs on a degree-$2$ certificate (a variance bound on the reweighted sample); we show what raising the degree to $2t$ buys, and what it does not. \Cref{thm:tradeoff} reads the resolution principle off an off-center band, tying the certifiable band mass to the SoS degree through Gauss quadrature and yielding a margin--degree dichotomy that pins down where the $\log(1/\eps)$ in \citeauthor{Shen25}'s margin comes from; \cref{thm:tight} closes it to an exact law. \Cref{thm:degbarrier} realizes the degree-$2$ corner as one explicit instance, on which degree-$2$ reweighting is stuck at the $\eta^{1/2}$ rate while degree-$4$ escapes, exactly as $\lambda_2(c)$ versus $\lambda_3(c)$ predicts in \eqref{eq:resolution-spike}. \Cref{thm:algorithm} is the matching algorithm: a degree-$2t$ program, the order-$(t{+}1)$ empirical Christoffel detector, that traces the achievable frontier $\eta^{1-1/2t}$ with $t=1$ recovering \citeauthor{Shen25} exactly. \Cref{prop:floor} is the information-theoretic floor $\eta/2$ that the algorithm matches in rate. The two barriers (\cref{thm:tradeoff,thm:degbarrier}) are against the certificate method, not against all algorithms; we state the model each time. Throughout, the marginal is the $\gamma$-margin log-concave mixture of \cref{sec:prelim}, $\bar\sigma^2(w)$ is its reweighted directional variance, and $C$ is an absolute constant whose value may change between displays. Full proofs are deferred to \cref{app:proofs}.

\subsection{The margin--degree tradeoff}

The first result reads the certificate ceiling of \cref{sec:perspective-moment} off an off-center band. By \eqref{eq:perspective-md} the strongest band-mass bound a degree-$2t$ certificate can prove is the moment-LP value $\underline m(2t)$; the tradeoff is the contrapositive of the ceiling, once the band clears the node range $\underline m(2t)=0$ and no certificate exists. The scope is the moment/SoS-certificate model of \cref{def:cert}, \citeauthor{Shen25}'s ``our approach'' made formal: it is \emph{not} an information-theoretic or unconditional-SQ lower bound, and a learner that reads the marginal in any other way is outside it.

\begin{restatable}[Margin--degree tradeoff; certificate method]{theorem}{thmtradeoff}
\label{thm:tradeoff}
Let $U=w\cdot x$ for a fixed direction $w$, with $U$ centered log-concave of standard deviation $s$, and let $B=[c-\tau,c+\tau]$ be a band of half-width $\tau=\theta s$, $\theta=\Theta(1)$, at center $c$. Suppose there is a degree-$2t$ SoS certificate that lower-bounds the band mass, a proof of $\Pr_U[U\in B]\ge\rho=\Omega(1)$ whose only input is the first $2t$ moments of $U$ (made precise in \cref{lem:gauss}). Then
\begin{align}\label{eq:tradeoff-deg}
  2t \;=\; \Omega\big((\abs{c}/s)^2\big)
  \quad\text{(sub-Gaussian marginal)},
  \qquad
  2t \;=\; \Omega\big(\abs{c}/s\big)
  \quad\text{(sub-exponential marginal)}.
\end{align}
Consequently, any certificate learner attaining inlier error $\eps$ on the marginal of \cref{sec:prelim} must pay \emph{either} SoS degree
\begin{align}\label{eq:tradeoff-dichotomy}
  t \;=\; \Omega\big(\log(1/\eps)\big)
  \quad\text{(runtime } n^{\Omega(\log(1/\eps))}\text{)},
  \qquad\text{or}\qquad
  \gamma \;=\; \Omega\big(\sqrt{\log(1/\eps)}\,/\sqrt d\big).
\end{align}
The $\log(1/\eps)$ in \citeauthor{Shen25}'s margin is conserved: it is exchangeable against SoS degree, and removable only by leaving the one-shot moment-certificate model.
\end{restatable}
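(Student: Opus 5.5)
The proof runs by contraposition through the Gauss-quadrature dual of \cref{lem:gauss}. By \cref{def:cert}, a degree-$2t$ certificate of $\Pr_U[U\in B]\ge\rho$ is a minorant $p\le\mathbf 1_B$ of degree $\le 2t$ with $\sum_k p_k m_k\ge\rho$. Since the value $\sum_k p_k m_k$ depends only on $m_0,\dots,m_{2t}$, it equals $\int p\,d\nu$ for \emph{every} nonnegative $\nu\in\mathcal M_{2t}(\mu)$. This gives $\rho\le\underline m(2t)$ in \eqref{eq:perspective-md}. I take $\nu$ to be the $(t{+}1)$-node Gauss measure of $\mu$, with nodes the roots $u_1<\dots<u_{t+1}$ of $\pi_{t+1}$ and weights $\lambda_{t+1}(u_j)$. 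It reproduces all moments through order $2t+1$ (\cref{fact:christoffel-mass}). Hence $\rho\le\int p\,d\nu\le\nu(B)=\sum_{u_j\in B}\lambda_{t+1}(u_j)$. If $B$ contains no node, then $\rho\le 0$, which contradicts $\rho=\Omega(1)$. So a certificate can exist only if $B\cap[u_1,u_{t+1}]\neq\emptyset$, i.e.\ $|c|-\tau\le R_{t+1}\defeq\max_j|u_j|$.

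The second step is the largest-root bound. I bound $R_{t+1}$ by the spectral radius of the $(t{+}1)\times(t{+}1)$ Jacobi matrix of $\mu$, whose eigenvalues are exactly the nodes. Gershgorin gives $R_{t+1}\le\max_{k\le t}(|a_k|+b_k+b_{k+1})$, where $a_k,b_k$ are the recurrence coefficients. For sub-Gaussian $\mu$ (Gaussian as the model case, $b_k=s\sqrt k$) this is $\le 2s\sqrt{t+1}$. For sub-exponential $\mu$ I bound the coefficients through moments: $b_1\cdots b_k=\kappa_k^{-1}$ and $|a_k|,b_k\lesssim \|U\|_{L^{2k}}$, with $\|U\|_{L^{2k}}\le Cks$ from the log-concave moment growth, which gives $R_{t+1}=O(st)$. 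This coefficient bound is the step I expect to be the main obstacle. A clean non-asymptotic bound on the recurrence coefficients for an arbitrary log-concave weight is not elementary. I would try the extremal characterization $R_{t+1}=\max\{\int u\,q^2d\mu/\int q^2d\mu:\deg q\le t\}$ together with Hölder and the $L^{2t}$ moment growth, so that only moments are used. Failing that, I would cite the Mhaskar--Rakhmanov--Saff asymptotics of \citet{LevinLubinsky01} for the general case, as the overview flags. Combining the two steps with $\tau=\theta s$ gives $|c|/s\le\theta+O(\sqrt t)$, resp.\ $\theta+O(t)$, which is \eqref{eq:tradeoff-deg} whenever $|c|/s$ exceeds a constant; otherwise the bound is trivial.

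For the dichotomy \eqref{eq:tradeoff-dichotomy}, I argue as follows. A certificate learner with inlier error $\eps$ must certify a $\rho$-dense pancake around all but an $\eps$-fraction of the clean projections. The worst such center is an $\eps$-quantile of $U$, so for log-concave tails $|c|/s=\Theta(\log(1/\eps))$. Plugging this into the sub-exponential case forces $t=\Omega(\log(1/\eps))$, hence runtime $n^{\Omega(t)}$. The only escape is to make the bands at the tail quantiles unnecessary, by having the margin itself separate those points. A margin $\gamma$ in a direction with projected scale $s\asymp\sigma\asymp 1/\sqrt d$ (after the normalization $r=O(\sigma\sqrt d)$ of \cref{sec:prelim}) covers a band reach $|c|/s\lesssim\gamma\sqrt d$. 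Requiring this reach to clear the degree-$O(1)$ node edge at the quantile gives $\gamma=\Omega(\sqrt{\log(1/\eps)}/\sqrt d)$; here I use the sub-Gaussian form of \eqref{eq:tradeoff-deg}, since it is the weaker requirement and applies uniformly. I would state this last translation carefully as a model-level reduction rather than a theorem about all learners, matching the scope declared in the statement.
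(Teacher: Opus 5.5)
Your skeleton matches the paper's proof. It combines weak moment--LP duality with the $(t{+}1)$-node Gauss witness (\cref{lem:duality,lem:witness}), a Jacobi-matrix/Gershgorin bound on $R_{t+1}$ (\cref{lem:root}), and the same quantile substitution and model-level margin translation. The paper phrases the escape as widening $\tau$ and you phrase it as the margin's band reach, but both arrive at $\gamma=\Omega\big((\abs{c^\star}/s)/\sqrt d\big)$ by the same heuristic. The duality step and the Gaussian root bound are fine. Your fallback of citing \citet{LevinLubinsky01} is what the paper itself does; for the Laplace weight the paper simply asserts $a_k,b_k\asymp sk$.

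The step that would fail as you planned it is the sub-exponential root bound ``through moments'' / ``so that only moments are used''. The identity $b_1\cdots b_k=\kappa_k^{-1}\le\sqrt{m_{2k}}$ controls only the geometric mean of the coefficients. Moreover, no argument that uses only moment growth up to order $2t$ can give $R_{t+1}=O(st)$. Take $\mathcal N(0,s^2)$ plus an atom of mass $t^{-2t}$ at $A=Kt^{3/2}s$, with $K\ge2$ fixed and $t$ large compared with $K^2$. This measure still satisfies $\norm{U}_{L^p}\lesssim ps$ for every $p\le 2t$. Yet $q=u^t$ has Rayleigh quotient $\int u\,q^2\,d\mu/\int q^2\,d\mu\approx A$, so $R_{t+1}\ge(1-o(1))\,Kt^{3/2}s\gg ts$.

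The missing input is non-degeneracy of $\mu$: the reverse H\"older inequality for polynomials under a log-concave law, $\norm{q}_{L^4(\mu)}\le C^{\deg q}\norm{q}_{L^2(\mu)}$ (Bobkov; Carbery--Wright). With it your Rayleigh-quotient idea closes at once. For $\deg q\le t$, apply H\"older with exponents $(2t,\tfrac{2t}{2t-1})$, then interpolate $L^r$-norms between $L^2$ and $L^4$ (log-convexity, weight $1/t$ on $L^4$). This gives
\[
  \int\abs{u}\,q^2\,d\mu\;\le\;\norm{U}_{L^{2t}}\,\norm{q}_{L^{4t/(2t-1)}}^2\;\le\;C^2\,\norm{U}_{L^{2t}}\int q^2\,d\mu ,
\]
hence $R_{t+1}\le C^2\norm{U}_{L^{2t}(\mu)}=O(st)$ by \cref{fact:lv}. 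This holds for \emph{every} log-concave $\mu$, and gives $O(s\sqrt t)$ under sub-Gaussian tails and $O(s\,t^{1/\alpha})$ for a Freud-$\alpha$ weight.

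Your coefficient claim $\abs{a_k},b_k\lesssim\norm{U}_{L^{2k}}$ follows the same way, though Gershgorin is then unnecessary. The result is the upper-bound half of \cref{lem:root}, which is all \cref{thm:tradeoff} uses. It needs no recurrence-coefficient asymptotics and is more general than the paper's proof, which handles only the two named Freud weights and otherwise leans on \citet{LevinLubinsky01}.
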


\begin{proof}[Proof sketch]
A degree-$2t$ moment-only certificate of $\Pr_U[U\in B]\ge\rho$ is, by the minorant reduction (\cref{lem:gauss}\,(i)), a polynomial $p$ of degree $\le 2t$ with $p\le\mathbf 1_B$ pointwise together with a degree-$2t$ proof, from the moment axioms alone, that $\int p\,d\nu\ge\rho$. Because the proof reads only the moments, its conclusion must hold for \emph{every} nonnegative measure $\nu$ matching the moments $m_0,\dots,m_{2t}$, $m_k=\E_\mu[U^k]$. Weak moment-LP duality then caps the best certifiable value by $\inf\{\nu(B):\nu\ge0,\ \int u^k\,d\nu=m_k\ (k\le 2t)\}$, so a single matching measure that places \emph{no} mass on $B$ kills the certificate. Gauss quadrature (\cref{lem:gauss}\,(ii)) supplies one: the $(t{+}1)$-node Gauss measure matches all $2t$ moments and is supported only on the $t{+}1$ zeros $u_1,\dots,u_{t+1}$ of the orthonormal polynomial $\pi_{t+1}$. With $R_{t+1}\defeq\max_i\abs{u_i}$ the largest absolute zero, a band with $\abs{c}-\tau>R_{t+1}$ contains no node, hence carries zero Gauss mass, hence admits no certificate. The largest-root bound (\cref{lem:root}) gives $R_{t+1}=O(s\sqrt t)$ in the sub-Gaussian case and $R_{t+1}=O(st)$ in the sub-exponential case; pushing the band out to $\abs{c}-\theta s>R_{t+1}$ yields \eqref{eq:tradeoff-deg} by contraposition. The dichotomy \eqref{eq:tradeoff-dichotomy} follows by setting $c$ to the worst $\beta$-quantile $\abs{c^\star}$ with $\beta\lesssim\eps$ and reading off the log-concave tail (\cref{lem:quant}). The full proof, including the geometric step relating a standardized half-width to the physical margin, is in \cref{app:proofs}.
\end{proof}

\begin{remark}[Why \citeauthor{Shen25} pays $\log$, not $\sqrt{\log}$]\label{rem:why-log}
The two exponents in \eqref{eq:tradeoff-deg} feed the \emph{same} $\log(1/\eps)$ floor in \eqref{eq:tradeoff-dichotomy}, because the quantile and the largest root scale together. \citeauthor{Shen25}'s marginal is sub-exponential (Laplace-type tails, Freud exponent $\alpha=1$): the $\eps$-quantile is $\abs{c^\star}=\Theta(s\log(1/\eps))$ and the largest root is $R_{t+1}=O(st)$, so covering the $\eps$-tail forces $t=\Omega(\log(1/\eps))$ and the margin pays $\log(1/\eps)$ directly. The $\sqrt{\cdot}$ appears only for genuinely sub-Gaussian marginals. This is the structural reason behind the limitation \citeauthor{Shen25} records for the approach.
\end{remark}

\begin{remark}[The barrier is elementary; tightness reduces to one estimate]\label{rem:status-tradeoff}
The spine is self-contained: weak duality is elementary (\cref{app:proofs}), and the two named cases of \cref{lem:root} (sub-Gaussian and sub-exponential) are proved by Jacobi-matrix eigenvalues and Gershgorin, with no black box. The fully general log-concave case of \cref{lem:root} (Freud exponent $\alpha\in(1,2)$) invokes the exponential-weight asymptotics of \citet{LevinLubinsky01}; we flag this rather than pretend it is elementary. The matching \emph{upper} certificate that makes \eqref{eq:tradeoff-deg} an equality is supplied by \cref{thm:tight} below, through the weighted-Chebyshev reduction of \cref{sec:perspective} (\cref{prop:reduction}) and modulo the single weighted-extremal estimate \cref{lem:wext}.
\end{remark}

% claim: R01
\begin{remark}[Smoothing does not help, and where the barrier stops]\label{rem:smoothing}
One natural escape from \cref{thm:tradeoff} is to certify a \emph{smoothed} band instead of the hard indicator. It is not an escape. Any polynomial surrogate $\psi$ with $\deg\psi\le 2t$ is a moment functional, so it takes the same value on $\mu$ and on the $(t{+}1)$-node Gauss measure $\nu$ of \cref{lem:gauss}\,(ii): $\int\psi\,d\nu=\int\psi\,d\mu$. For a band avoiding the $t{+}1$ nodes, $\nu$ reports mass $0$ while agreeing with $\mu$ on every such functional, so no smoothed degree-$2t$ certificate can lower-bound that band's mass either. The node-avoidance qualifier is the entire content, not decoration: for a band \emph{containing} the nodes a positive low-degree certificate does exist, e.g.\ $\psi(u)=1-u^2/9\le\mathbf 1_{[-3,3]}$ certifies $\sigma([-3,3])\ge 8/9$ for every $\sigma$ matching the first two moments of $N(0,1)$. Note also that the witness is atomic: what is barred is a certificate valid on the \emph{whole} moment class $\mathcal M_{2t}(\mu)$, which is exactly the certificate model of \cref{def:cert}; a learner able to rule the atomic explanation out by other means is outside that model.
\end{remark}

\begin{corollary}[Freud-exponent family; the margin exponent is $1/\alpha$]
\label{cor:alpha-family}
Suppose the clean marginal has a Freud-type tail $e^{-\abs{u/s}^{\alpha}}$ with exponent $\alpha\ge1$ ($\alpha=2$ sub-Gaussian, $\alpha=1$ sub-exponential). Then the dichotomy of \cref{thm:tradeoff} reads: any certificate learner of inlier error $\eps$ must pay SoS degree $t=\Omega(\log(1/\eps))$ \emph{or} margin $\gamma=\Omega\big((\log(1/\eps))^{1/\alpha}/\sqrt d\big)$. The degree floor is \emph{universal} in $\alpha$; the margin exponent $1/\alpha$ is the reciprocal Freud exponent, equal to the growth rate of the equilibrium-measure (MRS) number $a_t\asymp s\,t^{1/\alpha}$ (\cref{fact:freud}), which is where the orthonormal-polynomial zeros end. The cases $\alpha=2,1$ recover \eqref{eq:tradeoff-dichotomy} and the $\log(1/\eps)$ margin \citet{Shen25} records.
\end{corollary}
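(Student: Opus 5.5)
The plan is to rerun the proof of \cref{thm:tradeoff} with the two named tail classes replaced by the one-parameter Freud family. There are three ingredients, and for each I track only how it scales in $\alpha$. The engine stays the same: \cref{lem:gauss} together with Gauss quadrature. A degree-$2t$ moment-only certificate for a band $B$ that contains no zero of $\pi_{t+1}$ cannot exist, because the $(t{+}1)$-node Gauss measure matches $m_0,\dots,m_{2t}$ and assigns $B$ mass zero. Every estimate therefore reduces to comparing the band center $\abs{c}$ with the largest zero $R_{t+1}$.

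\textbf{Largest zero.} For a weight with Freud tail $e^{-\abs{u/s}^\alpha}$, I would invoke \cref{fact:freud}, the MRS asymptotics of \citet{LevinLubinsky01}. It gives $R_{t+1}\le a_{t+1}(1+o(1))$ with $a_t\asymp s\,t^{1/\alpha}$. For $\alpha=2$ and $\alpha=1$ this reproduces the elementary bounds of \cref{lem:root}, namely $s\sqrt t$ and $st$, which is a useful consistency check. Consequently, whenever $\abs{c}-\theta s> C s\,t^{1/\alpha}$, the band lies beyond the node edge and no certificate exists. Taking the contrapositive, any certificate forces $t=\Omega\big((\abs{c}/s)^{\alpha}\big)$.

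\textbf{Quantile.} Next I place $c$ at the worst $\eps$-quantile of the projected marginal, exactly as in \cref{lem:quant}. The tail $\Pr[\abs{U}>u]\approx e^{-(u/s)^\alpha}$ yields $\abs{c^\star}=\Theta\big(s(\log(1/\eps))^{1/\alpha}\big)$. Substituting into the degree bound gives $t=\Omega\big((\abs{c^\star}/s)^\alpha\big)=\Omega(\log(1/\eps))$. The powers $\alpha$ and $1/\alpha$ cancel, and this cancellation is what makes the degree floor universal in $\alpha$.

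\textbf{Margin alternative.} If the learner avoids this degree, it must instead make the band it certifies lie within the node range. Following the geometric step of \cref{thm:tradeoff}, this means the margin has to absorb the off-center distance. In the standardized scale of the projection, where $s\asymp\sigma$ and the relation $\gamma\sqrt d$ versus standardized half-width is taken from the appendix, the required width is $\abs{c^\star}/s\asymp(\log(1/\eps))^{1/\alpha}$. Hence $\gamma=\Omega\big((\log(1/\eps))^{1/\alpha}/\sqrt d\big)$. Setting $\alpha=2$ and $\alpha=1$ recovers \eqref{eq:tradeoff-dichotomy} and the full-log margin of \cref{rem:why-log}.

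\textbf{Main obstacle.} I expect the hardest step to be the non-asymptotic zero bound $R_{t+1}\lesssim s\,t^{1/\alpha}$, uniform over the Freud family for $\alpha\in(1,2)$. Here no Jacobi-matrix and Gershgorin argument is available, so I would rely on \cref{fact:freud} as a black box, in the same way \cref{rem:status-tradeoff} does. Before doing so, I would try a moment-growth bound on the Jacobi coefficients: for a Freud weight, $b_k\asymp s\,k^{1/\alpha}$, and Gershgorin then gives $R_{t+1}\le 2\max_{k\le t}b_k=O(s\,t^{1/\alpha})$. That leaves a single estimate on the recurrence coefficients, which I would cite from Freud's conjecture as proved by Lubinsky–Mhaskar–Saff.
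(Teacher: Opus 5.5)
Your proposal is correct and follows the paper's proof of this corollary essentially step for step. \cref{fact:freud} gives $R_{t+1}\asymp s\,t^{1/\alpha}$, so the no-node condition forces $2t=\Omega((\abs{c}/s)^{\alpha})$; the $\eps$-quantile $\abs{c^\star}=\Theta(s(\log(1/\eps))^{1/\alpha})$ then cancels the powers to the universal floor $t=\Omega(\log(1/\eps))$; and the geometric correspondence of \cref{lem:quant} turns the escape condition into $\gamma=\Omega(\abs{c^\star}/(s\sqrt d))$. Your Gershgorin fallback via the Freud-conjecture asymptotics $b_k\asymp s\,k^{1/\alpha}$ is the same mechanism the paper uses for the general-$\alpha$ clause of \cref{lem:root}, so it is a more explicit way to discharge that black box rather than a different route.
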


\begin{proof}
By \cref{fact:freud} the largest zero is $R_{t+1}=(1+o(1))\,a_{t+1}\asymp s\,t^{1/\alpha}$, so the no-node condition $\abs c-\theta s>R_{t+1}$ of \cref{thm:tradeoff} forces $2t=\Omega((\abs c/s)^{\alpha})$. The $\eps$-quantile of the $e^{-\abs{u/s}^{\alpha}}$ tail is $\abs{c^\star}=\Theta\big(s(\log(1/\eps))^{1/\alpha}\big)$, whence $2t=\Omega((\abs{c^\star}/s)^{\alpha})=\Omega(\log(1/\eps))$ at fixed standardized half-width; and the geometric correspondence of \cref{lem:quant} turns the escape condition into $\gamma=\Omega(\abs{c^\star}/(s\sqrt d))=\Omega\big((\log(1/\eps))^{1/\alpha}/\sqrt d\big)$.
\end{proof}

So far the tradeoff is a lower bound on degree. The weighted-Chebyshev reduction of \cref{sec:perspective} (\cref{prop:reduction}) upgrades it to a \emph{matching} ceiling, so the threshold is not merely necessary but exact, conditional on the single residual orthogonal-polynomial estimate \cref{lem:wext}.

\begin{theorem}[Tight margin--degree threshold; conditional on \cref{lem:wext}]
\label{thm:tight}
Assume \cref{lem:wext}. For a sub-Gaussian log-concave marginal, the degree-$2t$ certificate ceiling for a band of half-width $\tau=\Theta(s)$ at center $c$ is $\Theta(\mu(B))$ for $\abs{c}\le(1-\delta)R_{t+1}$ and $0$ for $\abs{c}>R_{t+1}$ (\cref{thm:tradeoff}). Hence the threshold of \cref{thm:tradeoff} is tight: \( 2t=\Theta\big((\abs{c}/s)^2\big). \)
\end{theorem}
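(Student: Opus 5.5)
The plan is to assemble the two halves of the dichotomy and then read off the threshold. The lower half, ceiling $0$ for $\abs c>R_{t+1}$, is already supplied by \cref{thm:tradeoff}. Its engine is the $(t{+}1)$-node Gauss measure of \cref{lem:gauss}\,(ii): it lies in $\mathcal M_{2t}(\mu)$ and puts no mass on a node-free band, so weak moment-LP duality forces $\underline m(2t)=0$. Strictly, node-freeness needs $\abs c-\tau>R_{t+1}$. Since $\tau=\Theta(s)$ and $R_{t+1}\asymp s\sqrt t$, this $\tau$ shift is absorbed into constants once $t$ is large. For small $t$ the statement is read with $\abs c>R_{t+1}+\tau$, which I will state explicitly.

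The new content is the upper half: a positive ceiling of order $\mu(B)$ for $\abs c\le(1-\delta)R_{t+1}$. I will use \cref{prop:reduction}. Any $q\in\R[u]_{\le t}$ with $\abs q\ge1$ off $B$ gives the minorant $p=1-q^2\le\mathbf 1_B$ of degree $2t$. This is a valid certificate in the sense of \cref{def:cert}, because $\Esudo[p]=\sum_k p_k m_k$ is common to all consistent $\Esudo$, and so the ceiling is at least $1-\mathsf E_t(B)$. Invoking \cref{lem:wext} then gives ceiling $\ge c_\delta\,\mu(B)$. The matching upper bound is trivial: any minorant satisfies $\E_\mu[p]\le\mu(B)$. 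Hence the ceiling is $\Theta(\mu(B))$ inside $(1-\delta)R_{t+1}$.

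To convert this into the degree law $2t=\Theta((\abs c/s)^2)$, I need two-sided control of the largest root, $R_{t+1}=\Theta(s\sqrt t)$, for sub-Gaussian log-concave $\mu$. The upper bound $R_{t+1}\le 2s\sqrt{t+1}$ is \cref{lem:root}, via Jacobi matrix and Gershgorin. For the lower bound, I would use the Rayleigh quotient of the Jacobi matrix on a suitable test vector, or equivalently \cref{fact:freud} with $\alpha=2$, which gives $R_{t+1}\asymp a_{t+1}\asymp s\sqrt t$.

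With these bounds, necessity follows. A positive certificate with $\rho=\Omega(1)$ requires $\abs c\lesssim R_{t+1}+\tau$, hence $2t\gtrsim(\abs c/s)^2$. For sufficiency, choose $t=C(\abs c/s)^2$ with $C$ large enough that $\abs c\le(1-\delta)R_{t+1}$. The upper half then certifies mass $c_\delta\mu(B)$, which is $\Omega(1)$ relative to the target band density. Together these give the exact threshold.

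The main obstacle is not the logical assembly but the uniformity of constants. \cref{lem:wext} delivers $c_\delta$ for a fixed $\delta$, and the lower root bound must hold with constants independent of $t$, so that the choice $t=C(\abs c/s)^2$ genuinely lands inside $(1-\delta)R_{t+1}$ for all large $\abs c/s$. I would fix $\delta=1/2$ at the outset and track only absolute constants. A second subtlety is that "$\Theta(\mu(B))$" is meaningful only relative to $\mu(B)$, which itself decays in the tail. The theorem's claim is about the ceiling as a multiple of $\mu(B)$, not about reaching an absolute $\rho$, and I would state this reading explicitly to avoid conflating it with the $\rho=\Omega(1)$ hypothesis of \cref{thm:tradeoff}.
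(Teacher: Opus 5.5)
Your proposal is correct (conditional on \cref{lem:wext}, exactly as the theorem is) and follows the paper's own proof. The paper likewise uses \cref{prop:reduction} plus \cref{lem:wext} for the $\Theta(\mu(B))$ side, the Gauss-quadrature witness of \cref{thm:tradeoff} for the zero side, and $R_{t+1}=\Theta(s\sqrt t)$ to make the two edges meet. Your extra care is warranted on two points. Node-freeness really needs $\abs{c}-\tau>R_{t+1}$, not just $\abs{c}>R_{t+1}$. And the lower root bound has to come from the Freud-weight asymptotics of \cref{fact:freud}, not from sub-Gaussianity and log-concavity alone: for a compactly supported log-concave law such as the uniform, every zero lies inside the support and $R_{t+1}$ stays bounded.
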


\begin{proof}
Upper: the reduction \cref{prop:reduction} and the edge estimate \cref{lem:wext} give a minorant of value $\ge c_\delta\,\mu(B)$ for $\abs{c}\le(1-\delta)R_{t+1}$. Lower: \cref{thm:tradeoff} gives ceiling $0$ once $\abs{c}>R_{t+1}$, i.e.\ once $2t<((\abs{c}/s)/C_1)^2$. With $R_{t+1}=\Theta(s\sqrt t)$ (\cref{lem:root}) the two meet at $2t=\Theta((\abs{c}/s)^2)$.
\end{proof}

\begin{remark}[The equilibrium-reach reading]
\label{rem:reach}
Re-reading the results above, one number governs them: how far into the tail a degree-$t$ certificate can see. We name it only to thread them; the identity is classical and already noted in \cref{rem:nogo}. The reach is the largest zero $R_{t+1}$ of $\pi_{t+1}$ (\cref{lem:witness}: a band beyond it carries certifiable mass $0$), which by \cref{fact:freud} is $(1+o(1))\,a_t$, the Mhaskar--Rakhmanov--Saff number, i.e.\ the support edge of the order-$t$ weighted-equilibrium measure of the marginal's weight \citep{LevinLubinsky01}. Off this one edge:
\begin{itemize}
\item \emph{Unconditional lower edge.} A mass-$\eta$ corruption at the $\eta$-tail is unresolvable below the order at which $a_t$ reaches the $\eta$-quantile, giving $2t=\Omega((\abs{c}/s)^\alpha)$ for a Freud weight $e^{-\abs{u/s}^\alpha}$ (\cref{thm:tradeoff,cor:alpha-family}), with margin exponent the reciprocal Freud exponent $1/\alpha$. This side is elementary: Gauss quadrature and the largest-root bound \cref{lem:root}, proved by hand for the sub-Gaussian and sub-exponential cases. Weighted potential theory (equilibrium measures, Levin--Lubinsky) enters only for general $\alpha$ and for the matching edge below.
\item \emph{Matching upper edge, tight but conditional.} Granting the weighted-extremal estimate \cref{lem:wext} (and unconditionally only for the Gaussian, \cref{rem:gaussian-wext}), the bound is tight: $2t=\Theta((\abs{c}/s)^2)$ (\cref{thm:tight}).
\item \emph{The sharp corner.} From $t=1$ to $t=2$ the reach jumps from $a_2$ to $a_3$, which is the degree-$2$/degree-$4$ gap of \cref{fact:gap}.
\end{itemize}
Throughout, ``reach'' is the \emph{one-dimensional, per-direction} certificate reach for a fixed projection $U=w\cdot x$; whether this scalar law lifts to the $d$-dimensional reweighting program over all directions jointly is open for $t\ge2$ (\cref{rem:soslb,sec:open}).
\end{remark}

\subsection{The degree-2 outlier barrier}

The degree-$2$/degree-$4$ gap (\cref{fact:gap}) is a statement about the marginal; here it is forced on a learner. At the $\eta$-tail center the certificate ceiling is $\lambda_2(c)=\Theta(\eta)$ at degree $2$ and $\lambda_3(c)=\Theta(\eta^2)$ at degree $4$ (\cref{fact:gap}, and \cref{ex:hiding} for the moment audit), so any degree-$2$ outlier program must leave the spike that a degree-$4$ program removes. The following instance makes the gap a concrete rate separation: degree $2$ cannot beat $\eta^{1/2}$ on it, while degree $4$ does. Where the tradeoff was about which bands a certificate can see, this is about which outliers a low-degree program can remove.

The relevant quantity is the dirty contribution to the hinge gradient,
\[
  \linsumnorm(q\circ\Sdirty)=\sup_{\norm{w}\le1}\sum_{i\in\Sdirty}q_i\abs{w\cdot x_i},
\]
where $q\in[0,1]^n$ are the outlier-removal weights and $\Sdirty$ is the corrupted set. The barrier is \emph{label-oblivious in expectation}: it holds for any reweighting computed without ground-truth labels, on average over which cluster points the adversary corrupts. The literal ``for all feasible $q$'' fails once the removal budget $\xi$ reaches $\eta$ (the polytope then contains the weight that zeros the spike); we state the honest in-expectation form.

\begin{restatable}[Degree-2 outlier barrier; label-oblivious]{theorem}{thmdegbarrier}
\label{thm:degbarrier}
Let $\eta\ge\Omega(1/d)$. There is a clean log-concave mixture with covariance $\Sigma\preceq\sigma^2 I$ and an $\eta$-fraction malicious set $\Sdirty$ such that every \emph{label-oblivious degree-2 (variance) reweighting}---any $q\in[0,1]^n$ with $\sum_i q_i\ge(1-\xi)n$ feasible for the directional-variance budget $\frac1n\sum_i q_i(w\cdot x_i)^2\le\bar\sigma^2$ for all $w$ (here $\bar\sigma^2=\Theta(\sigma^2)$ is the \emph{empirical} directional variance of the corrupted sample, which a degree-2 program cannot certify below without detecting the spike), produced by a procedure not given the labels---leaves, in expectation over the adversary's choice of which cluster points are corrupted,
\begin{align}\label{eq:degbarrier}
  \E\big[\linsumnorm(q\circ\Sdirty)\big]
  \;=\;\Omega\big(\eta^{1/2}\,n\,\bar\sigma\big).
\end{align}
In contrast, the degree-$2t$ program of \cref{thm:algorithm} achieves $\linsumnorm(q\circ\Sdirty)/n\le C\bar\sigma\,\eta^{1-1/2t}$; in particular degree $4$ ($t=2$) reaches $\eta^{3/4}<\eta^{1/2}$. Degree-$2$ outlier removal is stuck at the $\eta^{1/2}$ rate, i.e.\ $\eta_0\lesssim(\gamma\rho/\bar\sigma)^2$.
\end{restatable}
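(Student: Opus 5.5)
The plan is to realize the moment-cloaked spike of the technical overview (b) and then run a symmetry/exchangeability argument over the adversary's random choice of which cluster points to corrupt.

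\emph{Construction.} Take the clean marginal to be a balanced mixture. The bulk component is isotropic log-concave with covariance $\tfrac{\sigma^2}{2}I$. Add two small log-concave components of width $O(\sigma)$ centered at $\pm(\sigma/\sqrt{\eta})\,e_1$, each of clean weight $\eta$. The mixture weights are realized with $K=\Theta(1/\eta)$ equal components, which is where $\eta\ge\Omega(1/d)$ enters: it keeps the component means within the radius budget $r=O(\sigma\sqrt d)$. The mixture covariance in direction $e_1$ is then $\Theta(\sigma^2)$, so after rescaling we have $\Sigma\preceq\sigma^2 I$. Label everything by a margin halfspace orthogonal to $e_1$.

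The adversary draws its $\eta n$ dirty points from the same two spike clusters, so in $e_1$ they are indistinguishable from the genuine spike points. A cluster then holds about $3\eta n/2$ points, of which $\eta n/2$ are dirty; the two clusters together hold $3\eta n$ points, $\eta n$ of them dirty. Compute the empirical directional variance $\bar\sigma^2$ of the corrupted sample and check that $\bar\sigma^2=\Theta(\sigma^2)$: the dirty spike adds $\eta\cdot\sigma^2/\eta=\sigma^2$ in direction $e_1$, as in \cref{ex:hiding}. With this $\bar\sigma^2$ the all-ones weight is feasible, so the feasible set is nonempty and is defined purely in terms of the unlabeled multiset $\{x_i\}$.

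\emph{Exchangeability step.} Any label-oblivious procedure outputs $q$ as a (possibly randomized) function of the unlabeled multiset together with indices. The adversary picks uniformly at random which of the spike-location points are dirty, and conditioned on the multiset this choice is uniform. Hence for each spike point $i$ we have $\Pr[i\in\Sdirty\mid\text{multiset}]=1/3$, independently of $q$. This gives
\[
\E\Bigl[\sum_{i\in\Sdirty}q_i\abs{e_1\cdot x_i}\Bigr]=\tfrac13\,\E\Bigl[\sum_{i\in\text{spike}}q_i\abs{e_1\cdot x_i}\Bigr].
\]
The budget $\sum_i q_i\ge(1-\xi)n$ with $\xi\le\eta$ lets $q$ drop at most $\xi n\le\eta n$ units of mass. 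The spike holds $3\eta n$ points at distance $\approx\sigma/\sqrt\eta$, so at least $2\eta n$ units of spike weight survive, and the surviving spike mass is $\ge 2\eta n\cdot\sigma/\sqrt\eta$. Taking $w=e_1$ in $\linsumnorm$, the dirty contribution is therefore at least
\[
\tfrac13\cdot 2\sqrt\eta\,n\sigma=\Omega\bigl(\eta^{1/2}n\bar\sigma\bigr).
\]
This argument must not require $\xi<\eta$ strictly; a removal budget up to any constant multiple below $3\eta$ works. Concentration of the cluster counts (Chernoff, with $n\gg 1/\eta$) handles the fluctuations.

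\emph{Consequences and main obstacle.} The contrast clause is quoted from \cref{thm:algorithm}, and $\eta^{3/4}<\eta^{1/2}$ is immediate. The statement $\eta_0\lesssim(\gamma\rho/\bar\sigma)^2$ follows because the hinge argument requires the dirty gradient to be at most $\gamma\rho n$ (up to constants), which cannot hold once $\sqrt\eta\,\bar\sigma\gtrsim\gamma\rho$.

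The main obstacle I expect is making the exchangeability step rigorous while the instance remains a genuine $\gamma$-margin log-concave mixture with the hinge-side labels consistent. The spike clusters have width $O(\sigma)$, so ``same location'' only means identically distributed. To get exact conditional symmetry I would let dirty points be fresh i.i.d.\ draws from the cluster laws carrying the clean labels, so that the joint law of the unlabeled sample is exchangeable between dirty and genuine spike points. The remaining delicacy is the variance bookkeeping: the genuine spike already contributes $2\eta\cdot\sigma^2/\eta$ to the $e_1$-variance, so the bulk must be shrunk accordingly. I would verify this by computing $\bar\sigma^2$ explicitly rather than relying on the ``exactly one unit'' heuristic.
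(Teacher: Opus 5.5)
Your argument for the displayed bound \eqref{eq:degbarrier} is the paper's own. It uses the two-cluster cloak of \cref{lem:cloak}, exchangeability of genuine and dirty spike points under a label-oblivious $q$, and the count $\sum_{i\in\text{spike}}q_i\ge N_s-\xi n$, which leaves a $\tfrac13$ dirty share of at least $3\eta n-\xi n$ surviving spike weight. That part is sound. Your version (dirty points as fresh draws from the cluster laws carrying clean labels, explicit variance bookkeeping) is more careful than the paper's.

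The gap is the contrast clause, i.e.\ the degree-specific content of the theorem. Your lower bound never uses the variance constraint. It holds for \emph{every} label-oblivious $q\in[0,1]^n$ with $\sum_iq_i\ge(1-\xi)n$ and $\xi\le(3-\Omega(1))\eta$, of any degree. The degree-$4$ program of \cref{alg:main} is such a procedure, since its reweighting step reads only the $x_i$. So at every budget where your bound applies, it cannot achieve $\linsumnorm(q\circ\Sdirty)/n\le C\bar\sigma\eta^{3/4}$ once $\eta$ is small. Since that H\"older bound holds for every feasible $q$, the program must be infeasible on your instance. You can see this directly. The $\approx2\eta n$ genuine spike points alone give $\tfrac1n\sum_i(e_1\cdot x_i)^4\approx2\eta(\sigma/\sqrt\eta)^4=2\sigma^4/\eta$, against the allowance $(2C)^4\bar\sigma^4=\Theta(\sigma^4)$, and bringing this down costs budget $\approx3\eta$. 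Even the clean-indicator weight is infeasible, so your clean mixture violates \cref{lem:hyper} by a factor $\Theta(1/\eta)$. The failing step in that proof is $(\beta_j\cdot w)^{2t}\le\bar\sigma^2(w)^t$: the total-variance identity controls only the average of $(\beta_j\cdot w)^2$, not a single component at distance $\sigma/\sqrt\eta$. Hence ``quote \cref{thm:algorithm}'' does not deliver the contrast on this instance. What you have proved is a barrier against all label-oblivious reweightings at budget below $3\eta$, not a degree-$2$ versus degree-$4$ separation.

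Within this mechanism, retuning parameters cannot fix this. For the exchangeability count to leave $\Omega(\eta n)$ dirty weight at a budget $\xi\ge\eta$ (the budget the clean-indicator weight needs), the clean law must put mass $\Omega(\eta)$ alongside the dirty points. Variance-invisibility together with $\linsumnorm=\Omega(\sqrt\eta\,n\sigma)$ places those points at distance $\Theta(\sigma/\sqrt\eta)$. The clean fourth moment in that direction is then $\Omega(\sigma^4/\eta)$, and no degree-$4$ program with $\bar\sigma=\Theta(\sigma)$ is feasible. The paper's ``degree $4$ escapes'' paragraph has the same defect: it counts the dirty fourth moment $\sigma^4/\eta$ and overlooks the twice larger clean one.

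A genuinely degree-specific separation must be phrased about certificates rather than procedures. For example, take a $4$-hypercontractive clean law (say Gaussian) with no mass near $\pm(\sigma/\sqrt\eta)e_1$, and place the dirty points there. Then $q\equiv1$ is degree-$2$ feasible with $\linsumnorm=\Theta(\sqrt\eta\,n\bar\sigma)$, while every degree-$4$ feasible $q$ obeys the $\eta^{3/4}$ bound. The ``every label-oblivious procedure'' form is then lost, because thresholding $\abs{e_1\cdot x_i}$ is label-oblivious and removes the spike.

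Two smaller points. First, your derivation of $\eta_0\lesssim(\gamma\rho/\bar\sigma)^2$ only shows that the sufficient condition of the hinge analysis fails (as in the paper), not that the learner fails. Second, a homogeneous halfspace with normal orthogonal to $e_1$ gives no positive margin on your instance, since the bulk and both spike clusters are centered on its boundary. The displayed bound needs no labels, so that sentence can simply go.
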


\begin{figure}[t]
\centering
\begin{tikzpicture}[>=Stealth,line cap=round,font=\scriptsize]
\begin{scope}[shift={(0,0)}]
  \def\axL{-3.45}
  \def\axR{3.45}
  \def\a{2.30}
  \def\bh{0.95}
  \def\bw{1.00}

  \node[anchor=south,font=\footnotesize] at (0,2.25) {\textbf{(a) the construction}};

  \draw[black!55,thick]
    plot[smooth,domain=-3.32:-1.30,samples=50] (\x,{\bh*exp(-((\x+\a)*(\x+\a))/(2*\bw*\bw))});
  \draw[black!55,thick]
    plot[smooth,domain=1.30:3.32,samples=50]  (\x,{\bh*exp(-((\x-\a)*(\x-\a))/(2*\bw*\bw))});
  \node[black!55,anchor=south,inner sep=1pt] at (-\a,\bh+0.00) {$\mathcal{C}_-$};
  \node[black!55,anchor=south,inner sep=1pt] at ( \a,\bh+0.00) {$\mathcal{C}_+$};

  \draw[->,black] (\axL,0) -- (\axR+0.18,0) node[below,anchor=north west,inner sep=2pt] {$e_1$};

  \draw[black] (-\a,-0.09) -- (-\a,0.09);
  \draw[black] ( \a,-0.09) -- ( \a,0.09);
  \node[anchor=north,inner sep=2pt] at (-\a,-0.12) {$-\sigma/\sqrt{\eta}$};
  \node[anchor=north,inner sep=2pt] at ( \a,-0.12) {$+\sigma/\sqrt{\eta}$};

  \foreach \x in {-2.74,-2.46,-2.04,-1.76}{\draw[black] (\x,0) circle (1.5pt);}
  \foreach \x in {-2.60,-2.25,-1.90}{\fill[black] (\x,0) circle (1.5pt);}
  \foreach \x in {1.76,2.04,2.46,2.74}{\draw[black] (\x,0) circle (1.5pt);}
  \foreach \x in {1.90,2.25,2.60}{\fill[black] (\x,0) circle (1.5pt);}

  \node[draw=black,fill=white,rounded corners=2pt,align=center,inner sep=2.5pt]
        (ms) at (0,1.18)
        {label-oblivious program\\sees \emph{one} multiset};
  \draw[->,black!60] (ms.south) .. controls (0.45,0.55) and (1.55,0.42) .. (2.05,0.10);
  \draw[->,black!60] (ms.south) .. controls (-0.45,0.55) and (-1.55,0.42) .. (-2.05,0.10);

  \draw[black] (-3.30,-0.92) circle (1.5pt);
  \node[anchor=west,inner sep=2.5pt] at (-3.18,-0.92) {clean cluster $\mathcal{C}_\pm$};
  \fill[black] (-3.30,-1.24) circle (1.5pt);
  \node[anchor=west,inner sep=2.5pt] at (-3.18,-1.24) {dirty $\Sdirty$ ($\eta$-fraction)};
\end{scope}

\begin{scope}[shift={(7.7,0)}]
  \def\barw{0.42}
  \def\g{0.14}
  \def\base{-0.05}

  \node[anchor=south,font=\footnotesize] at (1.85,2.25) {\textbf{(b) the moment ledger}};
  \draw[black] (-0.45,\base) -- (4.05,\base);

  \def\mtwo{1.00}
  \def\pxA{0.10}
  \fill[black!10] (\pxA,\base) rectangle (\pxA+\barw,\base+\mtwo);
  \draw[black]    (\pxA,\base) rectangle (\pxA+\barw,\base+\mtwo);
  \fill[black!22] (\pxA+\barw+\g,\base) rectangle (\pxA+2*\barw+\g,\base+\mtwo);
  \draw[black]    (\pxA+\barw+\g,\base) rectangle (\pxA+2*\barw+\g,\base+\mtwo);
  \draw[black!55,densely dashed] (\pxA,\base+\mtwo) -- (\pxA+2*\barw+\g,\base+\mtwo);
  \node[anchor=south,align=center,inner sep=1pt] at (\pxA+\barw+\g/2,\base+\mtwo+0.05)
        {$m_2=\sigma^2$\\(unchanged)};
  \node[anchor=north,inner sep=1.5pt] at (\pxA+\barw/2,\base-0.02) {$c$};
  \node[anchor=north,inner sep=1.5pt] at (\pxA+1.5*\barw+\g,\base-0.02) {$c{+}d$};

  \def\mfA{0.52}
  \def\mfB{1.92}
  \def\pxB{2.55}
  \fill[black!10] (\pxB,\base) rectangle (\pxB+\barw,\base+\mfA);
  \draw[black]    (\pxB,\base) rectangle (\pxB+\barw,\base+\mfA);
  \fill[black!22] (\pxB+\barw+\g,\base) rectangle (\pxB+2*\barw+\g,\base+\mfB);
  \draw[black]    (\pxB+\barw+\g,\base) rectangle (\pxB+2*\barw+\g,\base+\mfB);
  \draw[<->,black] (\pxB+\barw+\g/2,\base+\mfA+0.06) -- (\pxB+\barw+\g/2,\base+\mfB-0.06);
  \node[anchor=south west,inner sep=1pt] at (\pxB+\barw+\g/2+0.02,{\base+(\mfA+\mfB)/2}) {$\times\,1/\eta$};
  \node[anchor=south,align=center,inner sep=1pt] at (\pxB+\barw+\g/2,\base+\mfB+0.05)
        {$m_4$: $\Theta(\sigma^4){\to}\sigma^4/\eta$};
  \node[anchor=north,inner sep=1.5pt] at (\pxB+\barw/2,\base-0.02) {$c$};
  \node[anchor=north,inner sep=1.5pt] at (\pxB+1.5*\barw+\g,\base-0.02) {$c{+}d$};

  \node[anchor=north west,align=left,inner sep=1pt,font=\tiny] at (-0.45,\base-0.42)
        {$c$: clean only\quad $c{+}d$: clean$+$dirty};
  \node[anchor=north,font=\footnotesize] at (1.85,-0.92)
        {degree $2$ blind,\ \ degree $4$ sees};
\end{scope}

\end{tikzpicture}
\caption{The \emph{moment-cloaked spike} behind \cref{thm:degbarrier}.
(a) Two genuine log-concave clusters $\mathcal{C}_\pm$ at $\pm(\sigma/\sqrt{\eta})e_1$; the adversary's dirty points (filled) are drawn from the \emph{same} clusters and sit at the clean points' (hollow) locations, so a label-oblivious variance program sees one indistinguishable multiset. (b) The dirty mass spends exactly the variance budget, so the second moment $m_2=\sigma^2$ is identical with or without it (degree $2$ is blind), while the fourth moment is inflated by $1/\eta$, the tell a degree-$4$ certificate sees.}
\label{fig:cloak}
\end{figure}

\begin{proof}[Proof sketch]
The construction is a \emph{moment-cloaked spike} (\cref{fig:cloak}): two genuine log-concave clusters $\mathcal C_\pm$ at means $\pm(\sigma/\sqrt\eta)e_1$ (\cref{lem:cloak}; using two unimodal bumps rather than one $\pm a$ atom keeps each component log-concave, at the cost of the regime $\eta\ge\Omega(1/d)$). The adversary adds $\eta n$ dirty points drawn from the same $\mathcal C_\pm$, so the sample carries $\approx 3\eta n$ points near $\pm a e_1$ with $a=\sigma/\sqrt\eta$. The dirty points are engineered to be \emph{second-moment invisible} (they spend exactly the variance budget, $\frac1n\sum_{\Sdirty}(e_1\cdot x_i)^2=\eta a^2=\sigma^2$) but \emph{first-moment aligned}, $\sum_{\Sdirty}\abs{e_1\cdot x_i}=\eta n\,a=\sqrt\eta\,n\sigma$. Because the variance constraints depend only on the multiset of points, a label-oblivious program cannot tell the $\eta n$ dirty points from the $2\eta n$ clean points of $\mathcal C_\pm$ at the same locations; removing at most $\xi n$ points total, it must keep at least a $\tfrac13$ share of the combined cluster in expectation, hence $\E[\linsumnorm]\ge\tfrac13\eta n\,a=\tfrac13\sqrt\eta\,n\sigma$. The same $\Sdirty$ has an inflated fourth moment, $\frac1n\sum_{\Sdirty}(e_1\cdot x_i)^4=\bar\sigma^4/\eta \gg\bar\sigma^4$, which a degree-$4$ certificate sees and drives down to the $\eta^{3/4}$ rate, so the separation is exactly one of moment degree. Details are in \cref{app:proofs}.
\end{proof}

\begin{remark}[Label-oblivious and in expectation, not for every $q$]\label{rem:status-degbarrier}
The cloak construction rests on \cref{lem:cloak} (two log-concave bumps realizing the moment profile), stated and discharged in \cref{app:proofs} modulo the standard fact that a unimodal scale-$\sigma$ bump is log-concave; the regime $\eta\ge\Omega(1/d)$ enters there. The literal universal-$q$ statement is false for budget $\xi\ge\eta$, as noted above; the in-expectation form \eqref{eq:degbarrier} is what we prove and what matches \cref{thm:algorithm} at $t=1$.
\end{remark}

\subsection{The degree-$2t$ reweighted hinge}

The algorithm is the degree-of-robustness law of \cref{sec:perspective-bridge} run forward. Its outlier-removal step \emph{is} the order-$(t{+}1)$ empirical Christoffel detector of \citet{LasserrePauwels19} run as a certificate rather than a threshold, by the bridge raising the detector order is the only knob, and the two barriers fix exactly which rates it can and cannot reach. \Cref{alg:main} reweights the sample with this degree-$2t$ Sum-of-Squares program and then minimizes the reweighted hinge loss. The single parameter is the degree $t$. At $t=1$ the degree-$2t$ constraint is exactly the empirical directional variance, and \cref{alg:main} coincides with the reweighted-hinge method of \citet{Shen25}: \emph{their algorithm is the degree-$2$ ($t=1$) special case of ours}, and raising $t$ is the only change: the $d\times d$ Gram matrix of their degree-$2$ program becomes the $\binom{d+t-1}{t}\times\binom{d+t-1}{t}$ moment matrix of the degree-$2t$ program (\cref{app:sos-sdp}), and nothing else in the method moves. The gain is \emph{modest and explicit}: degree $2t$ removes the degree-$2$ exponent loss in the noise tolerance ($c^2\mapsto c$) at the price of the sub-exponential prefactor $(Ct)$; it does not march to $1/2$ (the tolerance is capped by the pancake density $\rho/4$). The value of the result is its \emph{tightness}: \cref{thm:degbarrier} shows the degree-$2$ corner cannot be improved within the method.

\begin{algorithm}[t]
\caption{Degree-$2t$ robust reweighted hinge}
\label{alg:main}
\begin{algorithmic}[1]
  \Require sample $S=\{(x_i,y_i)\}_{i=1}^n$; margin $\gamma$; degree $t\ge1$; variance bound
           $\bar\sigma^2$; removal budget $\xi=\Theta(\eta)$
  \Ensure unit halfspace $\hat w$
  \State \textbf{Prune.} Discard each $(x_i,y_i)$ whose $\norm{x_i}$ exceeds the clean-sample
         radius $R_{\mathrm{cl}}=C\bar\sigma\sqrt d$.
         \Comment{removes gross-norm corruptions; every clean $\norm{x}\le R_{\mathrm{cl}}$ w.h.p.\ (log-concave concentration)}
  \State \textbf{Reweight} (degree-$2t$ soft outlier removal). Find $q\in[0,1]^n$ with
         $\sum_i q_i\ge(1-\xi)n$ admitting a degree-$2t$ SoS certificate, in the indeterminate
         $w$, of
  \Statex \hfill$\displaystyle\rule{0pt}{3.6ex}
         \tfrac1n\sum_{i} q_i\,(w\cdot x_i)^{2t}\ \le\ (Ct)^{2t}\,\bar\sigma^{2t}
         \qquad\text{for all } w$\hfill\null
  \Statex \hspace{\algorithmicindent}\Comment{a semidefinite program of size $d^{O(t)}$
         (\cref{app:sos-sdp}); feasible by \cref{lem:hyper}. At $t=1$: the variance LP of \citet{Shen25}.}
  \State \textbf{Reweighted hinge.} $\displaystyle\hat w\gets\arg\min_{\norm{w}\le 1/\gamma}\
         \sum_{i} q_i\,\hinge(w;x_i,y_i)$. \Comment{convex program}
  \State \textbf{return} $\hat w/\norm{\hat w}$.
\end{algorithmic}
\end{algorithm}

\begin{restatable}[Degree-$2t$ robust hinge]{theorem}{thmalgorithm}
\label{thm:algorithm}
Run \cref{alg:main} with degree $t\ge1$. Its degree-$2t$ outlier-removal program is feasible (the clean-indicator $q$ is certified by \cref{lem:hyper}), and any feasible $q$ satisfies the Hölder bound
\begin{align}\label{eq:alg-lsn}
  \linsumnorm(q\circ\Sdirty)
  \;=\;\sup_{\norm{w}\le1}\sum_{i\in\Sdirty}q_i\abs{w\cdot x_i}
  \;\le\; n\,(Ct)\,\bar\sigma\,\eta^{1-\frac1{2t}}.
\end{align}
Consequently, under the $\gamma$-margin log-concave mixture of \cref{sec:prelim}, the output $\hat w$ has inlier error $\err_{\Dx}(\hat w)\le\eps$ for every malicious rate
\begin{align}\label{eq:alg-eta0}
  \eta \;\le\; \eta_0(t)
  \;=\;\Omega\Big(\min\Big\{\tfrac{\rho}{4},\,
    \big(\gamma\rho/((Ct)\bar\sigma)\big)^{\frac{2t}{2t-1}}\Big\}\Big),
\end{align}
in time $n^{O(t)}$ with sample size $n=d^{O(t)}\poly(1/\eps)$. At $t=1$ this is exactly \citeauthor{Shen25}'s $\eta^{1/2}$ rate; as $t\to\infty$ the exponent in \eqref{eq:alg-lsn} tends to $1$.
\end{restatable}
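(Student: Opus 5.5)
The proof has three stages: feasibility of the reweighting program, the Hölder bound on the dirty contribution, and the standard reweighted-hinge error analysis with the new dirty bound substituted in.

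\textbf{Feasibility.} Take $q^\circ$ to be the indicator of the pruned clean inliers, so $q^\circ_i=\mathbf 1[i\in\Sclean]$. By log-concave norm concentration, pruning removes no clean point with high probability. The fraction kept is then $1-\eta\ge 1-\xi$ for $\xi=\Theta(\eta)$.

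For the moment constraint I would invoke \cref{lem:hyper}, which gives a population degree-$2t$ SoS certificate of $\E_{\Dx}[(w\cdot(x-\bar\mu))^{2t}]\le(Ct)^{2t}\bar\sigma^{2t}$. This must be transferred to the empirical clean sample, and two gaps have to be handled.
\begin{itemize}
\item \emph{Centering.} The constraint in \cref{alg:main} is uncentered, so the term $\bar\mu$ has to be absorbed. I would use $r=O(\sigma\sqrt d)$ and recenter the data first, at the cost of a constant factor.
\item \emph{Sampling error.} I would show that the empirical degree-$2t$ moment tensor is close to the population tensor in the SoS-injective sense once $n=d^{O(t)}\poly(1/\eps)$. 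This uses matrix concentration on the flattened $d^t\times d^t$ moment matrix of the pruned (hence bounded) clean points.
\end{itemize}
Together these give a degree-$2t$ SoS certificate of $\frac1n\sum_i q^\circ_i(w\cdot x_i)^{2t}\le (Ct)^{2t}\bar\sigma^{2t}$, with $C$ enlarged.

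\textbf{Hölder bound.} Fix a unit $w$ and any feasible $q$. Apply Hölder with exponents $(2t/(2t-1),\,2t)$ to the terms $q_i^{1-1/2t}\cdot q_i^{1/2t}\abs{w\cdot x_i}$:
\[
\sum_{i\in\Sdirty}q_i\abs{w\cdot x_i}\le\Big(\sum_{\Sdirty}q_i\Big)^{1-\frac1{2t}}\Big(\sum_{\Sdirty}q_i(w\cdot x_i)^{2t}\Big)^{\frac1{2t}}.
\]
The first factor is at most $(\eta n)^{1-1/2t}$, since $q_i\le1$ and $\abs{\Sdirty}\le\eta n$. Because the summands are nonnegative, the second sum is at most the full sum over all $i$. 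An SoS proof of a polynomial inequality implies the inequality pointwise at every real $w$, so the full sum is at most $n(Ct)^{2t}\bar\sigma^{2t}$. Multiplying out gives \eqref{eq:alg-lsn}, uniformly over $\norm{w}\le1$.

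\textbf{Learning guarantee.} Here I would follow the \citet{Shen25}/\citet{Talwar20} template, replacing their degree-$2$ dirty bound with \eqref{eq:alg-lsn}. The argument is by contradiction. Suppose the minimizer $\hat w$ of the reweighted hinge loss had clean error above $\eps$.
\begin{itemize}
\item \emph{Clean side.} The $\rho$-dense pancake around a misclassified clean point, together with the margin $\gamma$, yields a clean subgradient component along a descent direction of size at least about $\gamma\rho n$. This requires the reweighting to retain clean mass in the band, and that is where $\eta\le\rho/4$ and $\xi=\Theta(\eta)$ enter: at most $(\xi+\eta)n\le\rho n/2$ weight can be removed from the band.
\item \emph{Dirty side.} The dirty subgradient is at most $\linsumnorm(q\circ\Sdirty)\le n(Ct)\bar\sigma\eta^{1-1/2t}$.
\end{itemize}
Optimality of $\hat w$ forces the dirty side to dominate. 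If instead $(Ct)\bar\sigma\eta^{1-1/2t}<c\gamma\rho$, which rearranges to $\eta<(\gamma\rho/((Ct)\bar\sigma))^{2t/(2t-1)}$, we get a contradiction, and this produces \eqref{eq:alg-eta0}.

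For the runtime, the SDP from \cref{app:sos-sdp} has size $d^{O(t)}\le n^{O(t)}$, and the hinge step is convex. At $t=1$ the exponent $2t/(2t-1)$ equals $2$, which recovers \citeauthor{Shen25}'s squared rate.

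\textbf{Main obstacle.} I expect the hardest step to be the feasibility transfer. Turning the population certificate of \cref{lem:hyper} into an empirical SoS certificate requires uniform SoS concentration of the degree-$2t$ moment tensor for a log-concave mixture. The tensor's entries are only sub-exponential, with heavy tails at order $2t$. My first approach would be truncation at the pruning radius $R_{\mathrm{cl}}$ followed by matrix Bernstein on the flattened moment matrix. I would accept the resulting $d^{O(t)}$ sample size, and bound the truncation bias using the $L^{4t}$ moments of log-concave laws.
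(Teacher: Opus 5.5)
Your proposal follows the paper's proof step for step: H\"older with the conjugate pair $(\tfrac{2t}{2t-1},2t)$ and domination of the dirty $2t$-th moment by the full certified sum give \eqref{eq:alg-lsn}; feasibility of the clean indicator comes from applying \cref{lem:hyper} to the empirical clean measure once $n\ge d^{O(t)}$; and the noise rate comes from solving the \citet{Shen25} condition $\tfrac\gamma4(\rho-2\xi)>\linsumnorm(q\circ\Sdirty)/n$ for $\eta$, which the paper cites as a black box and you only unpack. You are more careful than the paper about centering and tensor concentration, which it asserts in one line, but note two things: absorbing $\bar\mu$ into the uncentered constraint costs only a constant factor if $\bar\sigma$ is read as an uncentered scale with $\abs{w\cdot\bar\mu}\lesssim\bar\sigma$ (with $\bar\sigma\approx\sigma$, the bound $r=O(\sigma\sqrt d)$ alone costs a factor up to $d^{t}$), and literally recentering the data would turn the homogeneous $\gamma$-margin hinge problem into an inhomogeneous one.
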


\begin{proof}[Proof sketch]
The bound \eqref{eq:alg-lsn} is one application of Hölder with exponent $2t$: $\sum_{\Sdirty}q_i\abs{w\cdot x_i}\le(\sum_{\Sdirty}q_i)^{1-\frac1{2t}} (\sum_{\Sdirty}q_i\abs{w\cdot x_i}^{2t})^{\frac1{2t}}$, then $\sum_{\Sdirty}q_i\le\eta n$ and, since $\abs{w\cdot x}^{2t}=(w\cdot x)^{2t}$ is an even polynomial and the sum over $\Sdirty$ is dominated by the sum over all of $S$, the certified bound $\frac1n\sum_S q_i(w\cdot x_i)^{2t}\le(Ct)^{2t}\bar\sigma^{2t}$ caps the second factor. Feasibility is \cref{lem:hyper} applied to the empirical clean measure. The noise tolerance \eqref{eq:alg-eta0} comes from feeding \eqref{eq:alg-lsn} into the robustness condition of \citet{Shen25}: the clean hinge gradient along the empirical optimum $\hat w$ is lower-bounded by the dense-pancake density $\rho$, and requiring $\frac\gamma4(\rho-2\xi)>\linsumnorm(q\circ\Sdirty)/n$ solves to $\eta_0(t)$. The full assembly is in \cref{app:proofs}.
\end{proof}

\begin{remark}[The gain is in the exponent, not a march to $1/2$]\label{rem:status-algorithm}
The gain is in the \emph{exponent} of \eqref{eq:alg-eta0}: $\tfrac{2t}{2t-1}\to1$ removes the degree-$2$ squaring loss, so $\eta_0$ rises from \citeauthor{Shen25}'s tiny constant to a larger (still constant) value, bounded by $\rho/4$. It does \emph{not} approach $1/2$. The certified hypercontractivity constant is the sub-exponential $(Ct)^{2t}$, not $(Ct)^t$; the latter is false for general log-concave marginals (Laplace: $m_{2t}/m_2^t=(\Theta(t))^{2t}$), which is why the prefactor in \eqref{eq:alg-lsn} is $(Ct)$ and not $(Ct)^{1/2}$. The robustness step cites the \citet{Shen25} framework as an external black box (its Theorem~9 / Lemma~12), with the degree-$2t$ $\linsumnorm$-bound plugged in; we do not reprove it.
\end{remark}

\subsection{The information-theoretic floor}

The algorithm's tolerance is bounded above by the barriers; its accuracy is bounded below by information theory. The next proposition is a breakdown floor that no algorithm, of any computational power, can beat. It matches the $\Theta(\eta)$ dependence of \cref{thm:algorithm} in rate, but the two live on disjoint sets of instances: the instances realizing the floor necessarily violate the dense-pancake hypothesis \cref{thm:algorithm} needs, so the two results bracket the achievable region from opposite sides rather than meeting on one instance (\cref{sec:related}).

% claim: F02a
\begin{restatable}[Breakdown floor]{proposition}{propfloor}
\label{prop:floor}
Let $d\ge2$ and $\eta\in(0,\tfrac12)$, and let $\mathcal C=\mathcal C(d,\sigma,r,\gamma)$ be the $\gamma$-margin log-concave mixture class of \cref{sec:prelim} with the number of components $K$ \emph{free}. Assume
\begin{align}\label{eq:floor-V}
  \textup{(V)}\qquad
  \sqrt2\,(\gamma+\rho_b)\;\le\;r\quad\text{for some }0<\rho_b\le\sigma\sqrt{d+2}
  \qquad(\text{e.g.\ }\rho_b=\sigma,\ \text{i.e.\ }\gamma\le r/\sqrt2-\sigma).
\end{align}
Then, in the per-example malicious model of \citet{KearnsLi93} at rate $\eta$ (each example independently replaced, with probability $\eta$, by an arbitrary pair chosen with full knowledge of the learner, the instance and the clean sample), no algorithm can output $\hat w$ with inlier error below
\begin{align}\label{eq:floor}
  \err_{\Dx}(\hat w) \;\ge\; \frac{\eta}{2(1-\eta)} \;\ge\; \frac\eta2 .
\end{align}
Precisely: $\inf_{\mathcal A}\sup_{P\in\mathcal C,\ \mathrm{adv}}\E\big[\err_P(\hat w)\big]\ \ge\ \eta/(2(1-\eta))$, where $\err_P$ is the inlier error of instance $P$, the supremum ranges over instances \emph{and} $\eta$-malicious adversaries, and the expectation is over the corrupted sample and the algorithm's internal randomness; at $K=\ceil{(1-\eta)/\eta}$ there is moreover an instance with $\Pr[\err_P(\hat w)\ge\eta/2]\ge\tfrac12$. The floor value depends on none of $\gamma,\sigma,d,r$.
\end{restatable}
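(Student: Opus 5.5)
The proof is a two-point (Le Cam) indistinguishability argument in the Kearns--Li style. We construct two instances $P_+,P_-\in\mathcal C$ with the same instance marginal $\Dx$ whose labelings differ on a set $W$ (the ``wedge'') of clean mass exactly $\beta\defeq\eta/(1-\eta)$. An $\eta$-malicious adversary then makes the two corrupted sample laws identical.

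\textbf{Step 1: the corruption step.} Under $P_+$, take the observed law to be $(1-\eta)P_+ + \eta Q_+$, and under $P_-$ take $(1-\eta)P_- + \eta Q_-$. Write $P_\pm = (1-\beta)A + \beta\,W_\pm$, where $A$ is the common part and $W_\pm$ is the wedge carrying label $\pm$. Choosing $Q_+ = \tfrac{1-\eta}{\eta}\beta\,W_-=W_-$ and symmetrically $Q_-=W_+$ makes both observed laws equal to $(1-\eta)(1-\beta)A + \eta(W_++W_-)$. The weights balance because $(1-\eta)\beta=\eta$. Since each example is corrupted independently, the observed i.i.d.\ samples have identical laws, so $\hat w$ has the same distribution under both instances.

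\textbf{Step 2: error accounting.} Every $\hat w$ misclassifies the wedge under at least one of the two labelings. Pointwise, the two inlier errors therefore sum to at least $\Dx(W)=\beta$. Averaging over a uniform prior on $\{P_+,P_-\}$ gives $\sup\E[\err]\ge\beta/2=\eta/(2(1-\eta))$, with no dependence on $\gamma,\sigma,d,r$. For the high-probability clause, the event $\{\err_{P_+}\ge\beta/2\}\cup\{\err_{P_-}\ge\beta/2\}$ always holds. Since $\hat w$ has a common law under both instances, one of these events has probability $\ge1/2$, and $\beta/2\ge\eta/2$.

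\textbf{Step 3: realizing the pair inside $\mathcal C$ (the main obstacle).} Both labelings must be $\gamma$-margin realizable, each component must be log-concave with $\Sigma_j\preceq\sigma^2I$ and $\norm{\mu_j}\le r$, and the mixture must be balanced. The plan is to place a bounded log-concave component (a ball of radius $\rho_b$, uniform law, whose covariance $\rho_b^2/(d+2)\,I\preceq\sigma^2I$ is exactly where $\rho_b\le\sigma\sqrt{d+2}$ enters) for the wedge at a location where two halfspaces $w_+^\star,w_-^\star$ disagree with margin $\gamma$. Anchor components sit where both halfspaces agree with margin $\gamma$. Taking $w_\pm^\star$ orthogonal, the wedge center $(\gamma+\rho_b)(e_1-e_2)$ and the anchor centers $\pm(\gamma+\rho_b)(e_1+e_2)$ have norm $\sqrt2(\gamma+\rho_b)\le r$, which is condition (V). The wedge $w_+^\star\cdot x\ge\gamma$, $w_-^\star\cdot x\le-\gamma$ then holds on the whole ball, and similarly for the anchors. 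Balance forces the wedge weight to be $1/K$, so to obtain mass exactly $\beta$ we use $K$ components with $\lceil\beta K\rceil$ wedge copies. At $K=\ceil{(1-\eta)/\eta}$ a single wedge component gives mass $1/K\ge$ approximately $\beta$. If the mass is not exact, I would adjust by letting the adversary spend only part of its budget, which only weakens the matching requirement to $\Dx(W)\le\beta$, or use $K$ a multiple making $\beta K$ an integer when $\eta$ is rational, with a limiting argument otherwise. I expect this quantization and exactness bookkeeping, rather than the indistinguishability itself, to need the most care, since the exact constant requires $\Dx(W)=\beta$ up to vanishing slack.
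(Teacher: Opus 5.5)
Your proposal is essentially the paper's proof: the same Le Cam pair with orthogonal normals $e_1,e_2$, uniform-ball components of radius $\rho_b$ (wedge at $(\gamma+\rho_b)(e_1-e_2)$, anchor at $(\gamma+\rho_b)(e_1+e_2)$), an adversary that injects the oppositely labelled wedge and spends only part of its budget when the wedge mass $q$ is below $\beta=\eta/(1-\eta)$, and a supremum over $K$ of $\lfloor\beta K\rfloor/K$, which equals $\beta$. There are two bookkeeping slips to fix. First, the wedge count must be $\lfloor\beta K\rfloor$, not $\lceil\beta K\rceil$: the ceiling pushes the disagreement mass above the Kearns--Li envelope $\eta/(1-\eta)$, and then no adversary can equalize the corrupted laws. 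Second, at $K=\lceil(1-\eta)/\eta\rceil$ the single wedge has mass $1/K\le\beta$, so the high-probability clause rests on $1/K\ge\eta$ (true since $K<1/\eta$), not on $\beta/2\ge\eta/2$.
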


\begin{proof}[Proof sketch]
Le Cam two points, with the disagreement mass carried by mixture \emph{weights} rather than by geometry. Set $R=\sqrt2(\gamma+\rho_b)\le r$ and place two balls of radius $\rho_b$ at distance $R$ from the origin: an \emph{anchor} centred at $\tfrac R{\sqrt2}(e_1+e_2)$ and a \emph{wedge} centred at $\tfrac R{\sqrt2}(e_1-e_2)$. Give the wedge $m$ of the $K$ equal mixture components and the anchor the other $K-m$; both instances share this marginal, and the wedge carries weight $q=m/K$. Label $P_1$ by $w_1^\star=e_1$ and $P_2$ by $w_2^\star=e_2$. Under (V) both labelings have margin $\gamma$: on the anchor both call every point $+1$, and they disagree exactly on the wedge, so $\TV(P_1,P_2)=q$; the construction covers every $\eta\in(0,\tfrac12)$, every margin within the factor-$\sqrt2$ ceiling the class admits, and sits inside the regime $r=O(\sigma\sqrt d)$ of \cref{sec:prelim} (take $\rho_b=\sigma$). Choosing $m=\floor{K\eta/(1-\eta)}$ puts $q$ at the Kearns--Li malicious envelope $q\le\eta/(1-\eta)$ (\cref{fact:kl}), and the adversary realizing it is explicit: each instance flips the labels of a $(1-\eta)q\le\eta$ fraction of its examples, all inside the wedge, so under both instances the wedge shows balanced $\pm$ labels and the two corrupted sample laws coincide exactly, at every $n$. Since the two labelings differ on all of the wedge, every $\hat w$ obeys $\err_{P_1}(\hat w)+\err_{P_2}(\hat w)\ge q$ pointwise, so one of the two instances charges at least $q/2$; the per-$K$ bound holds simultaneously for every admissible $K$, and $\sup_K q=\eta/(1-\eta)$, which is the constant in \eqref{eq:floor}. That constant is the exact value of the two-point game and not merely a lower bound for it: a learner that outputs $w_1^\star$ or $w_2^\star$ with probability $\tfrac12$ has expected error $\TV/2\le\eta/(2(1-\eta))$ on both instances. Details in \cref{app:floor}.
\end{proof}

% claim: F02a
\begin{remark}[Rate-tight, not threshold-tight; and what the construction cannot avoid]\label{rem:status-floor}
Four readings of \cref{prop:floor}. \emph{(i) Rate, not threshold, and why.} It matches \cref{thm:algorithm} on the \emph{rate} $\Theta(\eta)$, not on the breakdown \emph{threshold}, and the obstruction is the method rather than this instance: the Kearns--Li envelope caps \emph{every} two-point pair at $\TV\le\eta/(1-\eta)$ (\cref{fact:kl}), so no pair of instances can force expected error past $\eta/(2(1-\eta))$, the method ceiling computed in \cref{app:floor}. Reaching the conjectured $\gamma,\sigma$-dependent breakdown $\eta^\star=\Theta(\gamma/\sigma)$ therefore needs a multi-point or SQ band-mass argument, not a sharper two-point pair (\cref{sec:open}).
\emph{(ii) The two extravagances are forced.} A hard margin quantizes disagreement: each component's support is convex, so it lies wholly on one side of every margin-respecting hyperplane, and the disagreement mass between two margin-respecting labelings is therefore an integer multiple of $1/K$. An $\eta$-indistinguishable pair needs that mass to be positive and at most $\eta/(1-\eta)$, forcing $K\ge(1-\eta)/\eta$; a pair with both an agreeing and a disagreeing component forces $r\ge\sqrt2\,\gamma$ (hard-margin quantization, \cref{app:floor}). The $K=\Theta(1/\eta)$ blow-up and condition (V) are thus necessity statements, not conveniences. The same quantization is why $d\ge2$ cannot be dropped: at $d=1$ the disagreement mass is $0$ or $1$, and majority vote drives the error to $0$.
\emph{(iii) Tightness from above.} \citet{Blanc26} attains $\tfrac12\cdot\eta/(1-\eta)$ distribution-free with randomized hypotheses, so the constant in \eqref{eq:floor} is matched from above \emph{exactly}. What \cref{prop:floor} adds is that the constant \emph{survives} the restriction to $\gamma$-margin log-concave mixtures: the classical lower bound of \citet{KearnsLi93} attains it on an adversarially chosen four-point marginal and says nothing about a fixed nice class.
\emph{(iv) The fixed-fraction reading.} \Cref{prop:floor} is stated in the per-example model, in which the two corrupted sample laws coincide exactly at every $n$. Under the fixed-fraction reading of \cref{sec:prelim} the same construction gives, for every rate slack $\eps\in(0,\eta)$ and every $n$ with $n\eps\ge1$, the floor $\frac{\eta-\eps}{2(1-\eta+\eps)}-e^{-2n(\eps-1/n)^2}$, recovering the slack-free constant as $n\to\infty$; for $n\eps<1$ nothing is asserted (\cref{app:floor}).
\end{remark}

\subsection{Supporting lemmas}

We collect the four lemmas the theorems rest on. Each is stated here and proved in \cref{app:proofs}.

\begin{restatable}[Certifiable hypercontractivity]{lemma}{lemhyper}
\label{lem:hyper}
Let $\Dx=\frac1K\sum_{j=1}^K\Dcal_j$ on $\R^d$, each $\Dcal_j$ log-concave with mean $\mu_j$ ($\norm{\mu_j}\le r$) and $\Sigma_j\preceq\sigma^2 I$, in the regime $r=O(\sigma\sqrt d)$; write $\bar\mu=\frac1K\sum_j\mu_j$ and $\bar\sigma^2(w)=\E_{\Dx}[(w\cdot(x-\bar\mu))^2]$. Then for every $t\ge1$ there is a degree-$2t$ SoS proof in $w$ of
\begin{align}\label{eq:hyper}
  \E_{x\sim\Dx}\big[(w\cdot(x-\bar\mu))^{2t}\big]
  \;\le\;(Ct)^{2t}\,\big(\bar\sigma^2(w)\big)^{t},
\end{align}
with $C$ absolute. The constant is the sub-exponential $(Ct)^{2t}$; the sub-Gaussian $(Ct)^t$ holds only under added sub-Gaussianity of the components.
\end{restatable}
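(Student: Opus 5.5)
The plan is to work in the indeterminate $w$ throughout and to assemble the certificate from three SoS-valid pieces: a split of the centred variable into a within-component fluctuation and a mean offset; a per-component certificate; and an averaging step over the mixture. The averaging step is where I expect the real difficulty, and as described below I cannot yet discharge it in full generality.

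\textbf{Step 1: splitting the centred variable.} For $x\sim\Dcal_j$ write $w\cdot(x-\bar\mu)=a_j+b_j$, with $a_j=w\cdot(x-\mu_j)$ and $b_j=w\cdot\delta_j$, where $\delta_j=\mu_j-\bar\mu$. The univariate inequality $(a+b)^{2t}\le 2^{2t-1}(a^{2t}+b^{2t})$ is a nonnegative binary form of degree $2t$, hence a sum of squares. Substituting the linear forms $a_j,b_j$ in $w$ gives a degree-$2t$ SoS proof of
\[
\E_{\Dx}\big[(w\cdot(x-\bar\mu))^{2t}\big]\le 2^{2t-1}\Big(\tfrac1K\textstyle\sum_j\E_{\Dcal_j}[a_j^{2t}]+\tfrac1K\sum_j b_j^{2t}\Big).
\]
On the right-hand side of \eqref{eq:hyper} I use the exact identity $\bar\sigma^2(w)=\frac1K\sum_j\big(w^\top\Sigma_j w+b_j^2\big)$. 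It shows that $\bar\sigma^2(w)$ dominates, by an SoS identity, both averages $\frac1K\sum_j w^\top\Sigma_jw$ and $\frac1K\sum_j b_j^2$, and also each individual term divided by $K$.

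\textbf{Step 2: the per-component certificate.} For a single log-concave $\Dcal_j$ I want a degree-$2t$ SoS proof of $\E[a_j^{2t}]\le (Ct)^{2t}(w^\top\Sigma_jw)^t$. I would whiten with $w=\Sigma_j^{-1/2}v$, which is a linear change of variables and so preserves SoS degree. Writing $z$ for the isotropic version of $x-\mu_j$, we have $\E[(v\cdot z)^{2t}]=\langle v^{\otimes t},M_t\,v^{\otimes t}\rangle$, where $M_t=\E[(z^{\otimes t})(z^{\otimes t})^\top]$ is the order-$t$ moment matrix restricted to symmetric tensors. Moreover $\|v\|^{2t}=\langle v^{\otimes t},v^{\otimes t}\rangle$. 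So it suffices to show $\lambda I-M_t\succeq0$ on the symmetric subspace with $\lambda=(Ct)^{2t}$. Factoring $\lambda I-M_t=LL^\top$ then yields the SoS proof directly. This PSD condition says that $\E[p(z)^2]\le(Ct)^{2t}\|p\|^2_{\mathrm{coef}}$ for every homogeneous degree-$t$ polynomial $p$. This is the ``scalar-to-SoS lift'' the paper refers to. The scalar input is the Lov\'asz--Vempala bound $\|v\cdot z\|_{L^{2t}}\le Ct$. To pass from linear forms to all of $\mathrm{Sym}^t$ I would use the polarization identity to write $p$ as a combination of powers $(v_k\cdot z)^t$, together with Borell-type reverse H\"older inequalities for polynomials of log-concave vectors. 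I expect this to cost only a $C^t$ factor in $\lambda$, which is absorbed into $(Ct)^{2t}$. Plugging the resulting bound into the first average of Step 1 gives $\frac1K\sum_j(Ct)^{2t}(w^\top\Sigma_jw)^t$.

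\textbf{Step 3: averaging, and the main obstacle.} Two terms now have to be compared with $(\bar\sigma^2)^t$: the averaged component term from Step 2, and the offset term $\frac1K\sum_jb_j^{2t}$ from Step 1. Both are power means of order $t$ of nonnegative quadratics $Q_j$, whereas $\bar\sigma^2$ is controlled only by their arithmetic mean. The SoS-valid comparison
\[
\tfrac1K\textstyle\sum_jQ_j^t\le\big(\max_jQ_j\big)^{t-1}\cdot\tfrac1K\sum_jQ_j
\]
loses a factor of up to $K^{t-1}$ in general. This is the step I expect to be hardest. I would first try to exploit the regime $r=O(\sigma\sqrt d)$ and $\Sigma_j\preceq\sigma^2I$, which give $Q_j\le C\sigma^2 d\|w\|^2$. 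The catch is that a lower bound $\bar\sigma^2(w)\gtrsim$ (a multiple of) $\max_jQ_j$ is needed, uniformly in $w$, to close the argument. Such a bound is not implied by the stated hypotheses for arbitrary $K$ and arbitrary placement of the means. I would therefore check carefully whether the balanced-mixture structure actually supplies this comparison, for instance through a non-degeneracy of the $\mu_j$ or a bounded $K$ implicit in the paper's setting. If it does not, the lemma can be proved as stated only with the constant $C$ replaced by one depending on $K$ (of order $C K^{1/2}$ in base $t$). I would flag this dependence explicitly rather than hide it.
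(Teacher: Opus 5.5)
Your decomposition is the paper's: the binomial split $(a+b)^{2t}\le 2^{2t-1}(a^{2t}+b^{2t})$, the total-variance identity, a per-component certificate, and an averaging step. Your Step~3 diagnosis is correct. It is exactly where the paper's proof breaks, and the break cannot be repaired. The paper's averaging step asserts two inequalities:
\begin{itemize}
\item $(\beta_j\cdot w)^2\le\bar\sigma^2(w)$, justified ``from'' the total-variance identity;
\item $w^\top\Sigma_jw\le O(1)\,\bar\sigma^2(w)$, justified by an ``isotropy constant $\kappa=O(1)$'' that is not among the hypotheses.
\end{itemize}
The identity yields each of these only with an extra factor $K$. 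In fact \eqref{eq:hyper} with absolute $C$ is false once $K$ is free. Take $\sigma=1$, $K=d\ge2$, $\Dcal_1=\mathcal N(\sqrt d\,e_1,I_d)$ and $\Dcal_j=\mathcal N(0,I_d)$ for $j\ge2$. All hypotheses hold with $r=\sqrt d$, and the mixture covariance lies between $I_d$ and $2I_d$, so even the paper's unstated near-isotropy holds. Along $w=e_1$, however, $\bar\sigma^2(e_1)=2-1/d$ while $\E[(e_1\cdot(x-\bar\mu))^4]\ge\frac1d\big(\sqrt d(1-\frac1d)\big)^4\ge d/16$. The ratio is at least $d/64$, so the population inequality already fails at $t=2$, before any SoS question arises. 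This instance is essentially the clean marginal of the paper's own moment-cloaked spike (\cref{lem:cloak}) at $\eta\approx1/d$.

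So there is no hidden non-degeneracy to find, and your fallback is the correct statement. Products of PSD quadratic forms are SoS, so both $\sum_jQ_j^t\le(\sum_jQ_j)^t$ and $X^t\le Y^t$ (for $X$ and $Y-X$ SoS) have degree-$2t$ SoS proofs. Together these give the constant $(Ct)^{2t}K^{t-1}\le(C\sqrt K\,t)^{2t}$. A weight-$1/K$ spike with tiny component covariances shows the factor $K^{t-1}$ is necessary. Alternatively, you can assume explicitly $\Sigma_j+\beta_j\beta_j^\top\preceq\kappa\bar\Sigma$, where $\bar\Sigma$ is the mixture covariance; this yields $(Ct)^{2t}\kappa^t$.

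Your Step~2 has a genuine error of its own. The sufficient condition $\lambda I-M_t\succeq0$ on symmetric tensors cannot hold with a dimension-free $\lambda$, even for $z\sim\mathcal N(0,I_d)$ at $t=2$. For $T=I_d$, i.e.\ $p(z)=\norm{z}^2$, you get $\langle T,M_2T\rangle=\E\norm{z}^4=d^2+2d$ against $\norm{T}_F^2=d$, which forces $\lambda\ge d+2$. Yet $\E[(v\cdot z)^4]=3\norm{v}^4$ holds identically. SoS certificates of hypercontractivity must use the non-uniqueness of the Gram matrix, since only the values on $v^{\otimes t}$ matter. An operator-norm bound on $M_t$ is therefore the wrong target, and no polarization or reverse-H\"older estimate can rescue it.

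The paper does not supply this step for general log-concave components either. It gestures at a ``scalar-to-SoS lift'' and flags it as residual, and the results it cites (\cref{fact:hyper-ext}) cover only sub-Gaussian components. The usual high-dimensional route goes through a Poincar\'e inequality, and a dimension-free Poincar\'e constant for all isotropic log-concave laws is equivalent to the KLS conjecture. So even your corrected, $K$-dependent statement rests on a per-component certificate that neither you nor the paper establishes.
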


\begin{restatable}[Minorant reduction and Gauss quadrature]{lemma}{lemgauss}
\label{lem:gauss}
Let $\mu$ be a measure with all moments and infinite support, $m_k=\int u^k\,d\mu$. \textup{(i)} A degree-$2t$ moment-only certificate of $\Pr_\mu[U\in B]\ge\rho$ is equivalent to a polynomial $p$, $\deg p\le 2t$, with $p\le\mathbf 1_B$ pointwise, together with a degree-$2t$ SoS proof from the moment axioms $\{\int u^k\,d\nu=m_k\}_{k\le2t}$ that $\int p\,d\nu\ge\rho$. \textup{(ii)} The $(t{+}1)$-node Gauss measure $\nu_{\mathrm{GQ}}=\sum_i\lambda_i\delta_{u_i}$ (nodes the zeros of the orthonormal polynomial $\pi_{t+1}$, Christoffel weights $\lambda_i>0$) is nonnegative, atomic on the $t{+}1$ nodes, and exact on degrees $\le 2t+1$: $\int g\,d\nu_{\mathrm{GQ}}=\int g\,d\mu$.
\end{restatable}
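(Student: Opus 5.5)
The plan treats the two parts of \cref{lem:gauss} separately. Part (i) is essentially a restatement of \cref{def:cert} plus weak moment-LP duality. Part (ii) is classical Gauss quadrature for the orthonormal system $\{\pi_k\}$ of $\mu$.

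For (i), the forward direction is definitional. A degree-$2t$ moment-only certificate, as in \cref{def:cert}, consists of a minorant $p\le\mathbf 1_B$ with $\deg p\le2t$, together with a derivation of $\Esudo[p]\ge\rho$ for every consistent $\Esudo$. Since the moment axioms fix $\Esudo[u^k]=m_k$ for $k\le2t$, we have $\Esudo[p]=\sum_k p_km_k=\int p\,d\nu$ for every nonnegative $\nu$ matching $m_0,\dots,m_{2t}$. So the derivation from the pseudo-expectation axioms and the derivation from the moment axioms $\{\int u^k d\nu=m_k\}$ prove the same linear inequality in the same coefficients.

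For the converse, I would observe that a minorant $p$ with $\sum_k p_km_k\ge\rho$ already constitutes the certificate. The inequality is a numerical fact about the fixed vector $(m_k)$, so it is a degree-$0$ consequence of the axioms. Transfer to the true band mass then uses only $\E_\mu[p]\le\mu(B)$.

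The one subtlety in (i) is the wording ``SoS proof''. I would invoke \cref{rem:soslb}: by Markov--Luk\'acs, univariate nonnegativity of $\mathbf 1_B-p$ on $\R\setminus B$ has a degree-respecting SoS certificate $\sigma_0+\sigma_1 g$. Hence the minorant condition itself is SoS-expressible at degree $2t$, and the equivalence holds at the level of SoS proofs, not just polynomials.

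For (ii), I would proceed in the standard order. First, infinite support makes the Hankel matrix positive definite, so the orthonormal polynomials $\pi_0,\dots,\pi_{t+1}$ exist. Next, I would show that $\pi_{t+1}$ has $t+1$ real simple zeros. Suppose it changed sign at only $k\le t$ points $z_j$. Then $\pi_{t+1}\prod_j(u-z_j)$ would have constant sign and not vanish $\mu$-a.e., because the support is infinite. But by orthogonality to degree $\le t$ its integral is zero, a contradiction.

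Then I would define $\lambda_i=\int\ell_i\,d\mu$, where $\ell_i$ are the Lagrange basis polynomials on the nodes, so that exactness holds for degree $\le t$ by interpolation. To extend exactness to degree $\le 2t+1$, I would divide: $g=h\pi_{t+1}+r$ with $\deg h\le t$ and $\deg r\le t$. Orthogonality kills $\int h\pi_{t+1}\,d\mu$, and $g(u_i)=r(u_i)$ at the nodes, so $\int g\,d\mu=\int r\,d\mu=\sum_i\lambda_i r(u_i)=\sum_i\lambda_i g(u_i)$.

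Positivity follows by applying exactness to $g=\ell_i^2$, which has degree $2t$. This gives $\lambda_i=\int\ell_i^2\,d\mu>0$, strictly positive because the support is infinite. It also identifies $\lambda_i$ with $\lambda_{t+1}(u_i)$, consistent with \cref{fact:christoffel-mass}.

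I expect the main obstacle to be making (i) precise rather than (ii). The issue is that ``degree-$2t$ SoS proof from moment axioms'' has to be pinned down so that the equivalence is not vacuous, which I would handle as above by appeal to \cref{rem:soslb}. Part (ii) is textbook \citep{Szego39,Gautschi04} and needs only the infinite-support hypothesis, which is used twice: for the simplicity and reality of the zeros, and for strict positivity of the weights.
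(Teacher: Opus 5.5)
Your proposal is correct, and it differs from the paper's proof mainly in how much it proves versus cites.

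\textbf{Part (ii).} The paper only cites the classical Gauss-quadrature facts from Szeg\H{o} and Gautschi. You prove them from scratch:
\begin{itemize}
\item existence of $\pi_{t+1}$ from the positive-definite Hankel matrix;
\item the sign-change argument for $t{+}1$ real simple zeros;
\item Lagrange weights, with exactness obtained by dividing by $\pi_{t+1}$;
\item positivity from exactness applied to $\ell_i^2$.
\end{itemize}
This makes the argument self-contained, shows exactly where infinite support is used, and yields $\lambda_i=\int\ell_i^2\,d\mu$.

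\textbf{Part (i).} The paper argues soundness by applying the proof to the pseudo-expectation induced by $\mu$. It then gives an informal ``indicator-sandwich'' converse: a moment-only lower bound cannot evaluate $\mathbf 1_B$, so it must pass through a degree-$\le 2t$ minorant.

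You instead read (i) literally against \cref{def:cert}. Because the axioms fix $\Esudo[p]=\sum_k p_k m_k$, the SoS derivation is just a degree-$0$ numerical inequality, and the pseudo-expectation and measure formulations coincide. You then place the only genuine SoS content in the minorant constraint, which Markov--Luk\'acs makes degree-$2t$ certifiable (\cref{rem:soslb}).

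The paper's wording aims at the broader claim that no moment-only argument can avoid a minorant, but it only sketches that claim. Your version proves exactly what the definition supports and makes the link to the Lasserre formulation precise.

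\textbf{Two small touch-ups.}
\begin{itemize}
\item Identifying $\lambda_i=\lambda_{t+1}(u_i)$ needs more than $\lambda_{t+1}(u_i)\le\int\ell_i^2\,d\mu$. You also need the reverse bound $\int p^2\,d\mu=\sum_j\lambda_j p(u_j)^2\ge\lambda_i$ for every admissible $p$, which follows from exactness in degree $2t$.
\item The SoS form of $p\le\mathbf 1_B$ needs two representations: the interval one for $1-p\ge 0$ on $B$, and the two-ray one for $-p\ge 0$ off $B$.
\end{itemize}
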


\begin{restatable}[Largest-root bound]{lemma}{lemroot}
\label{lem:root}
For centered log-concave $\mu$ at scale $s$, the largest absolute zero $R_{t+1}=\max_i\abs{u_i}$ of the orthonormal polynomial $\pi_{t+1}$ satisfies: $R_{t+1}\le 2s\sqrt{t+1}$ for a sub-Gaussian tail ($e^{-\abs{u/s}^2}$, exact for Hermite), and $R_{t+1}=O(s\,t)$ for a sub-exponential tail ($e^{-\abs{u/s}}$, Freud exponent $\alpha=1$). More generally, for a Freud exponent $\alpha\in[1,2]$, $R_{t+1}\asymp s\,t^{1/\alpha}$.
\end{restatable}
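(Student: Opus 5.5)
Working with the recurrence and Jacobi matrix of the monic orthogonal polynomials of $\mu$ is the natural route. Because $\mu$ is centered, and for the two named cases symmetric, the recurrence is $u\,\pi_k(u)=b_{k+1}\pi_{k+1}(u)+a_k\pi_k(u)+b_k\pi_{k-1}(u)$ with $a_k\equiv0$ in the symmetric case. The zeros $u_1,\dots,u_{t+1}$ of $\pi_{t+1}$ are exactly the eigenvalues of the truncated $(t{+}1)\times(t{+}1)$ Jacobi matrix $J_{t+1}$, which is tridiagonal with diagonal $(a_k)$ and off-diagonal $(b_k)_{k\le t}$. Gershgorin then gives
\[
R_{t+1}\;\le\;\max_{k\le t}\bigl(\abs{a_k}+b_k+b_{k+1}\bigr),
\]
so the lemma reduces to growth bounds on the recurrence coefficients: $b_k\lesssim s\sqrt k$ (sub-Gaussian) and $b_k\lesssim s\,k$ (sub-exponential), together with $\abs{a_k}$ of the same order.

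For the sub-Gaussian case I will rescale to the Hermite weight $e^{-u^2/s^2}$. There $a_k=0$ and $b_k=s\sqrt{k/2}$ exactly. Gershgorin gives $R_{t+1}\le s\bigl(\sqrt{t/2}+\sqrt{(t+1)/2}\bigr)\le 2s\sqrt{t+1}$, which is the stated ``exact for Hermite'' bound.

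For the sub-exponential case I will bound $b_k$ by moments. Since $b_k=\int u\,\pi_k\pi_{k-1}\,d\mu$, Cauchy--Schwarz with orthonormality gives $b_k\le(\int u^2\pi_{k-1}^2\,d\mu)^{1/2}$. A cleaner route is the extremal characterization: $b_1\cdots b_k=\kappa_k^{-1}$ equals the $L^2(\mu)$-norm of the monic polynomial of least norm of degree $k$. Testing against $u^k$ gives
\[
\prod_{j\le k}b_j\;\le\;m_{2k}^{1/2}\;\le\;(Cks)^{k},
\]
using the log-concave moment growth $m_{2k}\le(C\,2k)^{2k}s^{2k}$ quoted in \cref{sec:prelim}. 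This controls only the geometric mean of the $b_j$, not each one. I therefore plan to bound $R_{t+1}$ directly by a moment argument instead. The largest zero satisfies $R_{t+1}\le\norm{J_{t+1}}$, and the $2\ell$-th moment of the spectral measure of $J_{t+1}$ at $e_0$ agrees with $m_{2\ell}$ for $2\ell\le 2t+1$.

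Lower-bounding the Gauss weight at the extreme node is the main obstacle, and I would try the Christoffel bound first. The Gauss weight at the extreme node is $\lambda_{t+1}(u_{\max})$, and I need it not too small. Granting $\lambda_{t+1}(u_{\max})\ge e^{-O(t)}$, Markov with $\ell=t$ gives
\[
\lambda_{t+1}(u_{\max})\,R_{t+1}^{2t}\;\le\;m_{2t}\;\le\;(Cts)^{2t},
\]
hence $R_{t+1}\le e^{O(1)}\,Cts=O(st)$. To get that weight bound I would test \cref{fact:christoffel-mass} with $p=\ell_{\max}$, the Lagrange basis polynomial, or compare with the explicit Laguerre-type weight $e^{-\abs{u}/s}$. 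That weight has $b_k\asymp sk$ known in closed form via Pollaczek/Freud-type recurrences, and Gershgorin then gives $O(st)$ directly. I would present that comparison as the proof for the exact Freud weight $e^{-\abs{u/s}}$, which is what the lemma's parenthetical specifies.

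For general $\alpha\in[1,2]$, $R_{t+1}\asymp s\,t^{1/\alpha}$ is the Levin--Lubinsky statement that the largest zero is $(1+o(1))a_{t+1}$ with Mhaskar--Rakhmanov--Saff number $a_t\asymp s\,t^{1/\alpha}$ (\cref{fact:freud}). I will cite it as the flagged black box rather than reprove it.
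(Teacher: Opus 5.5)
\textbf{Gap in the sub-exponential clause.} Your skeleton is the paper's: the zeros of $\pi_{t+1}$ are the eigenvalues of the truncated Jacobi matrix, you bound them by Gershgorin, you use the exact Hermite coefficients for the sub-Gaussian clause (that part is fine), and you cite \citet{LevinLubinsky01} for general $\alpha$. The gap is the sub-exponential clause: neither of your two routes actually delivers $R_{t+1}=O(st)$.

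\emph{The Christoffel--Markov route.} It needs $\lambda_{t+1}(u_{\max})\ge e^{-O(t)}$. That is a \emph{lower} bound on a minimum, so testing \cref{fact:christoffel-mass} with $p=\ell_{\max}$ only gives $\lambda_{t+1}(u_{\max})\le\int\ell_{\max}^2\,d\mu$. By Gauss exactness that inequality is an equality, so it is a tautology. A real lower bound needs $\sum_{k\le t}\pi_k(u_{\max})^2\le e^{O(t)}$ at the spectral edge, which is exactly the potential-theoretic input. The elementary substitute $\lambda_{t+1}(R)\gtrsim e^{-R/s}/\poly(t)$ (Nikolskii on a length-$s$ interval near $R$) makes your Markov step vacuous. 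The reason is that $R^{2t}e^{-R/s}$ decreases for $R>2ts$ and never exceeds $(2ts/e)^{2t}\le(2t)!\,s^{2t}=m_{2t}$.

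\emph{The fallback.} The recurrence coefficients of $e^{-\abs u/s}$ are not known in closed form. It is not a classical weight, and $u\mapsto u^2$ turns it into $e^{-\sqrt v/s}v^{-1/2}\,dv$, not a Laguerre weight. The statement $b_k\asymp sk$ is Freud's conjecture at $\alpha=1$, proved by Lubinsky, Mhaskar and Saff with potential theory. That is the very black box you reserve for general $\alpha$. ``Comparing'' with some other explicit weight would also need a transfer principle for recurrence coefficients, which you do not supply. The paper's own proof likewise just asserts $a_k,b_k\asymp sk$ at this step, so you are not behind it. Still, as written neither argument is the elementary proof the lemma's remark claims.

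\textbf{How to close it.} The geometric-mean route you abandoned can be finished elementarily. What it lacks is a \emph{lower} bound on the earlier coefficients, together with a single-coefficient version of $\prod_{j\le k}b_j\le m_{2k}^{1/2}$.
\begin{enumerate}
\item \emph{Freud's identity.} Integrate by parts against $w(u)=\frac1{2s}e^{-\abs u/s}$; this is legitimate since $w$ is Lipschitz. Using $\pi_{k-1}'\perp\pi_k$ and $b_k=\kappa_{k-1}/\kappa_k$,
\[
  \frac{k}{b_k}=\int\pi_k'\pi_{k-1}\,d\mu=\frac1s\int\operatorname{sgn}(u)\,\pi_k\pi_{k-1}\,d\mu\le\frac1s ,
\]
so $b_k\ge sk$. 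This is where the specific tail enters.
\item \emph{Path expansion.} In the symmetric case $m_N=(J^N)_{00}$ is a sum of nonnegative path weights. Take the path that climbs to level $k$, oscillates $k-1$ times on the top edge, and descends. It gives $m_{4k-2}\ge b_k^{2k}\prod_{i<k}b_i^2$.
\item \emph{Combine.} With $m_{2n}=(2n)!\,s^{2n}$ and $\prod_{i<k}b_i\ge s^{k-1}(k-1)!$, you get $(b_k/s)^{2k}\le(4k-2)!/((k-1)!)^2$. Stirling then gives $b_k\le C\,sk$ with $C$ absolute ($C\to16/e$).
\item \emph{Conclude.} Gershgorin yields $R_{t+1}\le2\max_{k\le t}b_k=O(st)$.
\end{enumerate}
This actually proves the sub-exponential clause that both your proposal and the paper only assert.
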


\begin{restatable}[Log-concave quantiles and margin geometry]{lemma}{lemquant}
\label{lem:quant}
For centered log-concave $\mu$ at scale $s$, the $\beta$-quantile satisfies $\abs{c^\star}=\Theta(s\sqrt{\log(1/\beta)})$ in the sub-Gaussian case and $\Theta(s\log(1/\beta))$ in the sub-exponential case. Under near-isotropy ($\norm{w}\le1/\gamma$, $s=\Theta(1)$), a standardized half-width $\tau=\Theta(s)$ corresponds to physical margin $\gamma=\Theta(\tau\sqrt d)$.
\end{restatable}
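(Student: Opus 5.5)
The quantile part and the geometric part are separate, and I will treat them in turn.

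\emph{Quantile bounds.} Let $\bar F(u)=\Pr_\mu[U>u]$ be the tail function. For a log-concave $\mu$, $\bar F$ is log-concave. I will combine two standard facts, both restated in \cref{app:external}:
\begin{itemize}
\item tail decay: $\bar F(u)\le e^{-u/(Cs)}$ for $u\ge Cs$;
\item a matching lower bound: $\bar F(u)\ge e^{-C'u/s}$, from log-concavity of $\bar F$ together with $\bar F(s)\ge c_0$ for an absolute $c_0$.
\end{itemize}
Solving $\bar F(c^\star)=\beta$ then gives $\abs{c^\star}=\Theta(s\log(1/\beta))$ in the sub-exponential (Freud $\alpha=1$) case.

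For the sub-Gaussian case I read ``sub-Gaussian tail'' as two-sided: density comparable to $e^{-(u/s)^2}$ up to constant rescalings of $s$. Integrating the Mills-ratio bounds $\frac{u}{1+u^2}\phi(u)\le\bar\Phi(u)\le\phi(u)/u$, after rescaling, gives $\bar F(u)=e^{-\Theta((u/s)^2)}$ for $u\gtrsim s$. Inverting yields $\abs{c^\star}=\Theta(s\sqrt{\log(1/\beta)})$.

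One caveat for both cases: a merely sub-exponential upper bound alone would only give $O(\cdot)$. The $\Theta$ therefore depends on the tail class being two-sided, which is exactly how the cases are defined in \cref{lem:root}. I will state that reading explicitly.

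\emph{Margin geometry.} This is a rescaling argument, and I will keep it at the level of units.
\begin{itemize}
\item The hinge program optimizes over $\norm{w}\le1/\gamma$. A pancake of half-width $\tau$ in the standardized coordinate $U=w\cdot x$, with $w$ a unit direction, is the slab $\{\abs{w\cdot x'-w\cdot x}\le\tau\}$.
\item Under near-isotropy the typical norm is $\norm{x}=\Theta(\sqrt d)$. The margin normalization of \citet{Talwar20,Shen25} measures $\gamma$ relative to the data radius $R_{\mathrm{cl}}=\Theta(\sqrt d)$, so a physical separation $\tau$ in one unit direction corresponds to normalized margin $\tau/\sqrt d$.
\item Equivalently, when $x$ is renormalized to unit scale, a band of width $\Theta(s)$ in $U$ becomes width $\Theta(s/\sqrt d)$.
\end{itemize}
To match the statement's convention $\gamma=\Theta(\tau\sqrt d)$, I will make explicit which scaling of $\gamma$ is in force: $\gamma$ in units of the isotropic-scaled $1/\sqrt d$ coordinates. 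I will then check that this is the scaling used in \cref{thm:tradeoff}, where $\gamma=\Omega(\abs{c^\star}/(s\sqrt d))$ emerges.

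\emph{Expected obstacle.} The quantile asymptotics are routine. The main obstacle is the geometric sentence: its constants are convention-dependent, and the dimensional factor ($\sqrt d$ multiplied versus divided) must be reconciled with how the dichotomy \eqref{eq:tradeoff-dichotomy} is derived. My plan there is to do the computation once in unnormalized coordinates, fix the normalization so the pancake half-width and the margin are both measured along $w$ against the data radius, and then confirm consistency with \cref{cor:alpha-family}.
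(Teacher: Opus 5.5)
Your second bullet is the step that fails. Concavity of $\log\bar F$ bounds $\bar F$ from \emph{above} outside the anchor points. That is exactly how the decay bound of your first bullet is proved, and it never gives a lower bound by extrapolation. So $\bar F(u)\ge e^{-C'u/s}$ is not a consequence of log-concavity, and it is not among the facts of \cref{app:external}, which only supply moment growth. Three log-concave counterexamples:
the Gaussian has $\bar F(u)=e^{-\Theta(u^2/s^2)}$;
the uniform law on $[-\sqrt3\,s,\sqrt3\,s]$ has $\bar F\equiv0$ beyond $\sqrt3\,s$;
and for $U=s-E$, with $E$ exponential of mean $s$, even the anchor $\bar F(s)\ge c_0$ fails.

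So the $\Omega$ half of $\abs{c^\star}=\Theta(s\log(1/\beta))$ must come from the two-sided Freud-$\alpha{=}1$ hypothesis, as you already do in the sub-Gaussian case. Concretely, integrate a density lower bound $(c_1/s)\,e^{-\abs u/(as)}$ on the side where $c^\star$ sits, and take $\beta\le\beta_0$, since for $\beta$ near $\tfrac12$ the quantile can be $0$. Your caveat says as much and the bullet contradicts it; drop the bullet and treat both classes the same way.

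This matters. The paper's proof cites only the Lov\'asz--Vempala \emph{upper} tail bounds and ``inverts'' them, which gives $\abs{c^\star}=O(\cdot)$ only. But \cref{thm:tradeoff} uses the $\Omega$ direction: a far quantile is what forces large degree. Your two-sided reading is the repair that inversion needs, provided the lower tail really comes from the hypothesis.

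On the geometry, your unit computation is right and the literal statement is not. Take $\Sigma\approx\sigma^2I$ with $\sigma=1/\sqrt d$. A physical half-width $h$ along the unit direction $w/\norm w$ has standardized half-width $h/\sigma=h\sqrt d$ in $U=w\cdot x$; the factor $\norm w$ cancels between the band and $s\approx\norm w\,\sigma$. Hence $\tau/s=\Theta(\gamma\sqrt d)$, i.e.\ $\gamma=\Theta(\tau/(s\sqrt d))$.

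That is the relation the lemma's consumers use. The proof of \cref{thm:tradeoff} passes to $\gamma=\Omega(\sqrt{\log(1/\eps)}/\sqrt d)$, and \cref{cor:alpha-family} passes to $\gamma=\Omega(\abs{c^\star}/(s\sqrt d))$. The literal $\gamma=\Theta(\tau\sqrt d)$ would instead give $\gamma=\Omega(\sqrt{d\log(1/\eps)})$ there. The paper's proof makes the same homogenize-and-normalize move but simply asserts the inverted relation.

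Your plan to ``match the statement's convention'' by re-reading the units of $\gamma$ cannot succeed: measuring $\gamma$ in units of $1/\sqrt d$ returns $\tau/s$, not $\tau\sqrt d$. Instead, prove $\gamma=\Theta(\tau/(s\sqrt d))$ and record that the $\sqrt d$ in the statement is inverted.
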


The cloak construction behind \cref{thm:degbarrier} is also deferred; we state it as a lemma for the appendix.

\begin{restatable}[Two-cluster moment cloak]{lemma}{lemcloak}
\label{lem:cloak}
For $\eta\ge\Omega(1/d)$ there exist two log-concave components $\mathcal C_\pm$ on $\R^d$ with $\Sigma_{\mathcal C_\pm}\preceq\sigma^2 I$ and means $\pm(\sigma/\sqrt\eta)e_1$ (so $\norm{\mu_\pm}=\sigma/\sqrt\eta\le r$ fits the regime $r=O(\sigma\sqrt d)$ exactly when $\eta\ge\Omega(1/d)$) such that the combined $3\eta n$-point cluster (clean $2\eta n$ plus dirty $\eta n$, both drawn from $\mathcal C_\pm$) spends exactly the $e_1$-variance budget $\sigma^2$ and is invariant under permuting dirty and clean cluster points.
\end{restatable}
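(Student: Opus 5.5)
The plan is to build $\mathcal C_\pm$ from a single explicit log-concave law: translates of an isotropic Gaussian (or the uniform law on a ball), $\mathcal C_\pm=\mathcal N(\pm a\,e_1,\ \sigma_0^2 I)$ with $\sigma_0\le\sigma$. The shift $a$ is tuned so that the $e_1$-second moment of either component is exactly $\sigma^2/\eta$. Since $\E_{\mathcal C_\pm}[(e_1\cdot x)^2]=a^2+\sigma_0^2$, I would set
\[
a^2\ =\ \sigma^2/\eta-\sigma_0^2 .
\]
This gives $a=(1-O(\eta))\,\sigma/\sqrt\eta$, so the means are $\pm(\sigma/\sqrt\eta)e_1$ up to a $1+O(\eta)$ factor, which the $\Theta$'s downstream absorb. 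Alternatively one can take $\sigma_0=o(\sigma)$ and set $a=\sigma/\sqrt\eta$ exactly, accepting an $O(\eta\sigma_0^2)$ surplus in the budget.

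The individual claims then follow in order.

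\emph{Log-concavity and covariance.} Both hold immediately, since translates of Gaussians are log-concave and $\Sigma_{\mathcal C_\pm}=\sigma_0^2I\preceq\sigma^2I$. This is the standard fact the remark after \cref{thm:degbarrier} alludes to.

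\emph{Regime.} The condition $\norm{\mu_\pm}=\sigma/\sqrt\eta\le r=C\sigma\sqrt d$ is equivalent to $\eta\ge 1/(C^2d)$, which is exactly the hypothesis $\eta\ge\Omega(1/d)$. Conversely, if $\eta=o(1/d)$ the mean leaves the allowed ball, which is why the regime appears in the statement.

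\emph{Variance budget.} Let the adversary draw its $\eta n$ dirty points half from $\mathcal C_+$ and half from $\mathcal C_-$, so the dirty part is centered along $e_1$. Then $\E\big[\frac1n\sum_{\Sdirty}(e_1\cdot x_i)^2\big]=\eta(a^2+\sigma_0^2)=\sigma^2$. This is the moment profile of the resolution principle at $c=\sigma/\sqrt\eta$ (\cref{fact:gap}, \cref{ex:hiding}). The same computation applied to the $2\eta n$ clean cluster points, which I take to be half from each component as well, shows the combined cluster's contribution scales linearly in its size, as the barrier proof needs.

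\emph{Permutation invariance.} The $3\eta n$ cluster points (clean and dirty) are i.i.d.\ draws from the same mixture $\frac12\mathcal C_++\frac12\mathcal C_-$ with matching $\pm$ counts. Their joint law is therefore exchangeable, so the law of the multiset, together with any label-oblivious $q$ computed from it, is invariant under any permutation exchanging dirty and clean cluster points. This exchangeability is what converts ``removes at most $\xi n$ points'' into ``keeps at least a $\tfrac13$ share of dirty weight in expectation'' in \cref{thm:degbarrier}.

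The main obstacle is the word ``exactly'' in the variance budget. With finitely many samples, the empirical second moment of the dirty points fluctuates by a relative $O(1/\sqrt{\eta n})$. The adversary cannot simply rescale its points to fix this, because doing so would break exchangeability with the clean points. My first attempt is to state and use the budget in population, or in expectation, form. I would then show by a Chernoff/log-concave concentration bound that the empirical value is $\sigma^2(1\pm o(1))$ with high probability once $n\gg 1/\eta$, and absorb that slack into $\bar\sigma^2=\Theta(\sigma^2)$ in \cref{thm:degbarrier}; that theorem only needs the budget up to constants, since $\bar\sigma^2$ is defined there as the empirical variance of the corrupted sample.
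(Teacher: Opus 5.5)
Your proposal is correct and follows the paper's construction: two isotropic log-concave bumps centered at $\pm(\sigma/\sqrt\eta)e_1$, with clean and dirty cluster points drawn from the same law, so the dirty points contribute $\eta\cdot\sigma^2/\eta=\sigma^2$ along $e_1$ and are exchangeable with the clean ones. The differences are tightenings of that same argument, not a different route: you choose an explicit Gaussian, where the paper only asserts that a ``unimodal'' bump is log-concave; you tune $a^2=\sigma^2/\eta-\sigma_0^2$ so the budget is exact in expectation, where the paper writes $\approx$ and drops the within-cluster $\eta\sigma^2$ term; and you read ``exactly'' and ``invariant'' via exchangeability plus concentration.
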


\section{Discussion}
\label{sec:discussion}

\subsection{Context and comparisons}
\label{sec:related}

\paragraph{The reweighted-hinge cluster, and the two limitations we explain.}
Hinge-loss minimization is robust because the loss is Lipschitz: a corrupted point can tilt the gradient by only so much. \citet{Talwar20} made this quantitative through the dense-pancake condition and a first-order (KKT) analysis, tolerating a malicious fraction $\eta=\Omega(\gamma)$ at margin $\gamma=\Omega(\log d/\sqrt d)$. \citet{Shen25} sharpened the constant to $\eta_0$ a fixed absolute constant by \emph{reweighting} the hinge: an outlier-removal program assigns weights $q\in[0,1]^n$ and the only distributional fact it uses about the clean marginal is a \emph{degree-$2$} (empirical-variance) bound on the reweighted directional moment, through the controlling quantity $\linsumnorm(q\circ\Sdirty)$. \citet{ZengShen25} carry the same degree-$2$ reweighting program into the attribute-efficient setting, inheriting the same two features, and those two features are exactly what we name. The multiclass work of \citet{AdhikariZeng26} instead removes outliers by cluster-based pruning with a \emph{standard} multiclass hinge rather than by reweighting, so it lies outside the reweighted-hinge program our barriers target. \citeauthor{Shen25} states only that the breakdown rate ``cannot be made close to the optimal breakdown point of $\tfrac12$ due to inherent limitations of our approach,'' without identifying the limitation, and that the margin has a ``logarithmic dependence on $1/\eps$ that appears less natural,'' which he conjectures ``can be improved to $\gamma=\Omega(1/\sqrt d)$'' through an active-learning subroutine. We make the limitation precise: it is the \emph{degree} of the certificate. \Cref{thm:degbarrier} gives a clean log-concave instance on which degree $2$ is \emph{provably} stuck at the $\eta^{1/2}$ rate that forces \citeauthor{Shen25}'s small constant, while degree $4$ already reaches $\eta^{3/4}$; the constant is small because the exponent is $\tfrac12$, a property of degree $2$, and raising the degree lifts it toward the pancake cap (\cref{thm:algorithm}, $\eta^{1-1/2t}$, not a march to $1/2$). The margin he calls less natural is, we show, \emph{forced}: \cref{thm:tradeoff} proves the $\log(1/\eps)$ is conserved by every one-shot certificate of fixed degree, so it is the intrinsic price of the degree, not an artifact of his analysis, removable only by raising the degree (in $n^{\Omega(\log 1/\eps)}$ time), enlarging the margin, or leaving the one-shot model by iterating, the very active-learning route he conjectures (\cref{sec:open}). The companion \cref{thm:algorithm} raises the program to degree $2t$ and recovers the matching upper side $\eta^{1-1/2t}$, so $t=1$ (\citeauthor{Shen25}) is the exact degree-$2$ corner of a frontier rather than an isolated bound. The floor of \cref{prop:floor} brackets that frontier from the other side, though not on the same instances: its two-cluster construction concentrates the marginal in two balls and therefore necessarily violates the dense-pancake hypothesis \cref{thm:algorithm} needs, so the two results meet in \emph{rate} ($\Theta(\eta)$) while their hypotheses hold on disjoint sets of instances: a bracket from opposite sides, not a matched pair on one instance.

\paragraph{The machinery we import.}
Degree raising is borrowed wholesale from the sum-of-squares (SoS) literature on robust statistics. The mechanism that converts a degree-$2t$ certified-moment bound into an outlier-removal guarantee at rate $\eta^{1-1/2t}$ is the proofs-to-algorithms paradigm of \citet{HopkinsLi18} and \citet{KothariSteinhardtSteurer18}, and the property that makes it apply (certifiable hypercontractivity, an SoS proof that $\E[(w\cdot x)^{2t}]$ is bounded by a power of $\E[(w\cdot x)^2]$) is due to \citet{KlivansKothariMeka18}, with the general subgaussian case settled by \citet{DiakonikolasHopkinsPensiaTiegel24}. For us the certifiable constant is $(Ct)^{2t}$, the sub-exponential rate of a general log-concave marginal (a single Laplace coordinate already forces $m_{2t}/m_2^t=(\Theta(t))^{2t}$, so the often-quoted $(Ct)^{t}$ is false at this generality); this is what couples the achievable $\eta$ to the SoS degree in \cref{thm:algorithm}.

\paragraph{The dual direction we provably do not have.}
The same SoS toolkit can certify \emph{anti}-concentration: \citet{BakshiKothariRTV24} give quasi-polynomial SoS certificates that a slab carries \emph{at most} a prescribed mass, beyond the Gaussian case. Our dense-pancake premise needs the opposite (a degree-$2t$ certificate that an off-center band carries \emph{at least} $\rho$ mass), and \cref{thm:tradeoff} is precisely the statement that no such \emph{pro}-concentration certificate exists below degree $\Omega((|c|/s)^2)$. This is the cone asymmetry of \cref{sec:perspective-moment} read at the learning scale: the two faces of $\mathcal M_{2t}(\mu)$ are certifiable anti-concentration (the upper envelope) and the pro-concentration we need (the lower envelope), and only the former is low-degree. The conserved $\log(1/\eps)$ is that asymmetry's price.

\paragraph{What does not subsume our result.}
\citet{KlivansSTV25} achieve a $2\eta+\eps$ guarantee for learning halfspaces under heavy contamination by iterative filtering, an $O(1)$ noise rate that may look like it dominates our modest constant. It does not reach our setting on four counts: it is Gaussian, not general log-concave mixture; it assumes \emph{no} margin and so has no $\log(1/\eps)$ to conserve; it is a filtering algorithm, not a moment/SoS certificate, so the degree-cost question we study does not arise for it; and it targets agnostic (boundary-volume) error rather than the clean-inlier error $\err_{\Dx}(\hat w)$ of the malicious model. We cite it as a reference point for what \emph{is} achievable absent these constraints, not as a result that absorbs the reweighted-hinge line. The closest prior \emph{method}-specific barrier to \cref{thm:tradeoff} is \citet{LongServedio11}, who show that learners minimizing a convex proxy for the $0/1$ loss cannot tolerate malicious noise beyond $\Theta(\eps\gamma)$ when learning $\gamma$-margin halfspaces to error $\eps$. The scoped class is what differs: theirs is defined by the \emph{objective} (convex-proxy minimizers), ours by the \emph{distributional input} (bounded-degree moment certificates), and our barrier is paired with an upper side at the same exponent (\cref{thm:algorithm}). The agnostic and nasty-noise halfspace tradition, \citet{DiakonikolasKaneStewart18} via iterative spectral filtering and the near-optimal SQ lower bounds of \citet{DiakonikolasKaneZarifis20}, lives in a genuinely different model (OPT$+\eps$ error, not the margin-plus-malicious model here) and provides our methodological foil below.

\paragraph{Method-specific barriers, and the SQ contrast.}
Our lower bounds rule out a specific class of methods. \cref{thm:tradeoff,thm:degbarrier} are barriers against the \emph{moment/SoS-certificate method}, any learner whose only distributional input is the degree-$\le 2t$ certified moments of the clean marginal. They are not information-theoretic, and they are not statistical-query hardness. This scoping is what turns \citeauthor{Shen25}'s informal ``limitations of our approach'' into a theorem about the approach, while leaving open that a procedure outside the model does better. The contrast with the unconditional/SQ tradition is thus between two kinds of impossibility: a SQ lower bound \citep{DiakonikolasKaneZarifis20} says no efficient query algorithm of a broad class succeeds, whereas our degree barrier says no certificate of a bounded resource succeeds, and names the resource (SoS degree) whose increase provably buys the way out (traced by \cref{thm:algorithm}). The barrier and the algorithm meet at the same exponent, which is why we state them together.

\subsection{Open problems}
\label{sec:open}

\begin{itemize}
\item \textbf{The tight breakdown $\eta^\star=\Theta(\gamma/\sigma)$.} \cref{prop:floor} is rate-tight ($\err\ge\eta/(2(1-\eta))$) but not threshold-tight: it certifies the $\Theta(\eta)$ floor, not the constant at which learning fails. The two-point method itself is capped at $\eta/(2(1-\eta))$ by the Kearns--Li envelope (\cref{app:floor}, method ceiling in \cref{rem:app-method-ceiling}), so closing the gap to $\eta^\star=\Theta(\gamma/\sigma)$ (and to the upper cap $\eta_0=\rho/4$) requires a multi-point or SQ band-mass argument that exploits the low boundary density $\rho=\Theta(\gamma/\sigma)$, not a sharper pair of instances. We conjecture the tight breakdown is $\eta^\star=\Theta(\gamma/\sigma)$.
\item \textbf{The matching certificate-degree upper bound (now reduced to one estimate).} \cref{thm:tradeoff} lower-bounds the degree of an off-center pro-concentration certificate by $\Omega((|c|/s)^2)$. \Cref{prop:reduction} turns the matching upper bound into the weighted-Chebyshev extremal problem $\mathsf E_t(B)$, and \cref{thm:tight} makes the threshold tight, $2t=\Theta((|c|/s)^2)$, \emph{modulo} a single estimate, \cref{lem:wext}: that the extremal cost stays below $1$ up to the Mhaskar--Rakhmanov--Saff edge. Classical Christoffel and finite--infinite-range asymptotics \citep{MateNevaiTotik91,Totik00,LevinLubinsky01} deliver this asymptotically; deriving its uniform non-asymptotic constant is the residual open problem, and by \cref{rem:nogo} it genuinely requires weighted potential theory rather than an elementary construction.
\item \textbf{A self-contained general-log-concave largest-root bound.} The Gauss-quadrature obstruction needs the largest orthogonal-polynomial root for the corrupted weight. We prove the two endpoints elementarily (sub-Gaussian $2s\sqrt{t+1}$ and sub-exponential $O(st)$, via Jacobi-matrix eigenvalues and Gershgorin), but the fully general log-concave case currently invokes the exponential-weight asymptotics of \citet{LevinLubinsky01}. A self-contained largest-root bound for general log-concave weights would remove that black box.
\item \textbf{Does iterative localization escape the barrier?} Our barriers are scoped to one-shot certificates. \citet{Talwar20} and the line after it raise the possibility of an \emph{iterative} localization in the style of \citet{AwasthiBalcanLong17} that re-certifies on a shrinking band. Such a procedure leaves the one-shot-certificate model by construction, so \cref{thm:tradeoff} does not forbid it; whether iteration provably escapes the margin--degree barrier, and at what compute, is open and is the most likely route to removing the $\log(1/\eps)$ without paying $n^{\Omega(\log 1/\eps)}$. One \emph{heuristic} accounting suggests the answer is yes: a first-order recursion for band-conditioned re-certification contracts quadratically, $\sigma_{k+1}\approx\sqrt{\sigma_k\eta}$, which would reach the certificate-method floor in $O(\log\log(1/\eta))$ stages. We flag this as heuristic, not a claim: the halfspace truncation-bias step in that recursion is unproven, and it is exactly the step an honest analysis would have to supply.
% claim: R03
\item \textbf{A degree-$2t$ SoS lower bound for the joint program.} By \cref{rem:soslb}, the margin--degree barrier is a degree-$2t$ Sum-of-Squares lower bound for the \emph{one-dimensional}, per-direction band-mass certification. Lifting it to a degree-$2t$ SoS lower bound for the full $d$-dimensional reweighting program of \cref{alg:main} over all directions \emph{jointly} is open for $t\ge2$: the per-direction reduction is lossy (one degree-$2t$ pseudo-moment matrix couples all directions), and the one-dimensional nonnegativity-equals-SoS exactness that powers \cref{rem:soslb} fails in $\R^{d}$. The obstruction to simply gluing the axiswise witnesses can be stated exactly. For $d\ge2$, no nonzero nonnegative measure on $\R^{d}$ gives zero mass to \emph{every} central width-$w$ band: take any point of its support and any unit vector orthogonal to that point, and the open slab around it already carries positive mass. The axiswise Gauss witnesses, each of which empties its own band, therefore admit no joint realization as a measure. Signed objects do escape: there is an explicit signed radial measure whose every central band has mass exactly $0$, so a joint refutation would have to be a genuine pseudo-expectation rather than a measure, which is precisely what would make the lift an SoS lower bound for the method itself rather than a bookkeeping exercise.
\item \textbf{Statistical-query hardness for the learning problem.} Our barriers limit the moment/SoS-certificate method; they are not algorithm-independent. Whether margin-halfspace learning under malicious noise is hard for the broad statistical-query class \citep{DiakonikolasKaneZarifis20}---a different object, needing a family of nearly-indistinguishable hard instances rather than a single certification barrier---is open, and would be the unconditional counterpart to the method-specific barriers proved here.
\end{itemize}

\subsection{Conclusion}
\label{sec:conclusion}

Read through the degree-cost lens, robust margin-halfspace learning by reweighted hinge has a single governing quantity: the SoS degree of the outlier-removal certificate. Degree $2$ is the \citeauthor{Shen25} corner; degree $2t$ buys outlier tolerance $\eta^{1-1/2t}$ and costs $n^{O(t)}$ time; and two barriers fence the achievable region from below: one (\cref{thm:degbarrier}) showing degree $2$ cannot beat $\eta^{1/2}$, the other (\cref{thm:tradeoff}) showing any fixed degree conserves the $\log(1/\eps)$ margin, with the algorithm of \cref{thm:algorithm} tracing the frontier they bound. The barriers are method-specific, against the certificate itself; making the degree the resource is what lets the algorithm and the limits be stated to the same exponent. The quantity that sets that degree is not new: it is the Christoffel function of the marginal, the same object data analysis reads to spot an outlier, here measuring instead the room an adversary has to hide one.

\bibliographystyle{plainnat}
\bibliography{refs}

\appendix
\newpage
\section{Notation and map of results}
\label{app:notation}

For convenience we collect the recurring notation and chart how the results depend on one another. \Cref{tab:notation} lists the symbols; \cref{fig:map} is the dependency graph of the theorems, lemmas, and external facts.

\subsection{Notation}
\label{app:notation-table}

{\footnotesize
\setlength{\tabcolsep}{4.5pt}
\renewcommand{\arraystretch}{1.2}
\begin{center}
\resizebox{\textwidth}{!}{%
\begin{tabular}{@{}l l @{\hspace{1.1em}} l l@{}}
\toprule
Symbol & Meaning & Symbol & Meaning \\
\midrule
\multicolumn{4}{@{}l}{\emph{Noise model and algorithm}}\\
$\eta$ & malicious noise rate & $\eta_0(t)$ & tolerated rate at degree $2t$\\
$\eps$ & target inlier error & $\err_{\Dx}$ & inlier error of $\hat w$\\
$S=\Sclean\cup\Sdirty$ & clean / dirty sample & $q\in[0,1]^n$ & reweighting weights\\
$\hinge$ & hinge loss & $\linsumnorm(q\circ\Sdirty)$ & dirty-gradient quantity \eqref{eq:lsn}\\
$t$ & degree parameter (certificate degree $2t$) & $n$ & sample size\\
\midrule
\multicolumn{4}{@{}l}{\emph{Clean distribution}}\\
$\Dx=\tfrac1K\sum_j\D_j$ & clean marginal & $\D_j$ & log-concave component\\
$\Sigma_j\preceq\sigma^2 I$ & component covariance & $r$ & mean bound $\norm{\mu_j}\le r$\\
$\gamma$ & margin & $w^\star$ & target halfspace\\
$U=w\cdot x$ & one-dimensional projection & $s=\sqrt{w^\top\Sigma w}$ & projected std.\ dev.\\
$\bar\sigma^2(w)$ & reweighted directional variance & $\mu$ & law of $U$\\
\midrule
\multicolumn{4}{@{}l}{\emph{Band, certificate, moments}}\\
$B=[c-\tau,c+\tau]$ & band & $\tau,\ c$ & half-width, center\\
$m_k=\E_\mu[U^k]$ & moments & $\R[u]_{\le 2t}$ & polynomials of degree $\le 2t$\\
$\Esudo$ & degree-$2t$ pseudo-expectation & $p$ & polynomial minorant ($p\le\mathbf 1_B$)\\
$\rho$ & pancake density & $\pancake^\tau_w$ & dense pancake (band)\\
$\mathsf E_t(B)$ & weighted-Chebyshev value & $\mathrm{LEAK}$ & Christoffel--Darboux leakage\\
\midrule
\multicolumn{4}{@{}l}{\emph{Orthogonal polynomials (\cref{sec:perspective})}}\\
$\pi_k$ & orthonormal polynomials of $\mu$ & $K_n(x,y)$ & Christoffel--Darboux kernel\\
$\lambda_n(x)=K_n(x,x)^{-1}$ & Christoffel function & $u_j$ & Gauss(--Hermite) nodes\\
$\lambda_j=\lambda_{t+1}(u_j)$ & Christoffel (Gauss) weights & $\ell_j$ & fundamental Lagrange polys\\
$R_{t+1}$ & largest root of $\pi_{t+1}$ & $\rho_{\mathrm{eq}}$ & equilibrium density\\
$W_B=\sum_{u_j\in B}\lambda_j$ & Christoffel band mass & $e_0=\sum_{u_j\in B}\ell_j$ & in-band block sum\\
\bottomrule
\end{tabular}}
\captionof{table}{Summary of notations.}
\label{tab:notation}
\end{center}
}

\subsection{Map of results}
\label{app:map}

Arrows in \cref{fig:map} point from each ingredient \emph{up} to the result that uses it: the external facts feed the lemmas, which feed the headline results. The Christoffel framework of \cref{sec:perspective} (\textbf{Prop~\ref{prop:resolution}}, \textbf{Prop~\ref{prop:reduction}}) is the organizing layer from which the barriers and the algorithm of \cref{sec:results} are read off.

\clearpage
\begin{figure}[t]
\centering
\resizebox{\textwidth}{!}{%
\begin{tikzpicture}[
  >=Stealth,
  res/.style={draw, rounded corners, align=center, inner sep=2.5pt, font=\scriptsize,
              minimum height=7.5mm, minimum width=18mm},
  mid/.style={draw, rounded corners, align=center, inner sep=2.5pt, font=\scriptsize,
              minimum height=7.5mm, minimum width=13mm},
  fct/.style={draw, rounded corners, align=center, inner sep=2.5pt, font=\scriptsize,
              minimum height=7mm, minimum width=13mm},
  e/.style={<-, thin}
]
\node[res] (t1)   at (1.5,0)     {\textbf{Thm~\ref{thm:tradeoff}}\\margin--degree};
\node[res] (tt)   at (5.3,0)     {\textbf{Thm~\ref{thm:tight}}\\tight (cond.)};
\node[res] (pres) at (7.6,0)     {\textbf{Prop~\ref{prop:resolution}}\\resolution};
\node[res] (t2)   at (10.3,0)    {\textbf{Thm~\ref{thm:degbarrier}}\\degree-2 barrier};
\node[res] (t3)   at (12.7,0)    {\textbf{Thm~\ref{thm:algorithm}}\\algorithm};
\node[res] (p4)   at (15.0,0)    {\textbf{Prop~\ref{prop:floor}}\\floor};
\node[mid] (ld)   at (0.5,-1.7)  {\textbf{Lem~\ref{lem:duality}}\\duality};
\node[mid] (lwi)  at (2.4,-1.7)  {\textbf{Lem~\ref{lem:witness}}\\witness};
\node[mid] (pr)   at (4.4,-1.7)  {\textbf{Prop~\ref{prop:reduction}}\\reduction};
\node[mid] (lw)   at (6.4,-1.7)  {\textbf{Lem~\ref{lem:wext}}\\wtd.-extremal};
\node[mid] (lq)   at (1.4,-3.4)  {\textbf{Lem~\ref{lem:quant}}\\quantiles};
\node[mid] (lg)   at (3.3,-3.4)  {\textbf{Lem~\ref{lem:gauss}}\\Gauss};
\node[mid] (lr)   at (5.1,-3.4)  {\textbf{Lem~\ref{lem:root}}\\root};
\node[mid] (lc)   at (9.7,-3.4)  {\textbf{Lem~\ref{lem:cloak}}\\cloak};
\node[mid] (lh)   at (12.0,-3.4) {\textbf{Lem~\ref{lem:hyper}}\\hypercontr.};
\node[fct] (ffr)  at (5.1,-5.1)  {\textbf{Fact~\ref{fact:freud}}\\Levin--\\Lubinsky};
\node[fct] (fcm)  at (7.6,-5.1)  {\textbf{Fact~\ref{fact:maxmass}}\\Christoffel\\$=$ max mass};
\node[fct] (flv)  at (10.9,-5.1) {\textbf{Fact~\ref{fact:lv}}\\Lov\'asz--\\Vempala};
\node[fct] (fhe)  at (12.7,-5.1) {\textbf{Fact~\ref{fact:hyper-ext}}\\hypercontr.\\(certif.)};
\node[fct] (fkl)  at (14.5,-5.1) {\textbf{Fact~\ref{fact:kl}}\\Kearns--Li};
\draw[e] (t1) -- (ld);  \draw[e] (t1) -- (lwi);  \draw[e] (t1) -- (lq);  \draw[e] (t1) -- (lr);
\draw[e] (lwi) -- (lg);
\draw[e] (tt) -- (pr);  \draw[e] (tt) -- (lw);
\draw[e] (t2) -- (lc);  \draw[e] (t2) -- (lh);
\draw[e] (t3) -- (lh);
\draw[e] (pr) -- (lg);
\draw[e] (lw) -- (lr);
\draw[e] (lr) -- (ffr);
\draw[e] (pres) -- (fcm);
\draw[e] (lh) -- (flv);  \draw[e] (lh) -- (fhe);
\draw[e] (p4) -- (fkl);
\end{tikzpicture}%
}
\caption{Map of results: arrows point from each ingredient \emph{up} to the result that uses it.
External facts (bottom, restated in \cref{app:external}) feed the lemmas, which feed the headline results (top). The \S\ref{sec:perspective} framework (\textbf{Prop~\ref{prop:resolution}}, \textbf{Prop~\ref{prop:reduction}}) is the Christoffel-function layer that organizes the margin--degree results, which are proved from the lemmas below.}
\label{fig:map}
\end{figure}

\section{Deferred proofs}
\label{app:proofs}

This appendix reproduces the full proofs. Each result is restated verbatim (via the starred macro, so the appendix copy cannot drift from \cref{sec:results}) and then proved. We proceed in dependency order: the margin--degree barrier and its supporting lemmas first, then the algorithm and the certifiable-hypercontractivity lemma it needs, then the degree-$2$ barrier and its cloak, and finally the breakdown floor. Where a step rests on an external theorem or a not-yet-self-contained sub-lemma, a remark after the proof says so explicitly.

\subsection{The margin--degree tradeoff}
\label{app:tradeoff}

Throughout this subsection $\mu$ is a centered probability measure on $\R$ with all moments finite and infinite support, $m_k=\int u^k\,d\mu$ its moments and $s$ its standard deviation; $B=[c-\tau,c+\tau]$ is a band of half-width $\tau=\theta s$, $\theta=\Theta(1)$, and $\rho=\Omega(1)$ a target mass. A \emph{degree-$2t$ moment-only certificate} of $\mu(B)\ge\rho$ is a polynomial minorant $p\le\mathbf 1_B$, $\deg p\le 2t$, together with a degree-$2t$ SoS proof \emph{from the moment axioms} $\{\int u^k\,d\nu=m_k\}_{k\le 2t}$ alone that $\int p\,d\nu\ge\rho$ (\cref{lem:gauss}\,(i)). Reading only the moments, such a proof certifies $\mu(B)\ge\E_\mu[p]\ge\rho$ for the true law and, simultaneously, for \emph{every} nonnegative $\nu$ matching $m_0,\dots,m_{2t}$; hence the best certifiable value is
\begin{align}\label{eq:app-ceil}
  \overline{\rho}(2t)
  \;\defeq\;
  \sup_{\substack{p:\ \deg p\le 2t\\ p\le\mathbf 1_B}}\;
  \inf_{\substack{\nu\ge0\\ \int u^k\,d\nu=m_k\,(k\le 2t)}}\;
  \int p\,d\nu ,
\end{align}
and a certificate exists only if $\overline{\rho}(2t)\ge\rho$. The next two lemmas isolate the model-independent core (weak duality and a Gauss-quadrature witness) after which \cref{thm:tradeoff} is one per-class root estimate.

\begin{lemma}[weak moment--LP duality]\label{lem:duality}
The certifiable ceiling is at most the moment-LP band minimum, $\overline{\rho}(2t)\le\underline{m}(2t)$, where
\begin{align}\label{eq:app-md}
  \underline{m}(2t)\;\defeq\;\inf\big\{\nu(B):\nu\ge0,\ \textstyle\int u^k\,d\nu=m_k\ \forall k\le 2t\big\}.
\end{align}
In particular, if some nonnegative $\nu$ matches $m_0,\dots,m_{2t}$ and has $\nu(B)=0$, then $\underline{m}(2t)=\overline{\rho}(2t)=0$ and no degree-$2t$ certificate of $\mu(B)\ge\rho$ exists.
\end{lemma}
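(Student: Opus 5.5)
The plan is a direct sup--inf comparison. Fix any admissible minorant $p$ with $\deg p\le 2t$ and $p\le\mathbf 1_B$, and any feasible measure $\nu$ in the moment variety $\mathcal M_{2t}(\mu)$ (nonnegative, with $\int u^k\,d\nu=m_k$ for $k\le 2t$). Integrating the pointwise inequality $p\le\mathbf 1_B$ against the nonnegative $\nu$ gives $\int p\,d\nu\le\nu(B)$. This is legitimate because $p$ is a polynomial of degree $\le 2t$ and $\nu$ has finite moments through order $2t$, so $\int p\,d\nu$ is finite and equals $\sum_k p_k m_k$.

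The key steps are then as follows.
\begin{enumerate}
\item Take the infimum over $\nu$ on both sides. This gives $\inf_\nu\int p\,d\nu\le\inf_\nu\nu(B)=\underline m(2t)$ for each fixed $p$.
\item Take the supremum over admissible $p$. Since the right-hand side does not depend on $p$, this yields $\overline\rho(2t)\le\underline m(2t)$.
\item Observe that the inner infimum in \eqref{eq:app-ceil} is in fact constant in $\nu$: linearity gives $\int p\,d\nu=\sum_k p_k m_k=\E_\mu[p]$. So $\overline\rho(2t)=\sup_p\E_\mu[p]$. This is not needed for the inequality, but it clarifies that the bound is simply $\E_\mu[p]\le\nu(B)$ for every matching $\nu$.
\end{enumerate}

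For the ``in particular'' clause, suppose some feasible $\nu_0$ has $\nu_0(B)=0$. Then $\underline m(2t)\le 0$, and since $\nu(B)\ge0$ for every feasible $\nu$, $\underline m(2t)=0$.

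For $\overline\rho(2t)\ge 0$, exhibit an admissible minorant of nonnegative value. The zero polynomial works, since $0\le\mathbf 1_B$; hence $\overline\rho(2t)\ge0$, and so $\overline\rho(2t)=0$. Because a certificate of $\mu(B)\ge\rho$ with $\rho=\Omega(1)>0$ requires $\overline\rho(2t)\ge\rho$, as noted just before \eqref{eq:app-ceil}, no such certificate exists.

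The main obstacle is not the inequality but making sure the certificate notion actually reduces to \eqref{eq:app-ceil}. The SoS derivation from the moment axioms proves $\int p\,d\nu\ge\rho$ for every consistent functional. That class includes every genuine nonnegative measure matching the moments, because such a $\nu$ normalized by $m_0=1$ induces a pseudo-expectation satisfying the axioms of \cref{def:cert}. So validity of the derivation forces $\inf_\nu\int p\,d\nu\ge\rho$, which is exactly what \cref{lem:gauss}\,(i) records.

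One further technical point is whether $\mathcal M_{2t}(\mu)$ is nonempty. It is, since $\mu$ itself is feasible, so the infimum in \eqref{eq:app-md} is not $+\infty$.
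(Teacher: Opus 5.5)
Your proof is correct and is the same weak-duality argument the paper gives: integrate the pointwise bound $p\le\mathbf 1_B$ against a nonnegative moment-matching $\nu$, note that $\int p\,d\nu=\E_\mu[p]$ does not depend on $\nu$, then take the infimum over $\nu$ and the supremum over $p$. Your version is slightly more careful than the paper's: the paper uses a minimizer $\nu_\star$ of \eqref{eq:app-md} without justifying that one exists, and it leaves implicit the zero-minorant step $\overline{\rho}(2t)\ge 0$ that you supply for the ``in particular'' clause.
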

\begin{proof}
For any admissible $p$ ($\deg p\le 2t$, $p\le\mathbf 1_B$) and any admissible $\nu$, the integral $\int p\,d\nu=\sum_{k\le 2t}p_k m_k=\E_\mu[p]$ is the \emph{same} for all admissible $\nu$, since it depends on $\nu$ only through the matched moments. Let $\nu_\star$ attain \eqref{eq:app-md}. Then
\begin{align*}
  \E_\mu[p]\;=\;\int p\,d\nu_\star\;\le\;\int\mathbf 1_B\,d\nu_\star\;=\;\nu_\star(B)\;=\;\underline{m}(2t),
\end{align*}
using $p\le\mathbf 1_B$ and $\nu_\star\ge0$; taking the supremum over $p$ gives the inequality. The ``in particular'' is the case $\underline m(2t)=0$ via \eqref{eq:app-ceil}. Only this direction (weak duality) is used; the reverse inequality is a separate tightness question, not load-bearing here.
\end{proof}

\begin{lemma}[Gauss quadrature empties a root-free band]\label{lem:witness}
Let $u_1<\dots<u_{t+1}$ be the real simple zeros of the degree-$(t{+}1)$ orthonormal polynomial $\pi_{t+1}$ of $\mu$, and $R_{t+1}\defeq\max_{1\le i\le t+1}\abs{u_i}$. If the band avoids them all, $\abs c-\theta s>R_{t+1}$, then $\underline{m}(2t)=0$; hence, by \cref{lem:duality}, no degree-$2t$ certificate of $\mu(B)\ge\rho$ exists.
\end{lemma}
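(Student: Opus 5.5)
The plan is to exhibit the Gauss measure of \cref{lem:gauss}\,(ii) as an explicit admissible $\nu$ in \eqref{eq:app-md} with $\nu(B)=0$, and then invoke the ``in particular'' clause of \cref{lem:duality}.

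\emph{Step 1: the Gauss measure is admissible.} Since $\mu$ has infinite support and all moments, its Hankel matrices are positive definite. Hence the orthonormal polynomials $\pi_0,\dots,\pi_{t+1}$ exist, and $\pi_{t+1}$ has $t{+}1$ real simple zeros $u_1<\dots<u_{t+1}$; these are the eigenvalues of the $(t{+}1)\times(t{+}1)$ Jacobi matrix. Set $\nu_{\mathrm{GQ}}=\sum_i\lambda_{t+1}(u_i)\,\delta_{u_i}$. By \cref{lem:gauss}\,(ii), $\nu_{\mathrm{GQ}}\ge0$ and $\int g\,d\nu_{\mathrm{GQ}}=\int g\,d\mu$ for every $g$ of degree at most $2t+1$. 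Applying this to $g=u^k$ with $k\le 2t$ shows that $\nu_{\mathrm{GQ}}$ matches $m_0,\dots,m_{2t}$, so it is feasible for \eqref{eq:app-md}. Positivity of the weights is the only nontrivial input. If needed, I would recheck it via $\lambda_{t+1}(u_i)=\int \ell_i^2\,d\mu>0$, where $\ell_i$ is the degree-$t$ Lagrange polynomial at the nodes; exactness of quadrature on $\ell_i^2$, which has degree $2t$, gives this identity.

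\emph{Step 2: the band carries no node.} Every $u\in B=[c-\theta s,c+\theta s]$ satisfies $\abs u\ge\abs c-\theta s$, by the reverse triangle inequality. Under the hypothesis $\abs c-\theta s>R_{t+1}\ge\abs{u_i}$, no node $u_i$ lies in $B$. Therefore $\nu_{\mathrm{GQ}}(B)=\sum_{u_i\in B}\lambda_{t+1}(u_i)=0$.

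\emph{Step 3: conclude.} Since $\nu_{\mathrm{GQ}}$ is admissible and $\nu(B)\ge0$ for every admissible $\nu$, we get $\underline m(2t)=0$. \Cref{lem:duality} then yields $\overline\rho(2t)\le 0<\rho$, so no degree-$2t$ certificate exists.

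I expect no real obstacle. The one point needing care is the existence and positivity of the $(t{+}1)$-node rule, together with its exactness through degree $2t$ (and not merely $2t-1$, which would be the $t$-node rule). Both are guaranteed by the infinite-support hypothesis and are recorded in \cref{lem:gauss}\,(ii), which I would take as given.
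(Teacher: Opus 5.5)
Your proposal is correct and follows the paper's proof essentially step for step. The paper also takes the $(t{+}1)$-node Gauss measure of \cref{lem:gauss}\,(ii) as an admissible point of \eqref{eq:app-md}, uses $\abs{u_i}\ge\abs c-\theta s$ for any node in $B$ to show the band is node-free, and concludes $\underline m(2t)=0$ before invoking \cref{lem:duality}; your extra check that the weights equal $\int\ell_i^2\,d\mu>0$ is a harmless addition to what the paper simply cites.
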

\begin{proof}
Let $\nu_{\mathrm{GQ}}=\sum_i\lambda_i\delta_{u_i}$ be the Gauss measure with Christoffel weights $\lambda_i>0$. By \cref{lem:gauss}\,(ii) it is nonnegative, atomic on the $t{+}1$ nodes, and exact on degrees $\le 2t+1$; in particular it matches $m_0,\dots,m_{2t}$ and is admissible in \eqref{eq:app-md}. Therefore $\nu_{\mathrm{GQ}}(B)=\sum_{i:\,u_i\in B}\lambda_i$, which is $0$ \emph{iff no node lies in $B$}. A node can lie in $B=[c-\theta s,c+\theta s]$ only if $\abs{u_i}\ge\abs c-\theta s$, so $\abs c-\theta s>R_{t+1}$ leaves $B$ node-free and $\nu_{\mathrm{GQ}}(B)=0$; as $\nu_{\mathrm{GQ}}$ is admissible in \eqref{eq:app-md}, $\underline m(2t)=0$.
\end{proof}

\thmtradeoff*

\begin{proof}
Fix one direction $w$; a lower bound needs only one bad pair $(w,c)$, so no uniformity-over-$w$ issue arises. Apply the setup above to $\mu=$ the centered log-concave law of $U=w\cdot x$ (standard deviation $s$; all moments finite and the law of infinite support, as log-concave laws are sub-exponential), with band $B=[c-\theta s,c+\theta s]$. By \cref{lem:duality,lem:witness}, a degree-$2t$ certificate of $\rho>0$ forces $B$ to \emph{contain} a zero of $\pi_{t+1}$, i.e.\ $R_{t+1}\ge\abs c-\theta s$. It remains to bound $R_{t+1}$ in each marginal class, the one place the marginal's tail enters.

\emph{Sub-Gaussian marginal.} By \cref{lem:root}, an $e^{-\abs{u/s}^2}$ tail (exactly Hermite for the Gaussian) has $R_{t+1}\le C_1 s\sqrt{t+1}$. Combined with $R_{t+1}\ge\abs c-\theta s$ this forces $\sqrt{t+1}\ge(\abs c/s-\theta)/C_1$, i.e.\ $2t=\Omega\big((\abs c/s)^2\big)$ for $\abs c/s\ge 2\theta$, the first half of \eqref{eq:tradeoff-deg}.

\emph{Sub-exponential marginal.} By \cref{lem:root}, an $e^{-\abs{u/s}}$ tail (Freud exponent $\alpha=1$, e.g.\ Laplace) has $R_{t+1}\le C_1' s\,t$, so $R_{t+1}\ge\abs c-\theta s$ forces $2t=\Omega(\abs c/s)$, the second half of \eqref{eq:tradeoff-deg}. The power differs, but the dichotomy below shows both feed the same $\log(1/\eps)$.

\emph{The margin--degree dichotomy.} A certificate learner with inlier error $\eps$ must certify a dense pancake around all but a $\beta\lesssim\eps$ fraction of the clean sample projections; the worst center is the $\beta$-quantile $\abs{c^\star}$. By the log-concave tail bounds of \cref{lem:quant}, $\abs{c^\star}=\Theta(s\sqrt{\log(1/\beta)})$ in the sub-Gaussian case and $\abs{c^\star}=\Theta(s\log(1/\beta))$ in the sub-exponential case. Substituting into \eqref{eq:tradeoff-deg}:
\begin{align*}
  \text{sub-Gaussian:}\quad &2t=\Omega\big((\abs{c^\star}/s)^2\big)=\Omega(\log(1/\eps)),\\
  \text{sub-exponential:}\quad &2t=\Omega(\abs{c^\star}/s)=\Omega(\log(1/\eps)),
\end{align*}
the same floor $t=\Omega(\log(1/\eps))$ at fixed standardized half-width $\tau=\Theta(s)$. The only way out is to widen $\tau$. Under near-isotropy ($\norm{w}\le1/\gamma$, $s=\Theta(1)$) the geometric correspondence of \cref{lem:quant} turns a standardized half-width $\tau=\Theta(s)$ into a physical margin $\gamma=\Theta(\tau\sqrt d)$, so escaping the degree floor requires $\gamma=\Omega(\sqrt{\log(1/\eps)}/\sqrt d)$. This is the dichotomy \eqref{eq:tradeoff-dichotomy}: degree $\Omega(\log(1/\eps))$ or margin $\Omega(\sqrt{\log(1/\eps)}/\sqrt d)$.

\end{proof}

\begin{remark}[Why \citeauthor{Shen25} pays $\log$, not $\sqrt{\log}$]
\label{rem:shen-log}
\citeauthor{Shen25}'s marginal is sub-exponential log-concave (Laplace tails, $\alpha=1$). Its $\eps$-quantile is $\abs{c^\star}=\Theta(s\log(1/\eps))$ and its largest root is $R_{t+1}=O(s\,t)$, so covering the $\eps$-tail needs $t=\Omega(\log(1/\eps))$ and the margin pays $\log(1/\eps)$ directly; the $\sqrt{\cdot}$ appears only for genuinely sub-Gaussian marginals. This recovers, and explains, the limitation \citet{Shen25} records for the approach.
\end{remark}

\begin{remark}[The barrier is elementary; the matching upper bound is open]
\label{rem:app-status-tradeoff}
The spine (\cref{lem:duality,lem:witness} and the dichotomy) is self-contained: weak duality is the elementary argument above, and the dichotomy follows from the named-case root bounds and the standard log-concave quantiles. The dependence on \citet{LevinLubinsky01} enters only through the general-$\alpha$ clause of \cref{lem:root}, which is \emph{not} used by either named corollary; the two cases that matter (sub-Gaussian, sub-exponential) are proved elementarily in \cref{app:root}. A matching upper certificate of degree $O((\abs c/s)^2)$ (which would upgrade \eqref{eq:tradeoff-deg} to $2t=\Theta((\abs c/s)^2)$) is open.
\end{remark}

\subsection{Minorant reduction and Gauss quadrature}
\label{app:gauss}

\lemgauss*

\begin{proof}
\textup{(i)} For the lower-bound direction: if $p\le\mathbf 1_B$ with a degree-$2t$ SoS proof from the moment axioms that $\int p\,d\nu\ge\rho$, then applying that proof to the genuine pseudo-expectation induced by $\mu$ (which is a valid degree-$2t$ pseudo-expectation, as $\mu$ has all moments) and using soundness gives $\Pr_\mu[U\in B]=\E_\mu[\mathbf 1_B]\ge\E_\mu[p]\ge\rho$. Conversely, any degree-$2t$ moment-only Positivstellensatz lower bound on $\Pr_\mu[U\in B]$ produces such a minorant $p$ as the polynomial certificate it manipulates: the certificate cannot evaluate $\mathbf 1_B$ itself, so it must bound it below by a degree-$\le 2t$ polynomial and argue about that polynomial's moment integral. This is the standard indicator-sandwich characterization of moment lower bounds.

\textup{(ii)} The zeros of $\pi_{t+1}$ are real and simple, the Christoffel weights are positive, and the $(t{+}1)$-point Gauss rule integrates every polynomial of degree $\le 2t+1$ exactly against $\mu$; these are the classical properties of Gaussian quadrature. In particular $\int g\,d\nu_{\mathrm{GQ}}=\int g\,d\mu$ for $\deg g\le 2t+1$, so the moments $m_0,\dots,m_{2t}$ are matched, and $\nu_{\mathrm{GQ}}\ge0$ is supported on exactly the $t{+}1$ nodes.
\end{proof}

\begin{remark}[Standard orthogonal-polynomial theory, cited not reproved]
\label{rem:app-status-gauss}
This is textbook orthogonal-polynomial theory. The Gauss-quadrature facts in (ii), real simple zeros, positive Christoffel weights, degree-$(2t+1)$ exactness, are in \citet[Ch.~3 and~15]{Szego39} and \citet{Gautschi04}; the indicator-sandwich in (i) is the standard reduction for moment lower bounds. We cite rather than reprove.
\end{remark}

\subsection{The largest-root bound}
\label{app:root}

\lemroot*

\begin{proof}
The zeros of $\pi_{t+1}$ are exactly the eigenvalues of the $(t{+}1)\times(t{+}1)$ Jacobi (symmetric tridiagonal) truncation $J_{t+1}$ of multiplication-by-$u$ in the orthonormal basis. Writing the three-term recurrence as
\begin{align}\label{eq:app-recur}
  u\,\pi_k\;=\;b_k\,\pi_{k+1}+a_k\,\pi_k+b_{k-1}\,\pi_{k-1},
  \qquad a_k=\int u\,\pi_k^2\,d\mu,\quad b_k=\int u\,\pi_k\pi_{k+1}\,d\mu>0,
\end{align}
the matrix $J_{t+1}$ has diagonal $(a_0,\dots,a_t)$ and off-diagonal $(b_0,\dots,b_{t-1})$. By Gershgorin's theorem, every eigenvalue lies within $\abs{a_k}+b_k+b_{k-1}$ of some diagonal entry $a_k$, so
\begin{align}\label{eq:app-gershgorin}
  R_{t+1}\;=\;\max_i\abs{u_i}\;\le\;\max_k\big(\abs{a_k}+b_k+b_{k-1}\big).
\end{align}
It remains to bound the recurrence coefficients in each case.

\emph{Sub-Gaussian (Hermite).} For $\mu=\mathcal N(0,s^2)$ the orthonormal polynomials are the scaled Hermite polynomials, with $a_k=0$ and $b_k=s\sqrt{(k+1)/2}$. Then \eqref{eq:app-gershgorin} gives $R_{t+1}\le b_t+b_{t-1}\le 2s\sqrt{(t+1)/2}\le 2s\sqrt{t+1}$. (The exact largest Hermite zero is $s\sqrt{2(t+1)}\,(1-o(1))$, confirming the $\sqrt t$ scaling.) For a general sub-Gaussian Freud weight $e^{-\abs{u/s}^2}$ the same conclusion holds with the recurrence coefficients controlled at scale $s\sqrt{k}$, giving $R_{t+1}\le C_1 s\sqrt{t+1}$.

\emph{Sub-exponential (Laplace, $\alpha=1$).} For a Freud weight $e^{-\abs{u/s}}$ the recurrence coefficients scale as $a_k,b_k\asymp s\,k$, so \eqref{eq:app-gershgorin} gives $R_{t+1}=O(s\,t)$. The Mhaskar--Rakhmanov--Saff (MRS) number $a_n\asymp s\,n$ is the right scale here.

\emph{General Freud exponent $\alpha\in(1,2)$.} For a convex weight $e^{-\abs{u/s}^\alpha}$ the recurrence coefficients satisfy $a_k,b_k=O(s\,k^{1/\alpha})$, where the genuine $O(s\,t^{1/\alpha})$ scaling (rather than the cruder $O(s\,t^2)$ that a naive telescoping of \eqref{eq:app-recur} would give) is the MRS number, supplied by the exponential-weight asymptotics of \citet{LevinLubinsky01} (\cref{fact:freud}; see also \citet{KriecherbauerMcLaughlin99} for the Riemann--Hilbert treatment of the zero distribution). This yields $R_{t+1}\asymp s\,t^{1/\alpha}$.
\end{proof}

\begin{remark}[The named cases are elementary; general $\alpha$ is the one citation]
\label{rem:app-status-root}
The two endpoints used by \cref{thm:tradeoff}, sub-Gaussian ($2s\sqrt{t+1}$) and sub-exponential ($O(s\,t)$), are fully elementary: Jacobi-matrix eigenvalues plus Gershgorin, with the Hermite and Laplace recurrence coefficients computed (or read off) directly. The general-$\alpha$ clause is the one place a black box enters: the genuine universal $O(s\,t^{1/\alpha})$ scaling (as opposed to the elementary but loose $O(s\,t^2)$ telescoping bound) consumes the Freud-weight MRS asymptotics of \citet{LevinLubinsky01}. Since the two named corollaries of \cref{thm:tradeoff} use only the endpoints, the headline barrier does not wait on this citation; we flag it so the division of labor is transparent.
\end{remark}

\subsection{Log-concave quantiles and margin geometry}
\label{app:quant}

\lemquant*

\begin{proof}
\emph{Quantiles.} For a centered log-concave law at scale $s$, the upper tail satisfies $\Pr[\abs U\ge ts]\le e^{-\Omega(t)}$ for $t\ge1$ (sub-exponential), with the stronger $e^{-\Omega(t^2)}$ under sub-Gaussianity; these are the standard one-dimensional log-concave tail bounds \citep{LovaszVempala07}. Inverting at level $\beta$ gives the $\beta$-quantile $\abs{c^\star}=\Theta(s\log(1/\beta))$ in the sub-exponential case and $\Theta(s\sqrt{\log(1/\beta)})$ in the sub-Gaussian case.

\emph{Margin geometry.} Homogenize $x\mapsto(x,1)$ so that a halfspace with margin becomes a linear threshold, and normalize $\norm w\le1/\gamma$. Under $\Sigma\preceq\sigma^2 I$ with $\sigma=1/\sqrt d$ (near-isotropy, $s=\Theta(1)$), a standardized half-width $\tau=\Theta(s)$ in the projected coordinate $U=w\cdot x$ corresponds to a physical band of width $\Theta(\tau)$ scaled by $\norm w$, i.e.\ a margin $\gamma=\Theta(\tau\sqrt d)$.
\end{proof}

\begin{remark}[Routine log-concave estimates and homogenization]
\label{rem:app-status-quant}
Both parts are routine. The tail/quantile bounds are the standard one-dimensional log-concave estimates \citep{LovaszVempala07}; the geometric correspondence is the usual homogenization under near-isotropy. Recorded here so the geometric translation in the margin--degree dichotomy of \cref{thm:tradeoff} is not waved through.
\end{remark}

\subsection{The degree-$2t$ reweighted hinge}
\label{app:algorithm}

We first prove the certifiable-hypercontractivity lemma the algorithm needs, then the algorithm itself.

\lemhyper*

\begin{proof}
\emph{Step 1 (the correct scalar constant).} A centered one-dimensional log-concave $Z=w\cdot z$ has $\norm{Z}_{L^{2t}}\le a\cdot 2t\,\norm{Z}_{L^2}$ with $a$ absolute (the $\psi_1$ / sub-exponential moment growth, \emph{linear} in the order, of a log-concave law; \cref{fact:lv}, \citealp{LovaszVempala07}). Hence $m_{2t}\le(C_0 t)^{2t}(s^2)^t$ with $C_0=2a$. This constant is tight in the worst case: for the Laplace law, $m_{2t}/m_2^t=(2t)!/2^t=(\Theta(t))^{2t}$, so \emph{no} $(C t)^t$ bound can hold for all $t$. This is why the sub-exponential constant $(Ct)^{2t}$, and not the sub-Gaussian $(Ct)^t$, is the correct one for general log-concave marginals; it is consistent with the $\alpha=1$ root scaling in \cref{lem:root}.

\emph{Step 2 (single-component SoS certificate).} For a single component $\Dcal_j$, the tensor inequality
\[
  \langle M_j^{(2t)},w^{\otimes 2t}\rangle\le(C_0 t)^{2t}(w^\top\Sigma_j w)^t
\]
is degree-$2t$ SoS-certifiable in $w$, the right-hand side being a $t$-th power of a PSD quadratic and hence SoS. For sub-Gaussian components this certificate is given by \citet{KlivansKothariMeka18,DiakonikolasHopkinsPensiaTiegel24} (\cref{fact:hyper-ext}); for general log-concave components we use the scalar$\to$SoS lift (\cref{rem:app-status-hyper} below), the SoS Cauchy--Schwarz/Hölder pyramid that bootstraps the scalar bound of Step~1 through the certified moment tensors $M^{(2k)}$, $k\le t$.

\emph{Step 3 (the mixture, via a binomial split).} The mixture is not log-concave, so we split each component contribution. With $Z_j=w\cdot(x-\mu_j)$ and the bounded spread $\beta_j\cdot w$ where $\beta_j=\mu_j-\bar\mu$ (so $\abs{\beta_j\cdot w}\le r$ for $\norm w=1$),
\begin{align}\label{eq:app-binom}
  \big(w\cdot(x-\bar\mu)\big)^{2t}
  \;=\;(Z_j+\beta_j\cdot w)^{2t}
  \;\le\;2^{2t-1}\big(Z_j^{2t}+(\beta_j\cdot w)^{2t}\big),
\end{align}
by convexity of $u\mapsto u^{2t}$. The gap in \eqref{eq:app-binom} is, for fixed $j$, a univariate nonnegative polynomial in the linear forms together with a positive multiple of the certified tensor term, hence \eqref{eq:app-binom} is an SoS inequality in $w$.

\emph{Step 4 (averaging preserves SoS).} Averaging \eqref{eq:app-binom} over the components, and using that a nonnegative combination of SoS proofs is SoS,
\begin{align*}
  \E_{\Dx}\big[(w\cdot(x-\bar\mu))^{2t}\big]
  \;\le\;2^{2t-1}\,\frac1K\sum_{j}\Big(\E_{\Dcal_j}[Z_j^{2t}]+(\beta_j\cdot w)^{2t}\Big)
  \;=\;2^{2t-1}\big(\Phi_t(w)+\Psi_t(w)\big),
\end{align*}
with the within-component term $\Phi_t(w)=\frac1K\sum_j\E_{\Dcal_j}[Z_j^{2t}]$ and the spread term $\Psi_t(w)=\frac1K\sum_j(\beta_j\cdot w)^{2t}$. The total-variance identity
\begin{align}\label{eq:app-totvar}
  \bar\sigma^2(w)\;=\;\frac1K\sum_j w^\top\Sigma_j w+\frac1K\sum_j(\beta_j\cdot w)^2
\end{align}
bounds both terms by $\bar\sigma^2(w)^t$: the term $\Phi_t(w)$ by Step~2 applied componentwise (each $\E_{\Dcal_j}[Z_j^{2t}]\le(C_0 t)^{2t}(w^\top\Sigma_j w)^t$) and the PSD-domination $w^\top\Sigma_j w\le\bar\sigma^2(w)\cdot O(1)$ supplied by the isotropy constant $\kappa=O(1)$ in the regime $r=O(\sigma\sqrt d)$; the term $\Psi_t(w)$ by $(\beta_j\cdot w)^{2t}=((\beta_j\cdot w)^2)^t\le\bar\sigma^2(w)^t$ from \eqref{eq:app-totvar}. Assembling, with $C=4C_0\sqrt\kappa$, gives the SoS-certified bound \eqref{eq:hyper}.
\end{proof}

\begin{remark}[Why the constant is $(Ct)^{2t}$, not $(Ct)^t$]
\label{rem:app-status-hyper}
The constant is the sub-exponential $(Ct)^{2t}$; the Laplace computation in Step~1 shows $(Ct)^t$ is \emph{false} for general log-concave marginals, so we do not claim it. The one place a general-log-concave certificate is needed beyond the cited sub-Gaussian work of \citet{KlivansKothariMeka18,DiakonikolasHopkinsPensiaTiegel24} is the single-component step; we discharge it by the scalar$\to$SoS lift (the SoS Cauchy--Schwarz/Hölder pyramid on the certified moment tensors, bootstrapped from the scalar Lovász--Vempala bound). The $\alpha=1$ constant tracking in that pyramid is the residual technical point; the mixture, binomial-split, and averaging steps are self-contained.
\end{remark}

\thmalgorithm*

\begin{proof}
\emph{The $\linsumnorm$ bound \eqref{eq:alg-lsn}.} Fix $w$ with $\norm w\le1$. By Hölder's inequality with conjugate exponents $\tfrac{2t}{2t-1}$ and $2t$,
\begin{align*}
  \sum_{i\in\Sdirty}q_i\abs{w\cdot x_i}
  \;\le\;\Big(\sum_{i\in\Sdirty}q_i\Big)^{1-\frac1{2t}}
         \Big(\sum_{i\in\Sdirty}q_i\abs{w\cdot x_i}^{2t}\Big)^{\frac1{2t}}.
\end{align*}
The first factor is at most $(\eta n)^{1-\frac1{2t}}$ since $\sum_{i\in\Sdirty}q_i\le\abs{\Sdirty}\le\eta n$. For the second, $\abs{w\cdot x}^{2t}= (w\cdot x)^{2t}$ is an even (hence polynomial) power, the summand is nonnegative, and $\Sdirty\subseteq S$, so
\[
  \sum_{i\in\Sdirty}q_i(w\cdot x_i)^{2t}\ \le\ \sum_{i\in S}q_i(w\cdot x_i)^{2t}\ \le\
  n\,(Ct)^{2t}\bar\sigma^{2t}
\]
by the degree-$2t$ certificate that $q$ satisfies. Hence
\begin{align*}
  \sum_{i\in\Sdirty}q_i\abs{w\cdot x_i}
  \;\le\;(\eta n)^{1-\frac1{2t}}\big(n(Ct)^{2t}\bar\sigma^{2t}\big)^{\frac1{2t}}
  \;=\;n\,(Ct)\,\bar\sigma\,\eta^{1-\frac1{2t}},
\end{align*}
and taking the supremum over $\norm w\le1$ gives \eqref{eq:alg-lsn}. At $t=1$ this is $n\,C\bar\sigma\,\eta^{1/2}$, exactly \citeauthor{Shen25}'s Lemma~6; as $t\to\infty$ the exponent of $\eta$ tends to $1$.

\emph{Feasibility.} The clean-indicator weight ($q_i=1$ on $\Sclean$, $q_i=0$ on $\Sdirty$, so $\xi=\eta$) has the required degree-$2t$ certificate: it is exactly the SoS bound of \cref{lem:hyper} applied to the empirical clean measure, which holds with high probability once $n\ge d^{O(t)}$ (enough samples for the empirical moment tensors to concentrate around the population tensors that \cref{lem:hyper} certifies). The program is thus nonempty, and the SDP has size $d^{O(t)}$ and runtime $n^{O(t)}$.

\emph{The noise tolerance \eqref{eq:alg-eta0}.} Plug \eqref{eq:alg-lsn} into the robustness condition of \citet{Shen25}. Along the empirical optimum $\hat w$, the clean hinge gradient is lower-bounded by the dense-pancake density $\rho$ (the in-regime log-concave band mass along $\hat w$); the condition for the dirty gradient not to overwhelm it is $\frac\gamma4(\rho-2\xi)>\linsumnorm(q\circ\Sdirty)/n$. Using \eqref{eq:alg-lsn} this reads $\frac\gamma4(\rho-2\xi)>(Ct)\bar\sigma\,\eta^{1-1/2t}$. Balancing the two bottlenecks (the pancake budget $\xi\lesssim\rho$, independent of $t$, and the outlier term) solves to
\begin{align*}
  \eta\;\le\;\eta_0(t)\;=\;\Omega\Big(\min\Big\{\tfrac{\rho}{4},\,
    \big(\gamma\rho/((Ct)\bar\sigma)\big)^{\frac{2t}{2t-1}}\Big\}\Big),
\end{align*}
which is \eqref{eq:alg-eta0}. Under this $\eta$, the \citet{Shen25} analysis yields $\err_{\Dx}(\hat w)\le\eps$, in time $n^{O(t)}$ with $n=d^{O(t)}\poly(1/\eps)$.
\end{proof}

\begin{remark}[The gain is in the exponent, not a march to $1/2$]
\label{rem:app-status-algorithm}
The $\linsumnorm$ bound \eqref{eq:alg-lsn} and feasibility are self-contained modulo \cref{lem:hyper}. The conversion to \eqref{eq:alg-eta0} uses the \citet{Shen25} robustness framework (its Theorem~9 / Lemma~12) as an external black box: we plug the degree-$2t$ $\linsumnorm$-bound into \citeauthor{Shen25}'s gradient-domination argument rather than reproving it, and the dense-pancake lower bound along $\hat w$ is the log-concave band mass guaranteed in regime. The improvement over \citeauthor{Shen25} is in the \emph{exponent} $\tfrac{2t}{2t-1}\to1$ of \eqref{eq:alg-eta0}: it removes the degree-$2$ squaring loss, raising $\eta_0$ from a tiny constant to a larger one bounded by $\rho/4$. It does \emph{not} approach $1/2$; by \cref{thm:degbarrier} the $t=1$ corner is already optimal within the method, which is where the value lies.
\end{remark}

\subsection{The degree-2 outlier barrier}
\label{app:degbarrier}

We first record the cloak, then prove the barrier.

\lemcloak*

\begin{proof}
Take each $\mathcal C_\pm$ to be an isotropic log-concave bump of scale $\sigma$ centered at $\pm a e_1$ with $a=\sigma/\sqrt\eta$; a unimodal scale-$\sigma$ density is log-concave, which is the reason for using two bumps rather than a single $\pm a$ atom (the latter would be bimodal, hence not log-concave). The mean offset is $\norm{\mu_\pm}=\sigma/\sqrt\eta$, which is $\le r$ within the certified regime $r=O(\sigma\sqrt d)$ exactly when $\eta\ge\Omega(1/d)$. Draw $2\eta n$ clean and $\eta n$ dirty points from the same $\mathcal C_\pm$, so the combined cluster has $\approx 3\eta n$ points near $\pm a e_1$. The empirical second moment in the $e_1$ direction over the dirty points is $\frac1n\sum_{\Sdirty}(e_1\cdot x_i)^2\approx\eta\cdot a^2=\eta\cdot \sigma^2/\eta=\sigma^2$, so the cluster spends exactly the $e_1$-variance budget. Since clean and dirty points are drawn from the identical law at identical locations, the multiset $\{x_i\}$ is invariant under any permutation that exchanges dirty and clean cluster points.
\end{proof}

\begin{remark}[The only subtlety is keeping the cluster log-concave]
\label{rem:app-status-cloak}
The only nontrivial input is that a unimodal scale-$\sigma$ bump is log-concave (avoiding the bimodality that would break a single $\pm a$ cluster); variance and permutation symmetry are direct computations. The regime $\eta\ge\Omega(1/d)$ is a genuine restriction inherited by \cref{thm:degbarrier} and is stated there.
\end{remark}

\thmdegbarrier*

\begin{proof}
Use the construction of \cref{lem:cloak}: two log-concave clusters $\mathcal C_\pm$ at means $\pm a e_1$, $a=\sigma/\sqrt\eta$, with $2\eta n$ clean and $\eta n$ dirty points drawn from the same law. The adversary's only freedom is \emph{which} of the $\approx 3\eta n$ cluster points are labeled dirty.

\emph{The cloak.} By \cref{lem:cloak}, (a) the dirty points are \emph{second-moment invisible}: $\frac1n\sum_{\Sdirty}(e_1\cdot x_i)^2=\eta a^2=\sigma^2$, which is within the directional-variance budget $\bar\sigma^2$ and so does not flag them; and (b) they are \emph{first-moment aligned}: $\sum_{\Sdirty}\abs{e_1\cdot x_i}=\eta n\cdot a= \eta n\cdot\sigma/\sqrt\eta=\sqrt\eta\,n\sigma$, which is exactly the $\linsumnorm$ mass the program must somehow remove.

\emph{The label-oblivious lower bound.} The literal ``for all feasible $q$'' claim is \emph{false} once the budget $\xi\ge\eta$: the feasibility polytope then contains the weight $q$ that zeros all of $\Sdirty$ (and indeed the whole spike), which costs at most $3\eta n\le\xi n$ removed points if $\xi\ge3\eta$, and second-moment feasibility is unaffected. So a budget-aware adversary cannot force \emph{every} feasible $q$ to keep the spike. The correct statement is in expectation over which cluster points are dirty. The degree-$2$ variance constraints depend only on the multiset $\{x_i\}$, which by \cref{lem:cloak} is invariant under exchanging dirty and clean cluster points; hence a \emph{label-oblivious} program (one not given the ground-truth corruption labels) must treat all $\approx 3\eta n$ cluster points identically. It removes at most $\xi n=\kappa\eta n$ points total, so the combined cluster retains weight $\ge(3-\kappa)\eta n$. By symmetry, in expectation over which $\eta n$ of the $3\eta n$ cluster points are dirty, the dirty share of the retained weight is $\tfrac13$, so the expected retained dirty weight is $\ge\tfrac13(3-\kappa)\eta n\ge\tfrac13\eta n$ for $\kappa\le2$. Each retained dirty point contributes $\abs{e_1\cdot x_i}\approx a$ to $\linsumnorm$ along $w=e_1$, so
\begin{align*}
  \E\big[\linsumnorm(q\circ\Sdirty)\big]
  \;\ge\;\tfrac13\eta n\cdot a
  \;=\;\tfrac13\eta n\cdot\frac{\sigma}{\sqrt\eta}
  \;=\;\tfrac13\sqrt\eta\,n\sigma
  \;=\;\Omega\big(\eta^{1/2}n\bar\sigma\big),
\end{align*}
which is \eqref{eq:degbarrier} (using $\bar\sigma=\Theta(\sigma)$). Comparing with the $\eta^{1-1/2t}$ rate of \cref{thm:algorithm} at $t=1$ shows degree $2$ is stuck at $\eta^{1/2}$, i.e.\ $\eta_0\lesssim(\gamma\rho/\bar\sigma)^2$.

\emph{Degree $4$ escapes.} The same $\Sdirty$ has an \emph{inflated} empirical fourth moment in the $e_1$ direction: $\frac1n\sum_{\Sdirty}(e_1\cdot x_i)^4\approx\eta\cdot a^4 =\eta\cdot\sigma^4/\eta^2=\sigma^4/\eta\gg\bar\sigma^4$. A degree-$4$ certificate sees this; degree-$4$ feasibility forces the reweighted fourth moment down, which by the $t=2$ instance of \eqref{eq:alg-lsn} drives $\linsumnorm(q\circ\Sdirty)/n\le C'\bar\sigma\,\eta^{3/4}<C'\bar\sigma\,\eta^{1/2}$. So the same instance defeats degree $2$ but not degree $4$, and the separation is exactly one of moment degree.
\end{proof}

\begin{remark}[Label-oblivious and in expectation, not for every $q$]
\label{rem:app-status-degbarrier}
The cloak is \cref{lem:cloak}; the barrier is the label-oblivious in-expectation argument above, which is what matches \cref{thm:algorithm} at $t=1$ (itself label-oblivious). The literal universal-$q$ form is false for $\xi\ge\eta$, as shown; a deterministic ``for all $q$'' statement holds only for $\xi<\eta$. The budget is modeled as the empirical max-direction variance $\bar\sigma^2\approx2\sigma^2$; a stronger robust estimate targeting $\sigma^2$ would have to \emph{detect} the spike, which degree $2$ provably cannot. This affects the constant, not the $\eta^{1/2}$ exponent. The regime $\eta\ge\Omega(1/d)$ is inherited from \cref{lem:cloak}.
\end{remark}

\subsection{The breakdown floor}
\label{app:floor}

% claim: F02a
\propfloor*

\begin{proof}
Two clean instances share one marginal and differ only in how the mixture \emph{weights} are labelled; a hard margin forces that difference to come in quanta of $1/K$, which is what makes both the construction and its necessity clauses work. Throughout, ``log-concave'' is used in the density sense (each $\D_j$ has a log-concave density on $\R^d$), the reading the moment machinery of \cref{lem:hyper} uses; $S_j\defeq\supp\D_j$ is then the closure of the positivity set of that density, hence convex.

\smallskip
\emph{Step 1 (hard-margin quantization).} Let $\Dx=\frac1K\sum_{j\le K}\D_j\in\mathcal C$ carry a $\gamma$-margin labeling with $\gamma>0$ and unit normal $w$, so $\supp\Dx\subseteq\{\abs{w\cdot x}\ge\gamma\}$. \emph{(Lemma Q)} Each $S_j$ lies wholly inside $\{w\cdot x\ge\gamma\}$ or wholly inside $\{w\cdot x\le-\gamma\}$: otherwise $S_j$ contains points on both sides, and, being convex, the segment joining them, which lies in $S_j\subseteq\supp\Dx$, meets $\{w\cdot x=0\}$ by the intermediate value theorem, contradicting the margin.
\emph{(Q1)} Consequently any margin-respecting labeling is constant on each $S_j$, so for two such labelings $y_1,y_2$ the disagreement region $U=\{x:y_1(x)\ne y_2(x)\}$ contains each $S_j$ or misses it, and
\begin{align}\label{eq:app-quant}
  \Dx(U)\in\{0,\tfrac1K,\tfrac2K,\dots,1\}.
\end{align}
\emph{(Q2)} If the two instances $P_1,P_2$ (same marginal, labelings $y_1,y_2$) satisfy $\TV(P_1,P_2)\le\eta/(1-\eta)<1$ and $\Dx(U)>0$, then by Step 4 below $\Dx(U)=\TV(P_1,P_2)$, so \eqref{eq:app-quant} gives $1/K\le\eta/(1-\eta)$, i.e.\ $K\ge(1-\eta)/\eta$. At $K=1$ the labelings are globally constant, so either $P_1=P_2$ or $\TV=1$: a single component cannot host an indistinguishable pair.
\emph{(Q3)} Suppose $0<\Dx(U)<1$, i.e.\ some component disagrees and some agrees, with unit normals $w_1,w_2$ at angle $\theta\in(0,\pi)$. A disagreeing component lies in $\{w_1\cdot x\ge\gamma\}\cap\{w_2\cdot x\le-\gamma\}$ (or its reflection), and any $x$ there has $(w_1-w_2)\cdot x\ge2\gamma$, so $\norm{x}\ge2\gamma/\norm{w_1-w_2}=\gamma/\sin(\theta/2)$; an agreeing component lies in $\{w_1\cdot x\ge\gamma\}\cap\{w_2\cdot x\ge\gamma\}$ (or its reflection), where $(w_1+w_2)\cdot x\ge2\gamma$ gives $\norm{x}\ge\gamma/\cos(\theta/2)$. Each region is closed and convex, so the barycentre $\mu_j$ of a component supported in it obeys the same bound. With both types present, $r\ge\max\{\gamma/\sin(\theta/2),\gamma/\cos(\theta/2)\}\ge\sqrt2\,\gamma$, with equality only at $\theta=\pi/2$. This proves the two necessity clauses of the statement.

\smallskip
\emph{Step 2 (the instance).} Assume (V) and put $R\defeq\sqrt2(\gamma+\rho_b)\le r$, $w_1^\star=e_1$, $w_2^\star=e_2$, and
\[
  c_a\defeq\tfrac{R}{\sqrt2}(e_1+e_2),\qquad c_w\defeq\tfrac{R}{\sqrt2}(e_1-e_2),
  \qquad \norm{c_a}=\norm{c_w}=R .
\]
Fix $K\ge(1-\eta)/\eta$, set $m\defeq\floor{K\eta/(1-\eta)}\ge1$ and $q\defeq m/K$, and take $\D_j=\mathrm{Unif}(B(c_w,\rho_b))$ for $j\le m$ (the \emph{wedge}) and $\D_j=\mathrm{Unif}(B(c_a,\rho_b))$ for $j>m$ (the \emph{anchor}); $m<K$ because $q\le\eta/(1-\eta)<1$. Four checks put this in $\mathcal C$. (2.1) A uniform law on a convex body is log-concave. (2.2) Its covariance is exactly $\frac{\rho_b^2}{d+2}I\preceq\sigma^2I$, which is where the collar bound $\rho_b\le\sigma\sqrt{d+2}$ of (V) is used. (2.3) $\norm{\mu_j}=R\le r$. (2.4) Margins: on $B(c_a,\rho_b)$ both coordinates satisfy $x_1,x_2\ge R/\sqrt2-\rho_b=\gamma$, while on $B(c_w,\rho_b)$ we have $x_1\ge\gamma$ and $x_2\le-\gamma$. Hence $P_1$, labelled by $w_1^\star=e_1$, is all-$+$, and $P_2$, labelled by $w_2^\star=e_2$, is $+$ on the anchor and $-$ on the wedge; both have margin $\gamma$ and both marginals are the same mixture. (If distinct components are wanted, perturb the centres by $\varepsilon$ and apply (V) with $\gamma+\varepsilon$.)

\smallskip
\emph{Step 3 (disagreement mass).} $U\supseteq B(c_w,\rho_b)$ and $U\cap B(c_a,\rho_b)=\emptyset$, so $\Dx(U)=q=m/K\le\eta/(1-\eta)$. At the minimal choice $K=\ceil{(1-\eta)/\eta}\le1/\eta$ we get $q\ge1/K\ge\eta$.

\smallskip
\emph{Step 4 (total variation).} For \emph{any} two instances with a common marginal whose labelings differ exactly on $U$, $\TV(P_1,P_2)=\Dx(U)$: coupling through the shared $x$ shows the two joint laws differ only on $U$, giving $\TV\le\Dx(U)$, and the event $\{y=y_1(x)\}$, of $P_1$-probability $1$ and $P_2$-probability $1-\Dx(U)$, gives the matching lower bound. (This is the identity used in Q2; it needs nothing from Step 2.) For the instance of Step 2, $\TV(P_1,P_2)=q$.

\smallskip
\emph{Step 5 (indistinguishability at the envelope, with the adversary written out).} Let $C$ be the anchor law tensored with the label $+1$, and $E_1,E_2$ the wedge law tensored with $+1$ and $-1$ respectively, so $P_1=(1-q)C+qE_1$ and $P_2=(1-q)C+qE_2$. Since $q\le\eta/(1-\eta)$, the number $\lambda\defeq(1-\eta)q/\eta$ lies in $(0,1]$, and the two noise distributions
\[
  A_1\defeq\lambda E_2+(1-\lambda)C,\qquad A_2\defeq\lambda E_1+(1-\lambda)C
\]
satisfy
\[
  (1-\eta)P_1+\eta A_1
  \;=\;\big(1-2(1-\eta)q\big)\,C+(1-\eta)q\,(E_1+E_2)
  \;=\;(1-\eta)P_2+\eta A_2\;=:\;Q ,
\]
the $C$-coefficient being nonnegative because $(1-\eta)q\le\eta\le\tfrac12$. In words: each adversary flips the labels of a $(1-\eta)q\le\eta$ fraction of examples, all of them inside the wedge, so under both instances the wedge shows balanced $\pm$ labels. This is the ``if'' direction of \cref{fact:kl} made constructive at exactly the parameters used.
\emph{(5.1) Per-example model.} Each example is independently corrupted with probability $\eta$, so the learner's sample law is $Q^{\otimes n}$ under \emph{both} instances, for every $n$: the observation total variation is $\delta=0$.
\emph{(5.2) Fixed-fraction model.} Run the same adversary at rate $\eta-\eps$, marking examples i.i.d.\ with probability $\eta-\eps$ and spending any surplus budget on fresh draws from the instance's own clean law. The marked count concentrates: by Hoeffding, the probability that it exceeds the budget $\eta n$ is at most $e^{-2n(\eps-1/n)^2}$, the deviation being $t=n\eps-1$, which requires $n\eps\ge1$; for $n\eps<1$ the argument gives nothing, and indeed at such $n$ the fixed-fraction adversary may have zero budget. On the complementary event the two sample laws coincide as in (5.1), so $\delta\le2e^{-2n(\eps-1/n)^2}$.

\smallskip
\emph{Step 6 (the pointwise two-point inequality).} For $x\in U$ the labels differ, $y_1(x)\ne y_2(x)$, so $\sgn(w\cdot x)$ disagrees with at least one of them. Hence, for \emph{every} $w$,
\begin{align}\label{eq:app-twopoint}
  \err_{P_1}(w)+\err_{P_2}(w)\;\ge\;\Dx(U)\;=\;q ,
  \qquad\text{so}\qquad \max_i\err_{P_i}(w)\ge q/2 .
\end{align}

\smallskip
\emph{Step 7 (endgame, on the product of sample and seed).} Let $\mathcal A$ be any algorithm, randomized with seed $\xi\sim\nu$ independent of the sample, and let $Q_i$ be the sample law under instance $i$, $\delta\defeq\TV(Q_1,Q_2)$. Put $B_i\defeq\{(s,\xi):\err_{P_i}(\mathcal A(s,\xi))<q/2\}$; these are measurable because the disagreement indicator is Borel in $(w,x,y)$ and Fubini applies. By \eqref{eq:app-twopoint}, $B_1\cap B_2=\emptyset$, and $\TV(Q_1\otimes\nu,Q_2\otimes\nu)=\delta$, so
\[
  (Q_1\otimes\nu)(B_1)+(Q_2\otimes\nu)(B_2)
  \;\le\;(Q_1\otimes\nu)(B_1)+(Q_1\otimes\nu)(B_2)+\delta\;\le\;1+\delta ,
\]
whence $\max_i\Pr_{s,\xi}[\err_{P_i}\ge q/2]\ge(1-\delta)/2$, for a \emph{fixed} index $i$, not one chosen after the seed, since the disjointness is on the product space. For the expectation form, \eqref{eq:app-twopoint} gives $\E_{Q_1\otimes\nu}[\err_1+\err_2]\ge q$, while $\err_2\in[0,1]$ and the sup-over-events convention give $\E_{Q_2\otimes\nu}[\err_2]\ge\E_{Q_1\otimes\nu}[\err_2]-\delta$; adding,
\begin{align}\label{eq:app-endgame}
  \max_i\E\big[\err_{P_i}(\hat w)\big]\;\ge\;\frac{q-\delta}{2}.
\end{align}
With $\delta=0$ (Step 5.1) and $K=\ceil{(1-\eta)/\eta}$, so $q\ge\eta$, the high-probability clause follows: some fixed instance has $\Pr[\err_{P_i}(\hat w)\ge\eta/2]\ge\tfrac12$.

\smallskip
\emph{Step 8 (the constant).} By \eqref{eq:app-endgame} with $\delta=0$, every algorithm suffers $\max_i\E[\err_{P_i}]\ge q/2$ \emph{simultaneously for every admissible $K$}, so the supremum over $K$ passes through the infimum over algorithms. Now
\[
  \sup_{K\ge(1-\eta)/\eta}\ \frac{\floor{K\eta/(1-\eta)}}{K}\;=\;\frac{\eta}{1-\eta},
\]
since $\floor{Kx}/K\le x$ always and $\floor{Kx}/K\ge x-1/K\to x$. (The ratio is \emph{not} monotone in $K$: at $x=3/7$ it runs $\tfrac13,\tfrac14,\tfrac25,\tfrac13,\tfrac37$ for $K=3,\dots,7$, but the supremum is still $x$, attained at the denominators of a rational $x$.) Hence
$\inf_{\mathcal A}\sup_{P,\mathrm{adv}}\E[\err_P(\hat w)]\ge\eta/(2(1-\eta))\ge\eta/2$, which is \eqref{eq:floor}. The fixed-fraction corollary is \eqref{eq:app-endgame} with the rate $\eta-\eps$ and $\delta\le2e^{-2n(\eps-1/n)^2}$ from Step 5.2, valid exactly when $n\eps\ge1$.
\end{proof}

% claim: F02a
\begin{remark}[The method ceiling: $\eta/(2(1-\eta))$ is the exact two-point value]
\label{rem:app-method-ceiling}
The constant cannot be improved along this route, and the reason is worth recording because it is what makes \cref{prop:floor} rate-tight but not threshold-tight. For \emph{any} two-point pair, the randomized hedge that outputs $w_1^\star$ or $w_2^\star$ with probability $\tfrac12$ achieves $\max_i\E[\err_{P_i}]=\TV(P_1,P_2)/2\le\eta/(2(1-\eta))$, since $\err_{P_i}(w_i^\star)=0$ and $\err_{P_i}(w_j^\star)\le\TV$ for $j\ne i$. So $\eta/(2(1-\eta))$ is the exact value of the two-point game in expectation, and the instance above attains it, here even deterministically: $\hat w=(e_1+e_2)/\sqrt2$ is orthogonal to $c_w$, so by central symmetry it bisects the wedge ball, and it labels the whole anchor correctly, giving $\err_{P_1}=\err_{P_2}=q/2$ exactly. One caveat, in the other direction: the hedge caps the \emph{expectation} form, not the high-probability form. With an indivisible disagreement region (a degenerate atomic component) every $\hat w$ mislabels the atom under one instance and the high-probability floor can reach $\eta/(1-\eta)$; such components lie outside the density-sense class fixed above, so this is an out-of-class remark and the in-class high-probability constant is open.
\end{remark}

% claim: F02a
\begin{remark}[Rate-tight, not threshold-tight; and what the construction cannot avoid]
\label{rem:app-status-floor}
\emph{(i)} The bound is information-theoretic and margin-independent, and it matches \cref{thm:algorithm} on the \emph{rate} $\Theta(\eta)$ but not on the breakdown \emph{threshold}. The obstruction is the method, not the instance: by \cref{fact:kl} every two-point pair is capped at $\TV\le\eta/(1-\eta)$, and by \cref{rem:app-method-ceiling} the value of the two-point game is then exactly $\eta/(2(1-\eta))$. Reaching the conjectured $\eta^\star=\Theta(\gamma/\sigma)$ requires a multi-point or SQ band-mass argument (\cref{sec:open}).
\emph{(ii)} The construction's two apparent extravagances are necessity theorems (Step 1): a hard margin quantizes the disagreement mass in units of $1/K$, so an $\eta$-indistinguishable pair forces $K\ge(1-\eta)/\eta$, and the coexistence of an agreeing and a disagreeing component forces $r\ge\sqrt2\,\gamma$. The same quantization is why $d\ge2$ is genuinely needed: at $d=1$ the disagreement mass is $0$ or $1$ and majority vote drives the error to $0$.
\emph{(iii)} The constant is matched from above: \citet{Blanc26} attains $\tfrac12\cdot\eta/(1-\eta)$ distribution-free with randomized hypotheses. What \cref{prop:floor} contributes is that it \emph{survives} the restriction to $\gamma$-margin log-concave mixtures: the classical lower bound of \citet{KearnsLi93} attains the same constant on an adversarially chosen four-point marginal, which says nothing about a fixed nice class.
\end{remark}

\section{Auxiliary results}
\label{app:external}

For self-containedness we collect, in one place, every external result the paper relies on, each with full hypotheses, conclusion, and citation, and a one-line proof or pointer where it is short. \Cref{app:external-robust} gathers the robust-learning and log-concave inputs; \cref{app:external-op} gathers the orthogonal-polynomial and moment-problem facts used in \cref{sec:perspective}. The labels are the ones cited from the main text, so each cross-reference there resolves to its statement here.

\subsection{Robust learning and log-concave inputs}
\label{app:external-robust}

\begin{fact}[Log-concave moment growth; \citealp{LovaszVempala07}]
\label{fact:lv}
There is an absolute constant $C_0$ such that every centered one-dimensional log-concave random variable $Z$ with $\E[Z^2]=s^2$ satisfies $\E[\abs{Z}^p]^{1/p}\le C_0\,p\,s$ for all $p\ge2$; in particular $\E[Z^{2t}]\le(C_0 t)^{2t}s^{2t}$. The growth is linear in $p$ (sub-exponential), and the exponent is tight: for the Laplace law, $\E[Z^{2t}]/(\E[Z^2])^t=(2t)!/2^t=(\Theta(t))^{2t}$.
\end{fact}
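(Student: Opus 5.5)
The plan is to prove the moment bound by combining an exponential tail estimate, uniform over all log-concave laws, with the layer-cake formula, and then to check the Laplace identity by direct computation. Fix a centered log-concave $Z$ with $\E[Z^2]=s^2$. For the tail I would use Borell's lemma: if $\nu$ is log-concave and $A$ is a symmetric convex set with $\nu(A)=\theta>\tfrac12$, then $\nu(\R\setminus uA)\le\theta\big(\tfrac{1-\theta}{\theta}\big)^{(u+1)/2}$ for every $u\ge1$.

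Take $A=[-3s,3s]$. Chebyshev gives $\theta=\P[\abs Z\le 3s]\ge 8/9$. Borell's lemma then yields $\P[\abs Z>3us]\le (1/8)^{(u+1)/2}\le e^{-u}$ for $u\ge1$. The estimate $\P[\abs Z>3us]\le 1$ covers $u<1$, so in all cases
\[
  \P[\abs Z>v]\ \le\ e\cdot e^{-v/(3s)}\qquad (v\ge0).
\]
This step is the crux, because the constant must not depend on which log-concave law we have. Borell's lemma is exactly the tool that supplies this uniformity.

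Next, the layer-cake formula gives
\[
  \E\abs{Z}^p=\int_0^\infty p\,v^{p-1}\,\P[\abs Z>v]\,dv\ \le\ e\,p\,\Gamma(p)\,(3s)^p=e\,\Gamma(p+1)(3s)^p .
\]
Since $\Gamma(p+1)^{1/p}\le p$ and $e^{1/p}\le e^{1/2}$ for $p\ge2$, this yields $\E[\abs Z^p]^{1/p}\le 3e^{1/2}\,p\,s$. So $C_0=3e^{1/2}$ works. Setting $p=2t$ gives $\E[Z^{2t}]\le(2C_0t)^{2t}s^{2t}$, and the factor $2$ is absorbed into the constant (which is the form the statement intends with $C_0$ absolute).

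For tightness, take the Laplace density $\tfrac1{2b}e^{-\abs z/b}$. A substitution reduces $\E[Z^{k}]$ for even $k$ to a Gamma integral, giving $\E[Z^{2t}]=(2t)!\,b^{2t}$. In particular $\E[Z^2]=2b^2$, so
\[
  \frac{\E[Z^{2t}]}{\E[Z^2]^t}=\frac{(2t)!}{2^t}.
\]
Stirling's formula gives $(2t)!^{1/(2t)}=\Theta(t)$, and $2^{t/(2t)}=\sqrt2$ is a constant, so the ratio is $(\Theta(t))^{2t}$. Hence linear growth in $p$ cannot be improved, and no $(Ct)^t$ bound can hold for all $t$.
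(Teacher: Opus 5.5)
Your proof is correct. The paper gives no proof of this fact: it cites \citet{LovaszVempala07} and uses the result as a black box in Step~1 of \cref{lem:hyper}, in the form $\norm{Z}_{L^{2t}}\le a\cdot 2t\,\norm{Z}_{L^2}$ with $C_0=2a$. Your argument is a self-contained derivation along the standard route:
\begin{itemize}
\item Chebyshev puts mass $\theta\ge 8/9$ on the symmetric interval $[-3s,3s]$.
\item Borell's lemma, which needs only log-concavity of the law and a symmetric convex set, turns this into the uniform tail $\P[\abs Z>v]\le e\cdot e^{-v/(3s)}$.
\item The layer-cake integral converts the tail into $\E[\abs Z^p]^{1/p}\le 3e^{1/2}\,p\,s$.
\end{itemize}
Each step checks out, including $(1-\theta)/\theta\le 1/8$, $(1/8)^{(u+1)/2}\le e^{-u}$ for $u\ge1$, and $\Gamma(p+1)\le p^p$ for $p\ge1$.

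Compared with the citation, your route gives an explicit constant. It also shows that centering is never used: only $\E[Z^2]=s^2$ enters, through Chebyshev.

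You are also right that the ``in particular'' clause, read with the same $C_0$, is off by the factor $2$ that comes from $p=2t$. The cleanest way to absorb it is to fix $C_0\defeq 6e^{1/2}$ once. Then the $L^p$ bound holds a fortiori, and $\E[Z^{2t}]\le(C_0t)^{2t}s^{2t}$ holds exactly. This is the same bookkeeping the paper does with $C_0=2a$.

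The Laplace computation ($\E[Z^{2t}]=(2t)!\,b^{2t}$, $\E[Z^2]=2b^2$) and the Stirling step are correct and establish the tightness claim as stated.
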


\begin{fact}[Malicious indistinguishability envelope; implicit in \citealp{KearnsLi93}]
\label{fact:kl}
In the per-example malicious model at rate $\eta$ (each example independently replaced with probability $\eta$), two clean models $P_1,P_2$ admit a common corrupted law, hence are information-theoretically indistinguishable, for some pair of adversary strategies if and only if $\TV(P_1,P_2)\le\eta/(1-\eta)$. The envelope is implicit in the method of induced distributions of \citet{KearnsLi93}, whose \S2.3 oracle is per-example; we prove it from scratch here, and \cref{app:floor} re-derives the ``if'' direction constructively by exhibiting the two adversaries. Throughout, $\TV$ is the sup-over-events convention $\TV(P,Q)=\sup_A\abs{P(A)-Q(A)}=\tfrac12\norm{P-Q}_1$, and the ``variation'' of a signed measure below means the mass of its positive part; the two are applied consistently.
\end{fact}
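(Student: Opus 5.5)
The proof has two directions, both working directly with the per-example corrupted law. In the per-example model at rate $\eta$, instance $P_i$ combined with an adversary strategy $A_i$ (a distribution on pairs, since per-example replacement is i.i.d.) produces the corrupted single-example law $Q_i=(1-\eta)P_i+\eta A_i$, and the sample law is $Q_i^{\otimes n}$. So indistinguishability for some pair of strategies means exactly that there exist probability measures $A_1,A_2$ with $(1-\eta)P_1+\eta A_1=(1-\eta)P_2+\eta A_2$. I treat the adversary as oblivious to the realized sample. The full-knowledge adversary is handled by noting that the ``only if'' direction below only compares single-example marginals, which any adaptive strategy still induces as a mixture with the clean law on the uncorrupted coordinates. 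That last point is the step I expect to need care, and I will state it as a reduction to the oblivious case.

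\emph{``If'' direction.} Write $P_1-P_2=D^+-D^-$ with $D^\pm\ge0$ mutually singular and $D^+(\Omega)=D^-(\Omega)=\TV(P_1,P_2)=:\delta$. The strategies to aim for are $A_1=\frac{1-\eta}{\eta}D^-+\bigl(1-\frac{(1-\eta)\delta}{\eta}\bigr)N$ and $A_2=\frac{1-\eta}{\eta}D^++\bigl(1-\frac{(1-\eta)\delta}{\eta}\bigr)N$, where $N$ is any fixed probability measure, for instance $P_1\wedge P_2$ normalized, or a point mass if $\delta=1$. These are nonnegative exactly when $(1-\eta)\delta\le\eta$, that is $\delta\le\eta/(1-\eta)$, and each has total mass $1$. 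The check is that $(1-\eta)P_1+\eta A_1-(1-\eta)P_2-\eta A_2=(1-\eta)(D^+-D^-)+(1-\eta)(D^--D^+)=0$. Step~5 of \cref{app:floor} is the instance of this construction with $N=C$.

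\emph{``Only if'' direction.} Suppose $(1-\eta)P_1+\eta A_1=(1-\eta)P_2+\eta A_2$. Then $(1-\eta)(P_1-P_2)=\eta(A_2-A_1)$. Taking positive parts and evaluating on the Hahn set of $P_1-P_2$ gives $(1-\eta)\delta=\eta\,(A_2-A_1)^+(\Omega)\le\eta\,A_2(\Omega)=\eta$. This yields $\delta\le\eta/(1-\eta)$ and matches the sup-over-events convention. For the adaptive adversary, I fix the coordinate of a single example. The event that this example is not replaced has probability $1-\eta$ independently of everything else, and on that event its law is $P_i$. Hence the one-coordinate marginal of the sample law has the form $(1-\eta)P_i+\eta A_i'$ for some measure $A_i'$ (a mixture over the adversary's conditional choices). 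Equality of the sample laws forces equality of these marginals, and the computation above then applies verbatim.
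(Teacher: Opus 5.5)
Your proposal is correct and follows the paper's proof. Both reduce indistinguishability to solvability of $(1-\eta)P_1+\eta A_1=(1-\eta)P_2+\eta A_2$ in probability measures, then use that a zero-mass signed measure is $\eta$ times a difference of probability measures exactly when its positive part has mass at most $\eta$. You make that criterion explicit (padded construction for ``if'', the bound $(A_2-A_1)^+\le A_2$ for ``only if''), and your single-coordinate marginalization for the full-knowledge adversary is a small addition the paper skips, since it simply posits the corrupted law $(1-\eta)P_i+\eta A_i$.
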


\begin{proof}
The corrupted law of model $i$ is $(1-\eta)P_i+\eta A_i$ for an arbitrary noise distribution $A_i$, so a common corrupted law exists iff $(1-\eta)P_1+\eta A_1=(1-\eta)P_2+\eta A_2$ is solvable in distributions $A_1,A_2$. Rearranging, $\eta(A_1-A_2)=(1-\eta)(P_2-P_1)$, and the signed measure $(1-\eta)(P_2-P_1)$, whose positive part has mass $(1-\eta)\TV(P_1,P_2)$ in the convention just fixed, is the $\eta$-scaled difference of two probability measures iff its total mass is zero (it is) and that positive-part mass is at most $\eta$. Hence solvability holds iff $(1-\eta)\TV(P_1,P_2)\le\eta$, i.e.\ $\TV(P_1,P_2)\le\eta/(1-\eta)$.
\end{proof}

\begin{fact}[Largest zero of Freud orthogonal polynomials;
\citealp{LevinLubinsky01,KriecherbauerMcLaughlin99}]
\label{fact:freud}
For the weight $W(u)=e^{-\abs{u/s}^{\alpha}}$ with $\alpha>0$, the largest zero of the degree-$n$ orthonormal polynomial equals $(1+o(1))\,a_n$, where the Mhaskar--Rakhmanov--Saff (MRS) number satisfies $a_n\asymp s\,n^{1/\alpha}$. In particular $\alpha=2$ gives $\Theta(s\sqrt n)$ and $\alpha=1$ gives $\Theta(s\,n)$.
\end{fact}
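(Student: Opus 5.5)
By the scaling $u\mapsto su$ it suffices to treat $s=1$. The orthonormal polynomials of $e^{-\abs{u/s}^\alpha}$ are $\pi_n(u/s)$ up to the factor $s^{-1/2}$, so all zeros scale by $s$ and so does $a_n$. For $Q(u)=\abs u^\alpha$ the Mhaskar--Rakhmanov--Saff number is defined by
\[
\tfrac{2}{\pi}\int_0^1\frac{a_n t\,Q'(a_n t)}{\sqrt{1-t^2}}\,dt=n .
\]
This equation can be solved in closed form as $a_n=\lambda_\alpha n^{1/\alpha}$, where $\lambda_\alpha$ is an explicit Beta-function constant. That gives the scale $a_n\asymp s\,n^{1/\alpha}$ immediately. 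The real content is the claim that the largest zero is $(1+o(1))a_n$. I would prove it through the Jacobi-matrix picture already used in \cref{lem:root}: the zeros of $\pi_n$ are the eigenvalues of the $n\times n$ truncation $J_n$. Because the weight is even, $J_n$ has zero diagonal and off-diagonal entries $b_0,\dots,b_{n-2}$.

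The key input is Freud's conjecture, proved by Lubinsky--Mhaskar--Saff (with Rakhmanov's contributions). It states that for even Freud weights $b_k/a_k\to\tfrac12$ as $k\to\infty$. This is the step I expect to be the main obstacle, and I would cite it rather than reprove it, as the paper does with \citet{LevinLubinsky01}. Proving it from scratch needs the full potential-theoretic machinery: equilibrium measures together with the infinite--finite range inequality $\norm{PW}_{L^\infty(\R)}=\norm{PW}_{L^\infty[-a_n,a_n]}$.

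Granting it, the upper bound follows from Gershgorin, exactly as in \eqref{eq:app-gershgorin}. Every eigenvalue satisfies $\abs{u}\le\max_{k\le n-1}(b_k+b_{k-1})$. Since $a_k=\lambda_\alpha k^{1/\alpha}$ is increasing, we get $b_k\le(\tfrac12+o(1))a_k+O(1)\le(\tfrac12+o(1))a_n$ uniformly in $k\le n$, where the $O(1)$ term absorbs the finitely many small indices. Hence the largest zero is at most $(1+o(1))a_n$.

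For the lower bound I would take a Rayleigh quotient with a test vector $v$ supported on a window of $m$ consecutive indices ending at $n-1$. On that window I choose the sine profile $v_k=\sin(\pi j/(m+1))$, so that the quotient is at least $2\min_{\text{window}}b_k\cos(\pi/(m+1))$. Let $m=m(n)\to\infty$ with $m=o(n)$. On the window, $a_k=(1-o(1))a_n$, so $b_k=(\tfrac12-o(1))a_n$, and the quotient is at least $(1-o(1))a_n$. Combining the two bounds gives $R_n=(1+o(1))a_n$, and specializing $\alpha=2,1$ gives $\Theta(s\sqrt n)$ and $\Theta(s\,n)$.
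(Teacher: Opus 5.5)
Your argument is correct, but it takes a different route from the paper. The paper gives no proof of \cref{fact:freud}: it imports the largest-zero asymptotic directly from \citet{LevinLubinsky01} (weighted potential theory and the infinite--finite range inequalities) and \citet{KriecherbauerMcLaughlin99} (Riemann--Hilbert asymptotics).

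You import a different black box, Freud's conjecture $b_k/a_k\to\tfrac12$ on the recurrence coefficients (Lubinsky--Mhaskar--Saff). From it you derive the zero asymptotic by an elementary Jacobi-matrix sandwich, and both halves check out:
\begin{itemize}
\item Gershgorin gives $R_n\le 2\max_{k\le n-2}b_k\le(1+o(1))a_n$, once the finitely many small indices are absorbed into the $o(1)$.
\item Your sine-profile test vector on a window of length $m\to\infty$, $m=o(n)$, gives $R_n\ge 2\cos\bigl(\pi/(m+1)\bigr)\min_{\mathrm{window}}b_k=(1-o(1))a_n$. This works because that vector is exactly the top eigenvector of the path graph, and $a_k/a_n\to1$ on the window.
\end{itemize}

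Your route buys a clean fit with \cref{lem:root}. It shows that the only deep input is the growth of the recurrence coefficients, which is exactly what the paper's own Gershgorin proof of \cref{lem:root} asserts without justification in the Laplace case (coefficients of order $s\,k$). It also yields both inequalities with the sharp constant, not just an upper bound. What it costs is economy: ratio asymptotics for every recurrence coefficient is a stronger theorem than the largest-zero asymptotic you extract from it. The paper's direct citation is therefore the smaller black box.

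One normalization needs fixing. With $a_k$ defined by your MRS equation for $Q=\abs{u}^\alpha$, the limit $b_k/a_k\to\tfrac12$ holds when the polynomials are orthonormal with respect to $W^2=e^{-2Q}$ (the Levin--Lubinsky convention).

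Your phrase ``orthonormal polynomials of $e^{-\abs{u/s}^\alpha}$'', and the paper's usage, suggest the measure $W\,du=e^{-Q}\,du$ instead. In that case the relevant MRS number is that of $Q/2$, namely $2^{1/\alpha}a_n$. For $\alpha=2$:
\begin{itemize}
\item your equation gives $a_k=\sqrt k$, while the Hermite coefficients are $b_k=\sqrt{(k+1)/2}$, so $b_k/a_k\to1/\sqrt2$;
\item the largest zero of $H_n$ is $(1+o(1))\sqrt{2n}$, not $(1+o(1))\sqrt n$.
\end{itemize}
This only moves a constant, so every $\asymp s\,n^{1/\alpha}$ conclusion survives. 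You should still state which weight you orthonormalize against.
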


\begin{fact}[Certifiable hypercontractivity;
\citealp{KlivansKothariMeka18,DiakonikolasHopkinsPensiaTiegel24}]
\label{fact:hyper-ext}
For a centered sub-Gaussian law there is a degree-$2t$ SoS proof, in the indeterminate $w$, of $\E[(w\cdot z)^{2t}]\le(Ct)^{t}\,(w^\top\Sigma w)^t$. For a centered sub-exponential (general log-concave) law the analogous degree-$2t$ SoS proof holds with the weaker constant $(Ct)^{2t}$ (\cref{lem:hyper} and its sub-lemma).
\end{fact}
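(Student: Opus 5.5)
The fact has two clauses, and I would treat them separately.

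\emph{Sub-Gaussian clause.} I take it directly from the cited literature: \citet{KlivansKothariMeka18} give the degree-$2t$ SoS proof in $w$ of $\E[(w\cdot z)^{2t}]\le(Ct)^t(w^\top\Sigma w)^t$ for the sub-Gaussian classes they treat. \citet{DiakonikolasHopkinsPensiaTiegel24} extend it to general centered sub-Gaussian laws. The only bookkeeping is a whitening step. Apply the cited statement to $\Sigma^{-1/2}z$ in the indeterminate $v$, then substitute $v=\Sigma^{1/2}w$. This substitution is linear, so it maps SoS polynomials to SoS polynomials and preserves degree. It turns the isotropic right-hand side $\norm{v}^{2t}$ into $(w^\top\Sigma w)^t$.

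\emph{Sub-exponential clause.} I specialize \cref{lem:hyper} to a single component ($K=1$, $\bar\mu=\mu_1$). Then the spread term $\Psi_t$ vanishes, $\bar\sigma^2(w)=w^\top\Sigma w$, and the binomial split of Step~3 is unnecessary, so the bound is exactly Step~2 of that proof. The scalar input is \cref{fact:lv}, which holds for every direction $w$. It gives $\E[(w\cdot z)^{2t}]\le(C_0t)^{2t}(w^\top\Sigma w)^t$ pointwise in $w$. The Laplace computation there shows that $(Ct)^{2t}$ cannot be replaced by $(Ct)^t$. This is why this clause carries the weaker constant.

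\emph{Main obstacle.} The hard step is upgrading this pointwise inequality to an SoS proof in $w$. For $d\le2$ this is free: the gap $(C_0t)^{2t}(w^\top\Sigma w)^t-\E[(w\cdot z)^{2t}]$ is a nonnegative form in two variables (binary form), and such forms are sums of squares. In general dimension, nonnegative forms need not be SoS, so this argument does not extend.

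My first attempt in higher dimension would be to lose a constant in the base. I would try to certify the pyramid $\E[(w\cdot z)^{2k}]\le (C_0 k)^2\,\E[(w\cdot z)^{2k-2}]\,(w^\top\Sigma w)$ for $k\le t$. Each step would be proved in SoS by Cauchy--Schwarz on the moment tensors, and telescoping the steps would give $(Ct)^{2t}(w^\top\Sigma w)^t$. I expect this single-step tensor inequality to be the real difficulty, and it is where the paper itself flags a residual point (\cref{rem:app-status-hyper}). If it resists, my fallback is to accept the lift as the stated sub-lemma of \cref{lem:hyper}.
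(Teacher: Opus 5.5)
\textbf{Genuine gap: the lift to SoS for $d\ge3$.} Your sub-Gaussian clause is fine: it is a citation, and the substitution $v=\Sigma^{1/2}w$ is linear, so it preserves SoS and degree. Your $d\le2$ argument is also correct, since nonnegative binary forms are sums of squares. For $d\ge3$ you follow the paper's route: it delegates the sub-exponential clause to Step~2 of \cref{lem:hyper}, whose ``scalar$\to$SoS lift'' is the same Cauchy--Schwarz/H\"older pyramid you propose. That step is where the gap is, and your fallback of accepting the paper's sub-lemma does not close it. The paper never proves the lift; \cref{rem:app-status-hyper} calls it the residual point. Its other justification, that $(w^\top\Sigma w)^t$ is SoS, certifies the right-hand side rather than the difference, and you were right not to rely on it.

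\textbf{Why the pyramid cannot work, and what is needed instead.} The step $\E[(w\cdot z)^{2k}]\le(C_0k)^2\,\E[(w\cdot z)^{2k-2}]\,(w^\top\Sigma w)$ is a reverse-H\"older inequality. SoS Cauchy--Schwarz and H\"older on the moment tensors hold for \emph{every} distribution, so a derivation from them alone would prove the step for every law. It is false for heavy tails: take $Z=\pm1$ with probability $1-\delta$ and $Z=\pm M$ with probability $\delta$, with $\delta M^2=1$ and $M\to\infty$. Nor can the scalar bound of \cref{fact:lv} be fed in, because it is a family of inequalities indexed by $w$, not an SoS identity.

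Log-concavity has to enter through an inequality that is quadratic in a test function $f$. Nonnegativity of such a form over polynomials $f$ of degree $\le k$ is positive-semidefiniteness on coefficient space, so it becomes SoS in $w$ after substituting $f=(w\cdot z)^k$. The standard choice is the Poincar\'e inequality, as in the Kothari--Steinhardt route. In whitened coordinates it gives, as an SoS inequality in $w$,
\[
  \E[(w\cdot z)^{2k}]\;\le\;\big(\E[(w\cdot z)^{k}]\big)^2+C_P\,k^2\,\norm{w}^2\,\E[(w\cdot z)^{2k-2}] .
\]
Cauchy--Schwarz is then needed only for the term $(\E[(w\cdot z)^k])^2\le\E[(w\cdot z)^{2k-2}]\norm{w}^2$. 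Telescoping gives $(Ct\sqrt{C_P})^{2t}\norm{w}^{2t}$.

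For isotropic log-concave laws the best known bound is $C_P=O(\log d)$ (Klartag), and $C_P=O(1)$ is the KLS conjecture. So this route gives an absolute $C$ only for components with a dimension-free Poincar\'e constant, such as strongly log-concave ones, and otherwise costs a factor $(\log d)^t$. As written, neither your proposal nor the paper establishes the sub-exponential clause with absolute $C$ in $d\ge3$.
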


\subsection{Orthogonal polynomials and the moment problem}
\label{app:external-op}

Throughout, $\mu$ is a measure on $\R$ with all moments $m_k=\int u^k\,d\mu$ and infinite support, $\{\pi_k\}_{k\ge0}$ are its orthonormal polynomials, $\lambda_n(x)=\big(\sum_{k<n} \pi_k(x)^2\big)^{-1}$ is its order-$n$ Christoffel function, and $\mathcal M_{2t}(\mu)=\{\nu\ge0: \int u^k\,d\nu=m_k,\ 0\le k\le 2t\}$ is the moment variety of nonnegative measures matching the first $2t$ moments.

\begin{fact}[Gauss quadrature; \citealp{Szego39,Gautschi04}]
\label{fact:gaussquad}
The $(t{+}1)$-node Gauss rule for $\mu$ is exact on $\R[u]_{\le 2t+1}$: there are nodes $u_1<\dots<u_{t+1}$ and weights $\lambda_1,\dots,\lambda_{t+1}$ with $\int g\,d\mu=\sum_{i=1}^{t+1} \lambda_i\,g(u_i)$ for every polynomial $g$ of degree $\le 2t+1$. The nodes are exactly the zeros of $\pi_{t+1}$ (all real and simple), and the Christoffel weights are positive, $\lambda_i=\lambda_{t+1}(u_i)>0$. In particular $\nu_{\mathrm{GQ}}=\sum_i\lambda_i\delta_{u_i}\in \mathcal M_{2t}(\mu)$.
\end{fact}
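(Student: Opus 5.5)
The plan is to prove the four assertions of \cref{fact:gaussquad} in the classical order: existence and reality of the nodes, then the exactness identity, then the identification of the weights with the Christoffel function, and finally membership of $\nu_{\mathrm{GQ}}$ in $\mathcal M_{2t}(\mu)$.

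\textbf{Step 1: the zeros of $\pi_{t+1}$.} Because $\mu$ has infinite support, the Hankel form $p\mapsto\int p^2\,d\mu$ is positive definite. Hence Gram--Schmidt produces $\pi_0,\dots,\pi_{t+1}$ with positive leading coefficients. Let $x_1,\dots,x_k$ be the points where $\pi_{t+1}$ changes sign on $\R$, and suppose $k<t+1$. Put $r(u)=\prod_i(u-x_i)$. Then $\pi_{t+1}r$ does not change sign and is not identically zero on $\supp\mu$, which is infinite. So $\int\pi_{t+1}r\,d\mu\ne0$. But $\deg r<t+1$, so orthogonality forces this integral to vanish, a contradiction. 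Therefore $\pi_{t+1}$ has $t+1$ distinct real sign changes. Since its degree is $t+1$, these are all of its zeros, and each is simple.

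\textbf{Step 2: exactness.} Let $u_1<\dots<u_{t+1}$ be these zeros, let $\ell_i$ be the Lagrange basis polynomials on them, and set $\lambda_i\defeq\int\ell_i\,d\mu$. The rule then integrates every polynomial of degree $\le t$ exactly, by interpolation. For $\deg g\le 2t+1$, divide: $g=h\,\pi_{t+1}+r$ with $\deg h,\deg r\le t$. Then
\[
\int g\,d\mu=\int r\,d\mu=\sum_i\lambda_i r(u_i)=\sum_i\lambda_i g(u_i).
\]
The first equality uses orthogonality of $\pi_{t+1}$ to $h$; the last uses $\pi_{t+1}(u_i)=0$.

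\textbf{Step 3: the weights.} Apply exactness to $\ell_i^2$, which has degree $2t$. This gives $\lambda_i=\int\ell_i^2\,d\mu>0$. To get $\lambda_i=\lambda_{t+1}(u_i)$, take any $p$ with $\deg p\le t$ and $p(u_i)=1$. Exactness on $p^2$ gives
\[
\int p^2\,d\mu=\sum_j\lambda_j p(u_j)^2\ge\lambda_i,
\]
with equality at $p=\ell_i$. So $\lambda_i=\min\{\int p^2\,d\mu:\deg p\le t,\ p(u_i)=1\}$.

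It remains to evaluate this minimum as $K_{t+1}(u_i,u_i)^{-1}$. The space $(\R[u]_{\le t},\langle\cdot,\cdot\rangle_\mu)$ has reproducing kernel $K_{t+1}(x,y)=\sum_{k\le t}\pi_k(x)\pi_k(y)$. By Cauchy--Schwarz,
\[
1=p(u_i)^2=\langle p,K_{t+1}(\cdot,u_i)\rangle^2\le\|p\|^2\,K_{t+1}(u_i,u_i),
\]
with equality at $p=K_{t+1}(\cdot,u_i)/K_{t+1}(u_i,u_i)$. So the minimum is $K_{t+1}(u_i,u_i)^{-1}=\lambda_{t+1}(u_i)$, as defined in \eqref{eq:christoffel}.

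\textbf{Step 4: membership.} Positivity of the weights makes $\nu_{\mathrm{GQ}}$ a nonnegative measure. Exactness applied to $u^k$ for $k\le 2t$ shows its moments agree with $m_0,\dots,m_{2t}$. Hence $\nu_{\mathrm{GQ}}\in\mathcal M_{2t}(\mu)$.

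Nothing here is a real obstacle. The one point needing care is in Step 1: the infinite-support hypothesis is what makes the integral of $\pi_{t+1}r$ nonzero rather than merely nonnegative.
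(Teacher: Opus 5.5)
Your proof is correct and complete. The paper never proves \cref{fact:gaussquad} itself. It cites \citet{Szego39,Gautschi04}, and the proof of \cref{lem:gauss}\,(ii) only lists the properties as classical (\cref{rem:app-status-gauss}). So the comparison is between a citation and your self-contained version of the standard textbook argument those references contain:
\begin{itemize}
\item a sign-change count via orthogonality, using infinite support;
\item division by $\pi_{t+1}$ for exactness up to degree $2t+1$;
\item exactness on $\ell_i^2$ for positivity.
\end{itemize}
Your write-up also supplies two things the paper uses without deriving. First, the Cauchy--Schwarz part of your Step~3 works at any point $x_0$. It derives, from the kernel definition \eqref{eq:christoffel}, the extremal identity $\lambda_{t+1}(x_0)=\min\{\int p^2\,d\mu:\ \deg p\le t,\ p(x_0)=1\}$. \Cref{fact:christoffel-mass} simply calls this ``the extremal problem defining $\lambda_{t+1}$''. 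Second, uniqueness of the Cauchy--Schwarz minimizer, together with your observation that $\ell_i$ attains the minimum, gives $\ell_i=\lambda_{t+1}(u_i)\,K_{t+1}(\cdot,u_i)$. This is the identity invoked in \cref{rem:gaussian-wext}.

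Your indexing is also the one consistent with \eqref{eq:christoffel}: the reproducing kernel of $\R[u]_{\le t}$ is $K_{t+1}(x,y)=\sum_{k\le t}\pi_k(x)\pi_k(y)$. The paper writes $K_t$ in both of the places just mentioned, which is off by one under its own convention.
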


\begin{fact}[Christoffel function as maximal mass; \citealp{Nevai86,DetteStudden97}]
\label{fact:maxmass}
For every $x_0\in\R$, \( \lambda_{t+1}(x_0)=\min\{\int p^2\,d\mu:\ \deg p\le t,\ p(x_0)=1\} =\max\{\nu(\{x_0\}):\ \nu\in\mathcal M_{2t}(\mu)\}; \) i.e.\ $\lambda_{t+1}(x_0)$ is the largest atom a measure with $\mu$'s first $2t$ moments may carry at $x_0$. This is \cref{fact:christoffel-mass}, proved in \cref{sec:perspective-christoffel}; restated here for the collection.
\end{fact}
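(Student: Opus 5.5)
We prove \cref{fact:maxmass}, the identity $\lambda_{t+1}(x_0)=\min\{\int p^2\,d\mu:\deg p\le t,\ p(x_0)=1\}=\max\{\nu(\{x_0\}):\nu\in\mathcal M_{2t}(\mu)\}$. The argument has three parts: the left (variational) equality, weak duality for the right equality, and attainment of the maximum.

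\emph{Left equality.} Work in the Hilbert space $(\R[u]_{\le t},\langle\cdot,\cdot\rangle_\mu)$. This space is finite-dimensional with orthonormal basis $\pi_0,\dots,\pi_t$; because $\mu$ has infinite support, the Gram matrix is positive definite. Write $p=\sum_{k\le t}c_k\pi_k$. Then $\int p^2\,d\mu=\norm{c}^2$, and the constraint becomes $\sum_k c_k\pi_k(x_0)=1$. By Cauchy--Schwarz, $1\le\norm{c}^2\sum_k\pi_k(x_0)^2$. Equality holds for $c_k\propto\pi_k(x_0)$, that is, for $p=K_{t+1}(\cdot,x_0)/K_{t+1}(x_0,x_0)$. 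Hence the minimum equals $K_{t+1}(x_0,x_0)^{-1}=\lambda_{t+1}(x_0)$. This also gives the reproducing-kernel reading: $\lambda_{t+1}(x_0)$ is the reciprocal squared norm of point evaluation at $x_0$.

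\emph{Weak duality ($\max\le\min$).} Take any $\nu\in\mathcal M_{2t}(\mu)$ and any admissible $p$. Since $\deg p^2\le 2t$, we have $\int p^2\,d\nu=\int p^2\,d\mu$. Because $\nu\ge0$, dropping everything except the atom at $x_0$ gives $\nu(\{x_0\})=\nu(\{x_0\})\,p(x_0)^2\le\int p^2\,d\nu=\int p^2\,d\mu$. Taking the infimum over $p$ yields $\nu(\{x_0\})\le\lambda_{t+1}(x_0)$.

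\emph{Attainment.} This is the only nontrivial step. We need a nonnegative measure matching $m_0,\dots,m_{2t}$ that carries an atom of mass exactly $\lambda_{t+1}(x_0)$ at $x_0$. We construct it with the quasi-orthogonal polynomial $Q(u)=\pi_{t+1}(u)\pi_t(x_0)-\pi_t(u)\pi_{t+1}(x_0)$, which has degree $t+1$ and vanishes at $x_0$.

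\emph{Degenerate case.} If $\pi_t(x_0)=0$, then $Q$ has degree $\le t$, and we instead use the Christoffel--Darboux kernel $K_{t+1}(\cdot,x_0)$ directly, or equivalently perturb $x_0$ and pass to the limit. In the generic case $x_0$ is not a zero of $\pi_t$, and we argue as follows.

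\emph{Properties of the quadrature.} The zeros of $Q$ are real and simple, and one of them is $x_0$; this is the classical interlacing argument for $\pi_{t+1}-a\pi_t$. Interpolatory quadrature on these $t+1$ nodes is exact on degree $\le 2t$, since $Q\perp\R[u]_{\le t-1}$; by the usual Gauss argument, exactness goes one degree below full Gauss precision. Its weights are positive: for each node, apply exactness to $\ell_j^2$, where $\ell_j$ is the Lagrange basis polynomial for that node, which has degree $2t$.

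\emph{Mass at $x_0$.} Applying exactness to $p^2$, with $p=K_{t+1}(\cdot,x_0)/K_{t+1}(x_0,x_0)$ the minimizer above, shows that the weight at $x_0$ equals $\lambda_{t+1}(x_0)$. Indeed, $p$ vanishes at every other zero of $Q$: by the Christoffel--Darboux formula, $K_{t+1}(u,x_0)\propto Q(u)/(u-x_0)$.

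The resulting measure lies in $\mathcal M_{2t}(\mu)$ and attains the bound. When $x_0$ happens to be a zero of $\pi_{t+1}$, the construction reduces to the Gauss rule of \cref{fact:gaussquad}, and the node weight there equals $\lambda_{t+1}(u_i)$.

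I expect the attainment step to be the main obstacle. It requires the Christoffel--Darboux identity and the positivity and exactness of the quasi-Gauss rule, and the degenerate case $\pi_t(x_0)=0$ needs separate handling.
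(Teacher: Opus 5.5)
Your Cauchy--Schwarz proof of the left equality supplies what the paper simply takes as the definition of $\lambda_{t+1}$. Your weak-duality step is the paper's argument verbatim. In the generic case $\pi_t(x_0)\neq0$, your attainment construction is correct. It is the precise form of the paper's one-line ``Gauss measure with $x_0$ adjoined as a node'': the $(t{+}1)$-point rule on the zeros of $\pi_{t+1}-a\pi_t$ with $a=\pi_{t+1}(x_0)/\pi_t(x_0)$. That rule is exact through degree $2t$ and has positive weights, and the Christoffel--Darboux identity pins the weight at $x_0$ to $\lambda_{t+1}(x_0)$.

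The gap is the degenerate case $\pi_t(x_0)=0$. It cannot be patched, because the maximum is genuinely not attained there.

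\emph{Why your two fixes fail.} Both collapse to the same object. Since $\pi_t(x_0)=0$, we have $K_{t+1}(u,x_0)=K_t(u,x_0)\propto\pi_t(u)/(u-x_0)$, which has degree $t-1$. Its zeros together with $x_0$ are exactly the $t$ zeros of $\pi_t$. The only rule on those nodes is the $t$-point Gauss rule, which is exact only through degree $2t-1$. Perturbing instead, as $x_\epsilon\to x_0$ the parameter $a\to\pm\infty$ and one node escapes to infinity carrying a non-vanishing share of $m_{2t}$. The weak limit is again that $t$-point rule, which lies outside $\mathcal M_{2t}(\mu)$; moreover the atoms sit at $x_\epsilon$, not at $x_0$.

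\emph{Why nothing else works.} Suppose some $\nu\in\mathcal M_{2t}(\mu)$ had $\nu(\{x_0\})=\lambda_{t+1}(x_0)$. Then your weak-duality chain with the optimizer $p^\ast=K_{t+1}(\cdot,x_0)/K_{t+1}(x_0,x_0)$ would be tight, so $\int_{\R\setminus\{x_0\}}(p^\ast)^2\,d\nu=0$. Hence $\nu$ is carried by $x_0$ and the zeros of $p^\ast$, i.e.\ by the zeros of $\pi_t$. That gives $\int\pi_t^2\,d\nu=0\neq1=\int\pi_t^2\,d\mu$, contradicting $\deg\pi_t^2=2t$.

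\emph{Smallest instance.} Take $t=1$, $\mu=\mathcal N(0,1)$, $x_0=0$. Then $\lambda_2(0)=1$, but the only $\nu\ge0$ of total mass $1$ with a unit atom at $0$ is $\delta_0$, whose second moment is $0\neq1$. The value $1$ is only approached, for example by $(1-\epsilon)\delta_0+\tfrac{\epsilon}{2}(\delta_{1/\sqrt\epsilon}+\delta_{-1/\sqrt\epsilon})$.

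\emph{Corrected statement.} Replace $\max$ by $\sup$; the supremum still equals $\lambda_{t+1}(x_0)$ at every $x_0$. To see this, let $H=(m_{i+j})_{0\le i,j\le t}$ and $v=(1,x_0,\dots,x_0^t)^\top$. For every $c<\lambda_{t+1}(x_0)$, the shifted data $m_k-c\,x_0^k$ has the positive definite Hankel matrix $H-c\,vv^\top$, hence a representing measure $\nu'$. Then $c\,\delta_{x_0}+\nu'\in\mathcal M_{2t}(\mu)$ has an atom of mass at least $c$ at $x_0$. The supremum is attained if and only if $\pi_t(x_0)\neq0$.

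The paper's own attainment sentence has the same defect, so this is a correction to the statement itself. With that correction, and with your degenerate-case paragraph replaced by the non-attainment argument and the supremum argument above, your proof is complete.
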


\begin{fact}[Christoffel-function bulk/edge asymptotics; \citealp{MateNevaiTotik91,Totik00}]
\label{fact:christasymp}
For measures $\mu$ regular in the bulk (in particular Freud weights $e^{-\abs{u/s}^\alpha}$, $\alpha\ge1$, on their support), $n\,\lambda_n(x)\to \mathrm{d}\mu/\mathrm{d}\omega\,(x)$ as $n\to\infty$ at every bulk point $x$, where $\omega$ is the equilibrium (arcsine-type) measure of the support; the Christoffel function at order $n$ thus resolves mass on the scale $1/n$ in the bulk and degrades toward the spectral edge $x\asymp a_n$.
\end{fact}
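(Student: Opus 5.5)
The plan is to prove the bulk limit with the classical Máté--Nevai--Totik scheme. It has three ingredients: the extremal characterization of \cref{fact:maxmass}, a monotonicity/comparison principle, and localization by fast-decreasing polynomials. Throughout I read the statement in its rescaled form. For a compactly supported regular $\mu$, $\omega$ is the equilibrium measure of $\supp\mu$. For a Freud weight $e^{-\abs{u/s}^\alpha}$, one first substitutes $u=a_n v$ with $a_n$ the MRS number of \cref{fact:freud}. This turns the problem into one on $[-1,1]$ with a varying weight $W^{2n}(a_n v)$, and $\omega$ becomes the Ullman-type weighted equilibrium density on $[-1,1]$. The unscaled assertion is then understood as $n\lambda_n(a_n v)/a_n\to \frac{d\mu}{d\omega}$ at bulk points $\abs v<1$.

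\emph{Step 1 (comparison).} From the extremal formula $\lambda_n(x_0)=\min\{\int p^2\,d\mu:\deg p<n,\ p(x_0)=1\}$, the map $\mu\mapsto\lambda_n(x_0)$ is monotone: $\mu\le\mu'$ implies $\lambda_n(\mu;x_0)\le\lambda_n(\mu';x_0)$.

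\emph{Step 2 (localization).} I would invoke Totik's fast-decreasing polynomials $P_{\delta n}$ of degree $\delta n$. They satisfy $P(x_0)=1$ and $\abs{P}\le e^{-c\delta n}$ outside a $\delta$-neighbourhood $I_\delta$ of $x_0$. Multiplying the extremal polynomial of a comparison measure by $P$ shows that $\lambda_n(\mu;x_0)$ depends, up to factors $1+O(\delta)$ and exponentially small errors, only on $\mu|_{I_\delta}$. Here regularity (in the Stahl--Totik sense, $\norm{p}_{\supp\mu}^{1/n}$ controlled by $\norm{p}_{L^2(\mu)}^{1/n}$) is what absorbs the far-field contribution.

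\emph{Step 3 (model case).} On $I_\delta$, sandwich $\mu$ between $(f(x_0)\pm\eps)\,dx$, where $f=d\mu/dx$ is continuous at the bulk point $x_0$. Then compare with the model measure whose Christoffel function is explicit. On $[-1,1]$, the Chebyshev/arcsine case gives $n\lambda_n(x)\to\pi\sqrt{1-x^2}$, which is exactly $dx/d\omega_{[-1,1]}$. For a general support I would use the polynomial-inverse-image technique: approximate $\supp\mu$ by sets $T_N^{-1}[-1,1]$ and transfer the arcsine asymptotics through $T_N$, with the equilibrium density arising as $\abs{T_N'}/(N\pi\sqrt{1-T_N^2})$. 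Sending $\eps,\delta\to0$ and the approximating $N\to\infty$ then gives the limit $f(x_0)/\omega'(x_0)=\frac{d\mu}{d\omega}(x_0)$.

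The main obstacle is the Freud (unbounded, varying-weight) case. There the comparison measure changes with $n$, so Step 2 needs the localization estimate to hold uniformly for the weights $W^{2n}(a_nv)$. This requires the Mhaskar--Saff restricted-range inequality, $\norm{pW^n}_{\R}=\norm{pW^n}_{[-a_n,a_n]}$ with exponentially small mass beyond $a_n(1+\delta)$. It also requires showing that $W^{2n}(a_nv)$ is locally approximately constant on $a_n$-scaled windows of width $\delta/n$ so the sandwich of Step 3 applies. I would handle this with the Levin--Lubinsky bounds, i.e. the ingredient already cited for \cref{fact:freud}. The same uniformity fails near $\abs v=1$, which is why the limit is asserted only in the bulk and only as degradation at the edge.
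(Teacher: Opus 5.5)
The paper gives no proof of this fact; it is imported from \citet{MateNevaiTotik91,Totik00}. For a compactly supported, Stahl--Totik regular $\mu$ whose density is positive and continuous at an interior point of the support, your outline is essentially Totik's argument and is sound as a sketch. That outline is: extremal monotonicity, then localization by fast-decreasing polynomials with regularity absorbing the far field, then a local sandwich against a model measure transported by polynomial inverse images. Two small omissions should be added: the localizing polynomial must also satisfy $\abs{P}\le1$ on the support, and the comparison measures must themselves be regular with the same support.

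The genuine gap is the Freud clause. You correctly see that it cannot be read literally (at fixed $x$, $n\lambda_n(x)\to\infty$). However, your rescaled version is both misstated and not reached by your mechanism.

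\emph{The target is misstated.} After $u=a_nv$ the measure is $a_n e^{-\abs{a_nv/s}^\alpha}\,dv=a_n\,w(v)^{2n}\,dv$ with $w$ fixed. So the correct target is $n\lambda_n(a_nv)\big/\big(a_n e^{-\abs{a_nv/s}^\alpha}\big)\to1/\sigma_\alpha(v)$, with $a_n$ the MRS number of $Q(u)=\tfrac12\abs{u/s}^\alpha$ and $\sigma_\alpha$ the Ullman density. Your quantity $n\lambda_n(a_nv)/a_n$ instead tends to $0$ exponentially for $v\neq0$.

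\emph{The mechanism fails.} Step~2 localizes at a fixed scale $\delta$, but across a $\delta$-window $w^{2n}$ varies by a factor $e^{\Theta(n\delta)}$. So the constant-density sandwich of Step~3 is unavailable. Your repair (the weight is nearly constant on windows of width $\delta/n$) works at a scale no degree-$O(n)$ polynomial can localize to. By Bernstein's inequality, a polynomial of degree $O(n)$ with $P(x_0)=1$ and $\abs P\le1$ stays $\ge1-O(\delta)$ within distance $\delta/n$ of a bulk point, so it cannot be exponentially small just outside such a window.

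\emph{A frozen-weight comparison gives the wrong limit.} Even granting such a comparison, Steps~1--3 can only return the unweighted equilibrium density of the support (arcsine on $[-1,1]$), not $\sigma_\alpha$. For $e^{-u^2}$ ($a_n=\sqrt{2n}$, $\sigma_2(0)=2/\pi$) they would give $\lambda_n(0)\approx\pi a_n/n$, twice the true value $\pi a_n/(2n)$.

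\emph{What is missing.} You need a device that absorbs $w^n$ into the polynomial extremal problem through weighted potential theory. On $\supp\omega_w$ one has $w^n=e^{n(U^{\omega_w}-F_w)}$, which is what makes the local zero density $n\,\omega_w'$ rather than $n\,\omega_{[-1,1]}'$. This is the content of Totik's varying-weight theorem, $n\lambda_n(w^{2n}\mu,x)\,w^{-2n}(x)\to\mu'(x)/\omega_w'(x)$. Your proposed tools do not supply it:
\begin{itemize}
\item the Mhaskar--Saff restricted-range inequality only disposes of the far field;
\item the Levin--Lubinsky estimates are two-sided bounds up to constants, so they cannot produce the exact limit.
\end{itemize}

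\emph{The case $\alpha=1$.} Here $\sigma_1(v)=\pi^{-1}\operatorname{arccosh}(1/\abs v)$ is infinite at the bulk point $v=0$. So the positivity and continuity of the equilibrium density that any such argument needs fail there, and that point must be excluded.
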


\begin{fact}[Truncated moment problem and the moment cone; \citealp{CurtoFialkow91,DetteStudden97}]
\label{fact:momentcone}
The set of moment sequences $(m_0,\dots,m_{2t})$ realizable by some nonnegative measure on $\R$ is a closed convex cone, cut out by the positive-semidefiniteness of the Hankel matrix $(m_{i+j})_{0\le i,j\le t}$; a sequence on its boundary is realized by a unique finitely-supported measure (a \emph{principal representation}), of which the Gauss measure of \cref{fact:gaussquad} is one. Linear functionals over $\mathcal M_{2t}(\mu)$ (in particular $\nu\mapsto\nu(B)$ for a band $B$) attain their extrema at these finitely-supported representations (canonical-moments / Markov--Krein theory).
\end{fact}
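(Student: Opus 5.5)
The plan is to prove the three clauses of \cref{fact:momentcone} separately, reading the first one as ``the \emph{closure} of the realizable moment cone is the PSD Hankel cone.'' On the noncompact line the realizable set need not be closed: a nonnegative measure with $m_0=m_1=0$ is the zero measure, so $(0,0,1)$ is PSD Hankel but not a moment sequence. I would state this caveat explicitly.

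\emph{Cone and necessity of PSD.} The realizable set is a convex cone because nonnegative combinations of measures are measures. It is contained in the PSD Hankel cone because, for $p=\sum_i p_iu^i$ with $\deg p\le t$,
\[
p^\top H p=\int p^2\,d\nu\ge0,\qquad H=(m_{i+j})_{0\le i,j\le t}.
\]

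\emph{Sufficiency in the interior.} For a positive-definite Hankel matrix, build orthonormal polynomials $\pi_0,\dots,\pi_t$ by Gram--Schmidt with respect to $L(u^k)=m_k$. Choose any free value of $m_{2t+1}$ and form $\pi_{t+1}$. Its $t{+}1$ zeros are real and simple, being the eigenvalues of the Jacobi truncation, and the argument of \cref{fact:gaussquad} produces positive Christoffel weights reproducing $m_0,\dots,m_{2t}$. This is exactly the Gauss measure used in \cref{lem:witness}, so the interior is realized by $(t{+}1)$-atomic measures. Closedness of the PSD cone then gives ``closure of realizable $=$ PSD Hankel.''

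\emph{Boundary sequences.} If $H$ is singular but realizable by some $\nu$, take $0\ne q\in\ker H$. Then $\int q^2\,d\nu=0$, so every representing $\nu$ is supported on the zero set of $q$, which is finite with at most $\operatorname{rank}H$ points. The weights on those points are then determined by the Vandermonde system given by $m_0,\dots,m_{\operatorname{rank}H-1}$. Hence the representation is unique and finitely supported, i.e.\ a principal representation.

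\emph{Extremal clause.} For the linear functional $\nu\mapsto\nu(B)$ on $\mathcal M_{2t}(\mu)$, the first step is a Richter--Tchakaloff/Carathéodory reduction. Given any $\nu\in\mathcal M_{2t}(\mu)$, append $\mathbf 1_B$ to the $2t{+}1$ moment functionals and replace $\nu$ by an atomic measure with at most $2t{+}2$ atoms matching all of them. The infimum and supremum over $\mathcal M_{2t}(\mu)$ therefore equal the same extrema over finitely supported representations. A Winkler-type extreme-point argument then shows that the extreme points of the LP feasible set are atomic with $O(t)$ atoms.

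The main obstacle is \emph{attainment}. The family $\mathcal M_{2t}(\mu)$ is tight, since the bounded $m_2$ gives uniform tail control. Weak limits preserve $m_0,\dots,m_{2t-1}$, but $m_{2t}$ is only lower semicontinuous, so some mass can escape and the limit may fall out of the class. Also $\nu(B)$ is upper semicontinuous for a closed $B$, so a maximizing sequence of the \emph{sup} converges to an extremal measure. For the sup, I would first try to close the gap by showing that escaping $2t$-th moment mass can be reinserted as an atom far from $B$ without changing $\nu(B)$. For the \emph{inf} I expect only the value, not attainment, in general. This suffices for every use in the paper, because \cref{lem:witness} exhibits the Gauss measure with $\nu(B)=0$ directly, and \cref{lem:duality} needs only the infimum. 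I would record this restriction in the statement rather than claim unconditional attainment.
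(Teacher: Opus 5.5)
The paper gives no proof of this fact; it imports it from Curto--Fialkow and Dette--Studden. Your proposal is therefore a self-contained replacement rather than a variant of the paper's route, and it is more accurate than the statement it targets. Your corrections are needed:
\begin{itemize}
\item On $\R$ the realizable set is not closed (your $(0,0,1)$).
\item A singular PSD Hankel sequence need not be realizable.
\item What is true is that the PSD Hankel cone is the \emph{closure} of the realizable cone.
\end{itemize}
Your interior construction is correct: choose $m_{2t+1}$ freely, take the monic $P_{t+1}$ as the characteristic polynomial of a Jacobi matrix with positive off-diagonals, and use weights $L(\ell_i^2)>0$. The closure step is also correct, once you add that a closed convex set with nonempty interior is the closure of its interior. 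Your remark that \cref{lem:duality} needs only the infimum is the right repair of its ``let $\nu_\star$ attain'' step.

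The step that fails is your plan to prove attainment of the supremum by reinserting the escaped $2t$-th moment as a far atom. An atom of weight $\delta R^{-2t}$ at $R$ also moves $m_0,\dots,m_{2t-1}$. No nonnegative measure has moment vector $(0,\dots,0,\delta)$ with $\delta>0$, so compensating the lower moments means moving mass that may have to leave $B$.

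No argument can rescue this, because the supremum is genuinely not attained. Take $t=1$, $\mu=N(0,1)$ (so $m=(1,0,1)$) and $B=[-\tfrac12,\tfrac12]$.
\begin{itemize}
\item The measures $(1-\epsilon)\delta_0+\tfrac\epsilon2(\delta_{-R}+\delta_R)$ with $R=\epsilon^{-1/2}$ lie in $\mathcal M_2(\mu)$ and give $\nu(B)=1-\epsilon\to1$.
\item But $\nu(B)=1$ would force $\int u^2\,d\nu\le\tfrac14$, so the value $1$ is never reached.
\item The maximizing sequence converges to $\delta_0\notin\mathcal M_2(\mu)$.
\item The unique optimal majorant $p\ge\mathbf 1_B$ of degree $\le2$ is the constant $1$. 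It has no $u^{2}$ term, so it is blind to the escaping mass.
\end{itemize}

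Your expectation for the infimum is right, and the paper's own example in \cref{rem:smoothing} witnesses it. With the same moments and $B=[-3,3]$, the infimum is $\inf\nu(B)=8/9$. Any $\nu$ with $\nu(B^c)>0$ has $9\,\nu(B^c)<\int_{B^c}u^2\,d\nu\le1$. So the value is approached by atoms at $\pm(3+\delta)$ and never attained.

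The third clause should therefore be stated purely in terms of values: the extremal values over $\mathcal M_{2t}(\mu)$ coincide with those over members with at most $2t+2$ atoms. Your Richter step already gives exactly this. No attainment should be claimed in either direction.

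Two smaller repairs.

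First, in the boundary step, ``the zero set of $q$ has at most $\operatorname{rank}H$ points'' is false for an arbitrary kernel vector. For $\nu=\delta_0$ and $t=2$, $H=\mathrm{diag}(1,0,0)$ and $q=u^2-u\in\ker H$ has two zeros. Argue instead:
\begin{itemize}
\item Singular $H$ forces every representing measure to have at most $t$ atoms; otherwise $\int p^2\,d\nu>0$ for every nonzero $p$ of degree $\le t$.
\item Hence every representing measure has exactly $r=\operatorname{rank}H$ atoms.
\item The node polynomial of one representing measure has degree $r$ and lies in $\ker H$, so all representing measures sit on the same $r$ points.
\item The Vandermonde system then fixes the weights.
\end{itemize}

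Second, you pass over the clause ``of which the Gauss measure is one''. Your own interior construction refutes it as a statement about boundary points. $\nu_{\mathrm{GQ}}$ shares $\mu$'s positive-definite order-$t$ Hankel matrix, so it represents an \emph{interior} point of the order-$2t$ cone. It is the unique representation of a boundary point only one order up: its own moment vector through order $2t+2$, whose $(t{+}2)\times(t{+}2)$ Hankel matrix has rank $t+1$. Flag this explicitly, as you did for closedness.
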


\begin{fact}[Infinite--finite range inequality for exponential weights; \citealp{LevinLubinsky01}]
\label{fact:iffr}
For a Freud weight $W=e^{-\abs{u/s}^\alpha}$, $\alpha\ge1$, and every polynomial $p$ of degree $\le n$, the $L^q(W)$-mass of $pW$ outside the MRS interval $[-a_n,a_n]$ is exponentially small:
\[
  \int_{\abs u>a_n}\abs{p(u)W(u)}^q\,du\ \le\ e^{-c n}\int_{\abs u\le a_n}\abs{p(u)W(u)}^q\,du
\]
for an absolute $c=c(\alpha,q)>0$. Equivalently, polynomial mass beyond the MRS number $a_n\asymp s\,n^{1/\alpha}$ is $e^{-\Omega(n)}$.
\end{fact}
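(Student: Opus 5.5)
The plan is the classical Mhaskar--Saff potential-theoretic route from \citet{LevinLubinsky01}: first a pointwise sup-norm bound outside the MRS interval, then a conversion to $L^q$ by a Nikolskii-type inequality. One caveat up front. As literally stated, with the cutoff exactly at $a_n$, the inequality cannot hold with an absolute $e^{-cn}$. The weighted polynomial $pW$ can carry non-negligible mass in an $O(a_n n^{-2/3})$ layer just outside $a_n$; this is the Airy edge layer. So I would prove the version with the outer region $\abs u>(1+\delta)a_n$, equivalently with $a_n$ replaced by $a_{\lceil(1+\delta)n\rceil}\asymp s\,n^{1/\alpha}$, and with $c=c(\alpha,q,\delta)$. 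This is the form the surrounding text actually uses, namely ``polynomial mass beyond $a_n\asymp s\,n^{1/\alpha}$ is $e^{-\Omega(n)}$'', since the two cutoffs are equal up to constants.

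\emph{Step 1 (restricted-range inequality).} Write $Q(u)=\abs{u/s}^\alpha$, which is convex and even for $\alpha\ge1$. Let $\sigma_n$ be the equilibrium measure of the external field $Q/n$; its support is $[-a_n,a_n]$, and its potential satisfies $U^{\sigma_n}+Q/n=F_n$ on the support and $\ge F_n$ off it. For $\deg p\le n$, apply the Bernstein--Walsh / weighted maximum principle to the subharmonic function $\frac1n\log\abs{p}+\ldots$ on $\mathbb C\setminus[-a_n,a_n]$. This gives, for real $\abs x>a_n$,
\[
\abs{p(x)W(x)}\le \norm{pW}_{L^\infty[-a_n,a_n]}\exp\!\big(-n\,\Phi_n(x)\big),\qquad \Phi_n(x)=Q(x)/n-\big(F_n-U^{\sigma_n}(x)\big)\ge0 .
\]
In particular $\norm{pW}_{L^\infty(\R)}=\norm{pW}_{L^\infty[-a_n,a_n]}$.

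\emph{Step 2 (growth of $\Phi_n$).} Rescale $u=a_nv$, so that $\sigma_n$ becomes the fixed Ullman-type density for exponent $\alpha$. Then $\Phi_n(a_nv)=\Phi(v)$ is independent of $n$, convex on $v>1$, and satisfies $\Phi(v)\ge c_\alpha(v-1)^{3/2}$ near $1$ and $\Phi(v)\ge c_\alpha v^\alpha$ for large $v$. Hence $\Phi\ge c(\alpha,\delta)>0$ on $v\ge1+\delta$, and $\Phi$ grows at least linearly beyond that.

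\emph{Step 3 ($L^q$ conversion).} The Nikolskii inequality for Freud weights gives
\[
\norm{pW}_{L^\infty[-a_n,a_n]}\le C\,(n/a_n)^{1/q}\,\norm{pW}_{L^q[-a_n,a_n]}.
\]
Integrating Step 1 over $\abs u>(1+\delta)a_n$ bounds the tail by
\[
\norm{pW}_{L^\infty}^q\, a_n\int_{1+\delta}^\infty e^{-qn\Phi(v)}\,dv\le C\,n\,e^{-c q n}\,\norm{pW}_{L^q[-a_n,a_n]}^q .
\]
The polynomial prefactor $n$ is absorbed into $e^{-c'n}$ for $n\ge n_0$, and small $n$ is handled by adjusting the constant.

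The main obstacle is Step 2: getting a quantitative, $n$-uniform lower bound on $\Phi$ off the support. I would first try to use the explicit scaling invariance of the Freud equilibrium problem, which reduces the question to the single $n$-free function $\Phi$. If that proves too delicate for non-integer $\alpha$, I would instead cite the Levin--Lubinsky infinite--finite range theorem directly.
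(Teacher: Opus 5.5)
Your caveat is correct, and it is the substantive point. The paper gives no proof of \cref{fact:iffr} beyond the citation, and as literally written (cutoff exactly at $a_n$, bare factor $e^{-cn}$) the statement is false.

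\begin{itemize}
\item \emph{Large $n$.} This is your Airy layer. The orthonormal polynomial $p_n$ for $W^2$ has $\abs{p_nW}\asymp a_n^{-1/2}n^{1/6}$ on an interval of length $\asymp a_n n^{-2/3}$ just beyond $a_n$. So an $\Omega(n^{-1/3})$ fraction of $\int p_n^2W^2$ lies outside $[-a_n,a_n]$.
\item \emph{$n=1$.} Take $\alpha=q=s=1$, so $a_1=\pi/2$, and $p(u)=u$. Then $\int_{\abs u>a_1}\abs u\,e^{-\abs u}\,du=2(1+\tfrac\pi2)e^{-\pi/2}\approx1.07$, against $\approx0.93$ inside.
\end{itemize}

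The dilated-cutoff version you set out to prove is the form the cited infinite--finite range inequality supports. It is also all the paper needs, since it uses the fact only qualitatively, around \cref{lem:wext}.

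Your route is weighted Bernstein--Walsh against the equilibrium potential, rescaling to an $n$-free function, then a Nikolskii inequality. This is the standard Mhaskar--Saff potential-theoretic argument for such inequalities, and it goes through.

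\begin{itemize}
\item \emph{Step~2 is easier than you fear.} By exact homogeneity the rescaled field is $\beta\abs v^\alpha$ with $\beta=\beta(\alpha)$. With $\sigma$ the rescaled equilibrium measure, on $(1,\infty)$ one has $\Phi''(v)=\beta\alpha(\alpha-1)v^{\alpha-2}+\int(v-t)^{-2}\,d\sigma(t)>0$. Strict convexity, together with $\Phi(1)=0$ and $\Phi\ge0$, gives $\Phi(1+\delta)>0$ and positive slope beyond, for every real $\alpha\ge1$. The $(v-1)^{3/2}$ edge rate is never needed.
\item \emph{Step~3} needs only a Nikolskii factor polynomial in $n$, since it is absorbed by the exponential. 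This is worth saying at $\alpha=1$, where $e^{-\abs u}$ is not smooth at the origin.
\end{itemize}

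The one thing to repair is the small-$n$ clause. It cannot be handled by shrinking $c$ alone, because for small $n$ the ratio can exceed $1$ even after dilation. In the example above, the cutoff $1.1\,a_1$ still gives outside/inside $\approx1.04$. State the conclusion as $C(\alpha,q,\delta)\,e^{-cn}$, or restrict to $n\ge n_0(\alpha,q,\delta)$. Either form suffices for the paper's asymptotic use.
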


\section{From sum-of-squares certificates to a semidefinite program}
\label{app:sos-sdp}

Both the outlier-removal step of \cref{alg:main} and the minorant of \cref{def:cert} ask for a \emph{degree-$2t$ sum-of-squares (SoS) certificate}. For completeness we recall why each is a semidefinite program (SDP) of size $d^{O(t)}$, hence solvable in polynomial time for fixed $t$. The correspondence is standard \citep{Parrilo03,Lasserre01}; we specialize it to the two programs we use.

\paragraph{SoS as a positive-semidefinite Gram matrix.}
Let $v_t(w)\in\R^{N}$, with $N=\binom{d+t-1}{t}=d^{O(t)}$, be the vector of all monomials in $w=(w_1,\dots,w_d)$ of degree exactly $t$. A real form $P(w)$ homogeneous of degree $2t$ is a \emph{sum of squares}, $P=\sum_\ell s_\ell^2$ with each $s_\ell$ a form of degree $t$, if and only if
\begin{equation}\label{eq:sos-gram}
  P(w)\;=\;v_t(w)^\top M\,v_t(w)\qquad\text{for some symmetric } M\succeq0 .
\end{equation}
(``Only if'': writing $s_\ell(w)=\langle c_\ell,v_t(w)\rangle$ gives $M=\sum_\ell c_\ell c_\ell^\top \succeq0$. ``If'': a Cholesky factorization $M=\sum_\ell c_\ell c_\ell^\top$ recovers the squares.) The matrix $v_t(w)^\top M v_t(w)$ is a form whose coefficient of each monomial $w^\alpha$ ($\abs\alpha=2t$) is a fixed \emph{linear} functional $\langle A_\alpha,M\rangle$ of $M$, where the $A_\alpha$ are the constant symmetric matrices that collect the index pairs $(\beta,\gamma)$ with $\beta+\gamma=\alpha$. Matching these to the coefficients $P_\alpha$ of $P$, the statement ``$P$ is SoS'' becomes the SDP feasibility problem
\begin{equation}\label{eq:sos-sdp}
  \exists\,M\succeq0:\qquad \langle A_\alpha,M\rangle=P_\alpha\quad\text{for all }\abs\alpha=2t .
\end{equation}
An SoS certificate is sufficient for $P(w)\ge0$ on all of $\R^d$, and it is what we mean by a \emph{degree-$2t$ certificate}; it is also necessary when $d=1$ or $2t=2$, while for $d\ge2$ and $2t\ge4$ it is a relaxation, and the relaxation \emph{degree} $2t$ is precisely the resource this paper studies (\cref{thm:tradeoff,thm:degbarrier}). Solving \eqref{eq:sos-sdp} (with any linear objective, to additive accuracy $\delta$) takes $\poly(N,\log(1/\delta))=\poly(d^{O(t)})$ time by interior-point methods.

\paragraph{The outlier-removal program of \cref{alg:main}.}
Step~2 seeks weights $q\in[0,1]^n$ with $\sum_i q_i\ge(1-\xi)n$ certifying $\frac1n\sum_i q_i (w\cdot x_i)^{2t}\le(Ct)^{2t}\bar\sigma^{2t}$ for all $w$. Homogenizing by $\norm{w}^{2t}$, this asks the degree-$2t$ form
\begin{equation}\label{eq:Pq}
  P_q(w)\ \defeq\ (Ct)^{2t}\bar\sigma^{2t}\,\norm{w}^{2t}\;-\;\tfrac1n\sum_{i=1}^{n} q_i\,(w\cdot x_i)^{2t}
\end{equation}
to be nonnegative for all $w$, which we certify by requiring $P_q$ to be SoS. The coefficients of $P_q$ are \emph{linear} in $q$: the form $(w\cdot x_i)^{2t}$ has a fixed coefficient vector $b_i$ (the symmetrized $2t$-fold tensor power of $x_i$), so $[P_q]_\alpha=(Ct)^{2t} \bar\sigma^{2t}[\norm{w}^{2t}]_\alpha-\frac1n\sum_i q_i\,[b_i]_\alpha$. Combining \eqref{eq:sos-sdp} with the (SDP-representable) box and budget constraints gives a single semidefinite program in the variables $(q,M)$:
\begin{equation}\label{eq:alg-sdp}
  \text{find}\quad q\in[0,1]^n,\ M\succeq0:\qquad
  \textstyle\sum_i q_i\ge(1-\xi)n,\quad
  \langle A_\alpha,M\rangle=[P_q]_\alpha\ \ (\abs\alpha=2t).
\end{equation}
Here $M$ is $N\times N$ with $N=d^{O(t)}$ and there are $n$ scalar variables $q_i$, so \eqref{eq:alg-sdp} is solvable in $\poly(n,d^{O(t)})$ time. At $t=1$, $v_1(w)=(w_1,\dots,w_d)$, $M$ is $d\times d$, and \eqref{eq:Pq} is the quadratic form $w^\top\!\big(C^2\bar\sigma^2 I-\frac1n\sum_i q_i x_ix_i^\top\big)w$; SoS now coincides with positive-semidefiniteness, so \eqref{eq:alg-sdp} reduces to the reweighted-covariance bound $\frac1n\sum_i q_ix_ix_i^\top\preceq C^2\bar\sigma^2 I$, the soft-outlier-removal program of \citet{Shen25}. This is the precise sense in which their degree-$2$ program is the $t=1$ instance of ours; raising $t$ enlarges the Gram matrix from $d\times d$ to $\binom{d+t-1}{t}\times\binom{d+t-1}{t}$ and nothing else. The robust-statistics use of exactly this SoS-reweighting SDP is due to \citet{HopkinsLi18,KothariSteinhardtSteurer18}.

\paragraph{The dual side: pseudo-expectations and the minorant of \cref{def:cert}.}
The lower-bound programs are the SDP duals. A degree-$2t$ pseudo-expectation $\Esudo$ (\cref{def:cert}) on the one-dimensional projection $U$ is determined by its pseudo-moments $\Esudo[u^k]$, $0\le k\le 2t$; the axioms $\Esudo[1]=1$, $\Esudo[q^2]\ge0$ for $\deg q\le t$, and $\Esudo[u^k]=m_k$ say exactly that the Hankel \emph{moment matrix} $\mathcal M=\big(\Esudo[u^{i+j}] \big)_{0\le i,j\le t}\succeq0$, that its top-left entry is $1$, and that its first $2t$ anti-diagonals equal $\mu$'s moments. Searching for a minorant $p\le\mathbf 1_B$ that maximizes $\E_\mu[p]=\sum_{k} p_k m_k$ is the SDP dual to minimizing $\nu(B)$ over nonnegative measures with these moments, the moment--SoS hierarchy of Lasserre \citep{Lasserre01,LasserrePauwelsCDK} for the truncated moment problem (\cref{app:external}). Because $U$ is one-dimensional, $\mathcal M$ is a $(t{+}1)\times(t{+}1)$ Hankel matrix and the dual SDP is small; its extreme points are the Gauss-quadrature measures (flat extensions of $\mathcal M$) that drive the barrier of \cref{thm:tradeoff}.

\section{Extension to nasty noise}
\label{app:nasty}

The whole picture (both barriers and the degree-$2t$ algorithm) transfers from malicious noise to the stronger \emph{nasty-noise} model of \citet{BshoutyEironKushilevitz02}, at the cost of a constant in the tolerable rate. We record the statement and the (short) modifications here.

\paragraph{The model.}
A nasty adversary draws the $n$ clean examples i.i.d.\ from $\Dcal$, inspects them (knowing the learner, $\Dcal$, and $w^\star$), then \emph{deletes} up to $\eta n$ of them and \emph{adds} up to $\eta n$ arbitrary $(\text{instance},\text{label})$ pairs, keeping the sample size $n$. The learner sees $\Sclean\cup\Sdirty$, where now $\Sclean$ is the \emph{retained} clean subset ($\abs{\Sclean}\ge(1-\eta)n$) and $\Sdirty$ the $\le\eta n$ added points. This subsumes malicious noise (the adversary may add without deleting) and is the sample analogue of strong contamination in robust statistics \citep{DiakonikolasKaneStewart18, DKKLMS16}; the new power is the adversary's ability to \emph{remove} clean inliers it finds inconvenient.

\paragraph{Both barriers transfer a fortiori.}
\cref{thm:tradeoff} is a lower bound on the certificate method for certifying band mass of the \emph{clean} marginal; it is a statement about $\mu$ and the SoS degree, and is unaffected by how the sample is corrupted. \cref{thm:degbarrier} is realized by a malicious instance (it adds cloaked points), which a nasty adversary can reproduce verbatim; since nasty noise subsumes malicious, the degree-$2$ barrier holds here as well. So the only thing to check is that the \emph{upper} bound survives deletion.

\begin{proposition}[Degree-$2t$ algorithm under nasty noise]
\label{prop:nasty}
Under the conditions of \cref{thm:algorithm}, run \cref{alg:main} on the observed nasty sample with degree $t\ge1$ and budget $\xi=\Theta(\eta)$. There is a constant $c$ (one may take $c=5$) such that for every nasty-noise rate
\[
  \eta\ \le\ \eta_0^{\mathrm{nasty}}(t)\ =\ \Omega\Big(\min\Big\{\tfrac{\rho}{c},\,
    \big(\gamma\rho/((Ct)\bar\sigma)\big)^{\frac{2t}{2t-1}}\Big\}\Big),
\]
the output satisfies $\err_{\Dx}(\hat w)\le\eps$ in time $n^{O(t)}$ with $n=d^{O(t)}\poly(1/\eps)$. The frontier is the same $\eta^{1-1/2t}$ as in \cref{thm:algorithm}; only the pancake cap loosens from $\rho/4$ to $\rho/c$.
\end{proposition}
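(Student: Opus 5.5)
The plan is to rerun the proof of \cref{thm:algorithm} on the observed nasty sample and track the two places where deletion enters: feasibility of the clean-indicator weights, and the pancake lower bound on the clean hinge gradient. The dirty side needs no change. The H\"older bound \eqref{eq:alg-lsn} uses only three facts: $\abs{\Sdirty}\le\eta n$, $\Sdirty\subseteq S$, and the certified degree-$2t$ moment bound on $q$. All three hold verbatim when $\Sdirty$ is the set of added points, so any feasible $q$ still satisfies $\linsumnorm(q\circ\Sdirty)\le n(Ct)\bar\sigma\,\eta^{1-1/2t}$.

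\emph{Feasibility.} Take $q_i=1$ on the retained set $\Sclean$ and $q_i=0$ on $\Sdirty$. Then $\sum_i q_i\ge(1-\eta)n$, so the budget $\xi=\Theta(\eta)$ suffices. For the moment constraint, use nonnegativity of each summand $(w\cdot x_i)^{2t}$. This gives the SoS-certifiable domination
\[
\tfrac1n\sum_{i\in\Sclean}(w\cdot x_i)^{2t}\le\tfrac1n\sum_{i\in S_0}(w\cdot x_i)^{2t},
\]
where $S_0$ is the full original clean draw. The difference is a sum of even powers of linear forms, hence SoS. Applying \cref{lem:hyper} to the empirical measure of $S_0$, with the same $d^{O(t)}$ concentration used in \cref{thm:algorithm}, certifies the right side. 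Deletion therefore only helps feasibility.

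\emph{Pancake density.} This is the main obstacle, and it is where the constant changes from $4$ to $c$. The adversary may delete up to $\eta n$ clean points concentrated inside the dense band along the relevant direction. The reweighting step may remove a further $\xi n$. So the retained clean mass in any $\rho$-dense pancake is at least $(\rho-\eta-\xi)n$, after uniform convergence of band masses over the VC class of slabs. The robustness condition of \citet{Shen25} then reads
\[
\tfrac\gamma4(\rho-\eta-2\xi)>\linsumnorm(q\circ\Sdirty)/n .
\]
I would invoke \citet{Shen25}'s gradient-domination argument as a black box, exactly as in \cref{thm:algorithm}. The point to check is that it only requires a lower bound on retained clean band mass and does not use that the clean sample is i.i.d.\ after corruption. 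If it does use i.i.d.-ness, I would instead apply it to $S_0$ and bound the deleted points' gradient contribution by $\eta n$ times the hinge Lipschitz bound on the pruned radius. Plugging in the dirty-side bound and choosing $\xi=2\eta$, the first bottleneck becomes $\rho-5\eta>\rho/2$-type slack, giving $\eta\lesssim\rho/c$ with $c=5$ after absorbing constants. The second bottleneck is unchanged and gives $\eta\le(\gamma\rho/((Ct)\bar\sigma))^{2t/(2t-1)}$ up to constants. The running time and sample size are inherited unchanged.
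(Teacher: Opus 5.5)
Your proposal is correct and follows the paper's proof essentially step for step. The H\"older bound on the added set carries over verbatim. Feasibility of the clean-indicator weights follows by dominating the retained clean sum by the full original draw (dropping $2t$-th powers is SoS) and applying \cref{lem:hyper}. The only genuine change is the pancake loss $\rho\mapsto\rho-\eta$, which with $\xi=2\eta$ in $\tfrac\gamma4(\rho-\eta-2\xi)>\linsumnorm(q\circ\Sdirty)/n$ gives $c=5$.

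The paper does not address your concern about whether the \citet{Shen25} argument needs the retained clean sample to be i.i.d. If you do need the fallback, do not bound the deleted points by $\eta n\,R_{\mathrm{cl}}$: that makes the tolerable rate degrade like $\sqrt d$. Instead, run the same H\"older argument on the deleted subset of $S_0$, which inherits the certified moment bound and gives $n(Ct)\bar\sigma\,\eta^{1-1/2t}$.
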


\begin{proof}
Three of the four ingredients of \cref{thm:algorithm} are unchanged; only the clean signal is reduced by deletion.

\emph{(i) The reweighting stays feasible: the certificate survives deletion.} Write $\widetilde S$ for the original $n$ clean i.i.d.\ points (before the adversary deletes), so $\Sclean\subseteq\widetilde S$. The clean-indicator weight $q^\star$ ($q^\star_i=1$ on $\Sclean$, $0$ on $\Sdirty$, hence $\xi=\eta$) admits a degree-$2t$ certificate of $\tfrac1n\sum_i q^\star_i(w\cdot x_i)^{2t}\le(Ct)^{2t}\bar\sigma^{2t}$:
\[
  \tfrac1n\textstyle\sum_{i\in\Sclean}(w\cdot x_i)^{2t}
  \ \le\ \tfrac1n\sum_{i\in\widetilde S}(w\cdot x_i)^{2t}
  \ \le\ (Ct)^{2t}\bar\sigma^{2t},
\]
where the first inequality holds because the dropped terms $\sum_{i\in\widetilde S\setminus\Sclean}(w\cdot x_i)^{2t}$ are a sum of $2t$-th powers, hence SoS, so the inequality has a degree-$2t$ SoS proof; and the second is the certificate of \cref{lem:hyper} for the clean empirical measure (valid w.h.p.\ once $n\ge d^{O(t)}$). Composing the two SoS proofs certifies $q^\star$. Deletion therefore cannot break feasibility: removing clean points only \emph{lowers} the reweighted moment, which is the direction the certificate needs.

\emph{(ii) The $\linsumnorm$ bound is unchanged.} The dirty contribution comes from the added set $\Sdirty$ ($\abs{\Sdirty}\le\eta n$); deleted points are simply absent and contribute nothing. The H\"older argument behind \eqref{eq:alg-lsn}, applied to $\Sdirty$, gives, for every feasible $q$,
\[
  \linsumnorm(q\circ\Sdirty)\ \le\ n\,(Ct)\,\bar\sigma\,\eta^{1-\frac1{2t}},
\]
exactly as before.

\emph{(iii) The pancake loses an $\eta$-fraction.} This is the one genuine change. A $\rho$-dense pancake carries $\ge\rho n$ clean projections; deleting at most $\eta n$ of them leaves $\ge(\rho-\eta)n$, so the retained clean band mass is $\rho'\ge\rho-\eta$. Plugging $\rho'$ into the gradient-domination condition of \citet{Shen25} along the empirical optimum, the clean signal beats the dirty term once
\[
  \tfrac\gamma4\,(\rho-\eta-2\xi)\ >\ (Ct)\,\bar\sigma\,\eta^{1-\frac1{2t}}.
\]
With $\xi=\kappa\eta$ ($\kappa\le2$) the left side is positive for $\eta<\rho/(2+2\kappa)\ge\rho/c$, and balancing the two bottlenecks (the pancake budget and the outlier term) solves to $\eta_0^{\mathrm{nasty}}(t)$ above, with the same exponent $\tfrac{2t}{2t-1}$ and the same base $(Ct)$.
\end{proof}

\begin{remark}[What is new here, and what is borrowed]
\label{rem:nasty-status}
Only ingredient (iii) and the deletion-resilient half of (i) are new relative to \cref{thm:algorithm}; (i)'s feasibility composition is the elementary observation that dropping $2t$-th powers is an SoS operation, and (ii) is verbatim. The constant $c$ is not optimized (any $c>4$ works). This extension is stated for completeness; it has \emph{not} been put through the same adversarial checks as the main theorems, and the deletion-resilient feasibility step~(i) is the one place a careful reader should verify the SoS composition.
\end{remark}

\section{Additional related work}
\label{app:related-extended}

The barriers in this paper sit inside a large literature on learning halfspaces under noise, much of which predates and motivates the reweighted-hinge line we analyze. We collect here, with thanks to these lines of work, the threads most relevant to our setting; the survey is necessarily partial, and we have surely omitted important contributions. This appendix expands the brief positioning of \cref{sec:related}.

\paragraph{Halfspaces under malicious and adversarial noise.}
The malicious-noise model is due to \citet{KearnsLi93}. Efficient distribution-specific learners begin with \citet{KlivansLongServedio09}, whose smoothing/averaging analysis tolerates a noise rate scaling with the margin under isotropic log-concave marginals, and with the agnostic halfspace algorithms of \citet{KalaiKlivansMansourServedio08} and \citet{KaneKlivansMeka13} via polynomial regression and moment matching. The localization technique of \citet{AwasthiBalcanLong17} raised the tolerable malicious rate to a constant. Most directly, \citet{Shen21sample, Shen21perceptron, Shen23linear, Shen25} develop the reweighted-hinge and localized-perceptron approaches we study, with attribute-efficient, sparse, and multiclass extensions by \citet{ShenZhang21, ZengShen25, AdhikariZeng26} (the last by cluster-based pruning rather than reweighting). Our degree barriers are meant to \emph{explain}, rather than improve on, the constants these works report.

\paragraph{Massart, random-classification, and bounded noise.}
A parallel and influential line studies semi-random label noise. \citet{DiakonikolasGouleakisTzamos19} gave the first distribution-independent halfspace learner under Massart noise, extended to general Massart noise under Gaussian marginals by \citet{DiakonikolasMassartGaussian22}; closest in spirit to our margin-degree tradeoff, \citet{DiakonikolasMarginRCN23} study the information-computation tradeoff for \emph{margin} halfspaces under random classification noise. The accompanying hardness theory, statistical-query and cryptographic, is developed by \citet{DiakonikolasMassartSQ22, DiakonikolasMassartCrypto22, DiakonikolasKaneZarifis20}. On the active side, efficient learning under bounded, Massart, and Tsybakov noise is due to \citet{AwasthiBalcanHaghtalabUrner15, AwasthiBalcanHaghtalabZhang16, ZhangLi21}.

\paragraph{Active and label-efficient halfspace learning.}
The margin-based active-learning framework of \citet{BalcanBroderZhang07} and its log-concave analysis by \citet{BalcanLong13}, together with the perceptron-based algorithms of \citet{YanZhang17} and the sparse, noise-tolerant active learners of \citet{Zhang18, ZhangShenAwasthi20}, share our reliance on the geometry of log-concave marginals and on label-efficient outlier handling, though their resource of interest is label complexity rather than certificate degree.

\paragraph{Sum-of-squares, moment methods, and robust statistics.}
Our algorithm imports the proofs-to-algorithms paradigm of \citet{HopkinsLi18, KothariSteinhardtSteurer18}, whose moment-certification machinery underlies our degree-$2t$ outlier removal. Closely related are the outlier-robust and list-decodable regression algorithms of \citet{KlivansKothariMeka18, KarmalkarKlivansKothari19} and the sum-of-squares robust-estimation results of \citet{DiakonikolasSparseSoS22, BakshiDiakonikolasJKKV22}; attribute-efficient and list-decodable variants for sparse settings appear in \citet{ZengShen22listdec, ZengShen23PTF}. These build on the foundational program of algorithmic high-dimensional robust statistics initiated by \citet{DKKLMS16, DKKLSSsever19} and surveyed by \citet{DiakonikolasKaneSurvey19}. Our contribution is complementary: rather than a new robust algorithm, we isolate the certificate \emph{degree} as the resource these methods spend, and show that spending it is, for the moment/SoS-certificate family, sometimes unavoidable.

\section{A closer look at \texorpdfstring{\citet{Shen25}}{Shen (2025)}}
\label{app:shen}

We collect three closer readings of \citet{Shen25}, in each case to explain a feature rather than to fault it: why its required margin must track the target accuracy (\cref{app:coupling}), the fragility of its breakdown constant (\cref{app:shen-constant}), and a reconstruction of its reweighting program from the moment cone (\cref{sec:reconstruction}).

\subsection{The margin--accuracy coupling}
\label{app:coupling}

On its own terms \citet{Shen25} is a clean and substantial result: it is the first to learn $\gamma$-margin halfspaces under a \emph{constant} malicious-noise rate for general log-concave-mixture marginals, through a single reweighted-hinge program. Our purpose here is not to dispute that achievement but to look closely at one feature of its guarantee, the margin requirement $\gamma=\Omega(\log(1/\eps)/\sqrt d)$, and to argue that its dependence on $\eps$ is a genuine conceptual wrinkle rather than a cosmetic one, before locating exactly where it comes from and how the paper proposes to pay it.

\paragraph{The coupling runs backwards.}
A margin is a property of the \emph{distribution}: it is fixed the moment the problem is posed, and the learner has no lever to enlarge it. The accuracy $\eps$ is a property of the learner's \emph{intent}: it is chosen freely, and a learner may legitimately ask for $\eps$ as small as it likes. A guarantee whose hypothesis $\gamma\ge\Omega(\log(1/\eps)/\sqrt d)$ ties the two reads from goal to assumption rather than from assumption to conclusion: to \emph{permit} a smaller target error it requires the \emph{data} to have been better separated all along. The dependency points the wrong way. The learner cannot meet the hypothesis by adjusting the one quantity it controls, $\eps$, because lowering $\eps$ only tightens the hypothesis it was meant to satisfy. This is the chicken-and-egg: the accuracy one is allowed to demand is governed by a margin one cannot choose.

\paragraph{An error floor on every fixed instance.}
The wrinkle is quantitative, not merely rhetorical. Suppose a guarantee certifies inlier error $\le\eps$ whenever $\gamma\ge f(\eps)$ for an increasing $f$; here $f(\eps)=c\,\log(1/\eps)/\sqrt d$. On any \emph{fixed} instance the margin takes some value $\gamma_0$, so the guarantee is available only for those $\eps$ with $f(\eps)\le\gamma_0$, that is, for
\[
  \eps\;\ge\;\eps_\star(\gamma_0)\;=\;f^{-1}(\gamma_0)\;=\;\exp\big(-\Theta(\gamma_0\sqrt d)\big).
\]
The quantity $\eps_\star(\gamma_0)$ is an \emph{error floor} fixed by the margin: below it the hypothesis fails to hold and the theorem is silent, so on a fixed-margin distribution one cannot ask this analysis for an arbitrarily accurate classifier. The floor is not an artifact of the write-up that a sharper analysis would dissolve. By the margin--degree barrier (\cref{thm:tradeoff}) the $\log(1/\eps)$ is \emph{conserved} across every one-shot certificate of fixed degree: a degree-$2$ program cannot lower the floor at all without enlarging the margin, and a higher-degree program lowers it only by spending degree. The coupling is therefore a property of the certificate method, not of one paper's bookkeeping.

\paragraph{An example: the natural-margin mixture.}
Take the natural margin $\gamma_0=\Theta(1/\sqrt d)$, the separation one assumes when two clusters sit a unit-scale gap apart in their worst one-dimensional projection, and the regime in which margin-based halfspace learning is usually posed. Then $\eps_\star=\exp(-\Theta(1))$, a \emph{constant}: applied to a natural-margin instance, the guarantee certifies only a constant inlier error and cannot certify $\eps=o(1)$ at all. The margin one must assume in order to certify a target $\eps$ grows steadily as the target shrinks:
\begin{center}
\begin{tabular}{ll}
\toprule
target accuracy $\eps$ & margin the guarantee requires\\
\midrule
$\Theta(1)$ & $\Theta(1/\sqrt d)$\quad (the natural margin)\\
$1/\poly(d)$ & $\Theta(\log d/\sqrt d)$\\
$\exp(-d^{a})$, $\ 0<a<\tfrac12$ & $\Theta(d^{\,a-1/2})$\\
$\exp(-\Theta(\sqrt d))$ & $\Theta(1)$\quad (a constant margin)\\
\bottomrule
\end{tabular}
\end{center}
By the last row the hypothesis has become a constant-margin condition, the classes separated by a constant fraction of their spread, which is a far stronger demand than the $1/\sqrt d$ one began with and one that most high-dimensional instances do not meet. A learner who wants exponentially small error must, under this guarantee, assume exponentially better-separated data, a bargain it is in no position to strike.

\paragraph{The resolution: pay in degree, not in margin.}
The paper's first result says this is the whole of the trade, and that the learner has a second currency. The conserved $\log(1/\eps)$ may be carried by the margin \emph{or} by the Sum-of-Squares degree, but inside the one-shot certificate model it is never waived (\cref{thm:tradeoff}). Choosing to pay it in degree, \cref{thm:algorithm} holds the margin at the natural $\Theta(1/\sqrt d)$ and certifies inlier error $\eps$ with a degree-$2t=\Theta(\log(1/\eps))$ program, in time $n^{\Theta(\log(1/\eps))}$. Accuracy is then bought with computation, a resource the learner controls, instead of with a margin it does not. The chicken-and-egg is not a paradox but a question of which resource pays: degree, which the learner can spend, or margin, which the data fixes in advance, and the $\eps$-dependent margin of \citet{Shen25} is the consequence of having only the degree-$2$, i.e.\ cheapest, certificate at hand.

\subsection{The breakdown constant of the degree-$2$ analysis}
\label{app:shen-constant}

Our barriers concern the \emph{exponent} of the breakdown rate (the $\eta^{1/2}$ that degree~$2$ forces, \cref{thm:degbarrier}), not the particular constant $\eta_0$ a given analysis reports. The distinction matters because the constant is fragile in a way the exponent is not, as the published degree-$2$ analysis already shows. \citet{Shen25} proves the main guarantee (his Theorem~2) for $\eta_0\le 2^{-32}$. The sampling step underlying the pruning subroutine, however (his Lemma~21, invoked through Proposition~22), is stated and proved only for $\eta_0\in[\tfrac18,\tfrac14]$, and its proof uses the \emph{lower} bound $\eta_0\ge\tfrac18$: the dirty-count Chernoff estimate $\Pr[\,\abs{\bar{S}_D}\ge 2\eta_0\abs{\bar{S}}\,]\le\exp(-\eta_0\abs{\bar{S}}/3)$ is driven below $\delta$ by the stated sample size $\abs{\bar{S}}\ge 32\log(1/\delta)$ only because $32\ge 3/\eta_0$ when $\eta_0\ge\tfrac18$. At the rate the main theorem actually uses, $\eta_0\le 2^{-32}\ll\tfrac18$, the two regimes are disjoint and the fixed sample size no longer certifies the step.

The gap is benign, and we record it not to dispute the theorem, which we expect holds, but because it is the symptom our framework predicts. When degree~$2$ squeezes the tolerable rate to a tiny constant, the modular constant-chasing that carries it through the analysis becomes correspondingly delicate: building blocks calibrated to a constant-order $\eta_0$ coexist with a final bound that drives $\eta_0$ far below them. Pinning the phenomenon to the certificate \emph{degree} rather than to any one constant (\cref{thm:degbarrier,thm:tradeoff}) is what frees our account from this calibration: the exponent $1-\tfrac1{2t}$ is a property of the moment certificate, whatever route the constants take.

\paragraph{The repair.} The lower bound $\eta_0\ge\tfrac18$ enters at exactly one point: the substitution $32\ge 3/\eta_0$ in the proof of \citeauthor{Shen25}'s Lemma~21. Replacing the hypothesis $\eta_0\in[\tfrac18,\tfrac14]$ by $\eta_0\le\tfrac14$, and the sample-size requirement $\abs{\bar{S}}\ge 32\log(1/\delta)$ by
\[
  \abs{\bar{S}}\;\ge\;\frac{3}{\eta_0}\,\log\frac1\delta,
\]
restores the Chernoff step $\exp(-\eta_0\abs{\bar{S}}/3)\le\delta$ verbatim, while $\eta_0\le\tfrac14$ still gives $\abs{\bar{S}_C}\ge(1-2\eta_0)\abs{\bar{S}}\ge\tfrac12\abs{\bar{S}}$. The factor $\eta_0^{-1}$ propagates through Proposition~22 into the sample complexity; because $\eta_0=2^{-32}$ is a fixed constant, it is absorbed into the absolute constant, so the complexity keeps its order in $d$, $1/\eps$, $1/\delta$, and $k$ and only the hidden constant grows (by $\approx 2^{32}$). No other step invokes $\eta_0\ge\tfrac18$, so \citeauthor{Shen25}'s Theorem~2 holds as stated at $\eta_0\le 2^{-32}$, with the sample-size constant now made explicit.

\subsection{A natural reconstruction of the reweighting program}
\label{sec:reconstruction}

The degree-$2$ program of \citet{Shen25} reaches its guarantee through a sequence of inequalities, each tight enough to carry the argument but none visibly forced upon us: weights $q\in[0,1]^n$ are introduced to suppress outliers, a quantity $\linsumnorm(q\circ\Sdirty)$ is bounded, an empirical rate $\xi=2\eta_0$ is substituted, and a chain of absolute constants is chased down to $\eta_0\le 2^{-32}$. The result is correct, but the route is the hammer and chisel: the shell is opened by force. We record here the same content read the other way. In Grothendieck's image of the \emph{rising sea} one does not crack the problem; one chooses the ambient object so large and so well-fitted that the result is already dissolved on arrival. That object is the one this paper is built on: the degree-$2t$ moment cone of the clean marginal and its Christoffel function (\cref{sec:perspective}). Three changes of \emph{definition}, not of proof, turn each of \citeauthor{Shen25}'s steps into a reading of it.

\paragraph{The weights solve a moment-certification program, not a heuristic downweighting.}
\citeauthor{Shen25} presents $q\in[0,1]^n$ as a soft indicator that suppresses suspected outliers, to be justified after the fact. Read it instead as the variable of a certification program tied to the clean marginal's moment cone (\cref{sec:perspective-moment}): $q$ ranges over the reweightings for which that cone certifies a bound on the reweighted directional moment $\tfrac1n\sum_i q_i(w\cdot x_i)^{2t}$, uniformly in $w$. The program is then not ``remove the outliers'' but ``keep the largest subsample whose reweighted moments the clean cone can vouch for,'' and nothing heuristic is chosen: the feasible $q$ form a convex, semidefinite-representable set, its non-emptiness is certifiable hypercontractivity (\cref{lem:hyper}), and the outlier-removal certificate is that set's separating functional (\cref{def:cert}).

\paragraph{$\linsumnorm$ is a support radius, and $\eta^{1-1/2t}$ is H\"older, not a constant.}
The controlling quantity enters as an opaque expression, $\linsumnorm(q\circ\Sdirty)=\sup_{\norm w\le1}\sum_{i\in\Sdirty}q_i\abs{w\cdot x_i}$. It is, in fact, the Euclidean radius of the zonotope generated by the reweighted dirty points: using $\abs{w\cdot x_i}=\max_{\sigma_i=\pm1}\sigma_i(w\cdot x_i)$,
\[
  \linsumnorm(q\circ\Sdirty)
  =\sup_{\norm w\le1}\ \max_{\sigma\in\{\pm1\}^{\Sdirty}}\Big\langle w,\ \textstyle\sum_{i\in\Sdirty}q_i\sigma_i x_i\Big\rangle
  =\max_{z\in Z}\norm z,\qquad
  Z=\Big\{\textstyle\sum_{i\in\Sdirty}q_i\sigma_i x_i:\ \sigma\in\{\pm1\}^{\Sdirty}\Big\}.
\]
The only distributional input is then a bound on one moment of $Z$, and the degree appears as an interpolation exponent: H\"older with the conjugate pair $(\tfrac{2t}{2t-1},2t)$ gives
\[
  \sum_{\Sdirty}q_i\abs{w\cdot x_i}\le\Big(\sum_{\Sdirty}q_i\Big)^{1-\frac1{2t}}\Big(\sum_{\Sdirty}q_i\abs{w\cdot x_i}^{2t}\Big)^{\frac1{2t}},
\]
so the mass $\sum q_i\le\eta n$ and the certified $2t$-th moment combine into the rate $\eta^{1-1/2t}$ (\cref{thm:algorithm}). The exponent $1-\tfrac1{2t}$ interpolates between the $0$-th and the $2t$-th moment of the reweighted dirty sum (its mass and its certified $2t$-th moment), so it is fixed by the degree, and \citeauthor{Shen25}'s $\eta^{1/2}$ is its $t=1$ corner. No constant enters the exponent.

\paragraph{The breakdown threshold is the resolution principle; the constants are slack.}
\citeauthor{Shen25}'s robustness step asks $\tfrac\gamma4(\rho-2\xi)>\linsumnorm(q\circ\Sdirty)/n$ with $\xi=2\eta_0$, and the factors $2,4$ and the bound $\eta_0\le2^{-32}$ are read off from Chernoff and union bounds. Separate the structure from the slack. The left side is the clean hinge signal the dense pancake guarantees, $\rho$ being a \emph{lower} bound on the boundary Christoffel mass (\cref{fact:freud,fact:christoffel-mass}); the right side is the corruption a degree-$2t$ certificate cannot account for, which by the resolution principle is governed by $\lambda_{t+1}$ (\cref{thm:tradeoff}). The success/failure threshold is thus set by the Christoffel function, and the numbers $2,4,2^{-32}$ are the finite-sample slack between this population balance and its empirical form (the doubling $\xi=2\eta_0$ is the Chernoff step of \cref{app:shen-constant}). They fix the absolute constant alone; they do not enter the rate $\eta^{1-1/2t}$ or the margin $\log(1/\eps)$, both of which are degree phenomena (\cref{thm:algorithm,thm:tradeoff}).

\paragraph{The nut opens.}
Under these definitions the degree-$2t$ algorithm is no longer a construction audited inequality by inequality. Feasibility is membership in the moment cone; the tolerated rate is the cone's H\"older interpolation exponent; the breakdown threshold is the resolution principle. The two features \citeauthor{Shen25} flags as needing explanation, the small breakdown constant and the $\log(1/\eps)$ margin, are then the two corners of one statement about $\lambda_{t+1}$: the constant is the $t=1$, exponent-$\tfrac12$ corner (\cref{fact:gap}), and the margin is the degree-conserved price of \cref{thm:tradeoff}. Nothing is cracked; the same theorem, set in the moment cone, opens on its own.

\end{document}